\newcites{sup}{Supplementary References}
\renewcommand\setY{\mathcal{Y}}
\newcommand\ssm{SSM\xspace}
\def \E{{\mathbb E}} %
\def \V { \mathbb{V} } %
\def \F {\mathcal{F} } %
\def \H {\mathcal{H} } %
\def \X { \mathcal{X}  } %
\def \M {\mathcal{M} } %
\newcommand{\stepsize}{\gamma} %
\newcommand{\phat}{\hat{p}}
\newcommand{\wb}{\bm{w}}
\newcommand{\MMD}{\mathop{\rm MMD}}
\newcommand{\innerProd}[2]{\langle #1 , #2 \rangle}
\newcommand{\idm}{I}
\newcommand{\rb}{\mathbb{R}}
\newcommand{\BEAS}{\begin{eqnarray*}}
\newcommand{\EEAS}{\end{eqnarray*}}
\newcommand{\BEA}{\begin{eqnarray}}
\newcommand{\EEA}{\end{eqnarray}}
\newcommand{\BEQ}{\begin{equation}}
\newcommand{\EEQ}{\end{equation}}
\newcommand{\BIT}{\begin{itemize}}
\newcommand{\EIT}{\end{itemize}}
\newcommand{\BNUM}{\begin{enumerate}}
\newcommand{\ENUM}{\end{enumerate}}
\newcommand{\BA}{\begin{array}}
\newcommand{\EA}{\end{array}}
\renewenvironment{proof}{\par\noindent{\bf Proof\ }}{\hfill\BlackBox\\[2mm]}
\newenvironment{proofsketch}{\par\noindent{\bf Proof sketch.~}}{\hfill\BlackBox\\[2mm]}
\newtheorem{definition}{Definition}
\newtheorem{theorem}[definition]{Theorem}
\theoremstyle{remark}
\newtheorem{remark}{Remark}
\newcommand{\BlackBox}{\rule{1.5ex}{1.5ex}}  %
\newcommand{\kernel}{\kappa}
\newcommand{\teleConstant}{\chi}
\begin{document}

\runningtitle{Sequential Kernel Herding: Frank-Wolfe Optimization for Particle Filtering}

\twocolumn[

\aistatstitle{Sequential Kernel Herding: \\ Frank-Wolfe Optimization for Particle Filtering}

\aistatsauthor{ Simon Lacoste-Julien \And Fredrik Lindsten \And Francis Bach }

\aistatsaddress{ 
INRIA - Sierra Project-Team \\ 
\'Ecole Normale Sup\'erieure, Paris, France  
\And
Department of Engineering \\ 
University of Cambridge 
\And 
INRIA - Sierra Project-Team \\ 
\'Ecole Normale Sup\'erieure, Paris, France  
} ]

\begin{abstract}
Recently, the Frank-Wolfe optimization algorithm was suggested as a procedure to obtain adaptive quadrature rules for integrals of functions in a reproducing kernel Hilbert space (RKHS) with a potentially faster rate of convergence than Monte Carlo integration (and ``kernel herding'' was shown to be a special case of this procedure). In this paper, we propose to replace the random sampling step in a particle filter by Frank-Wolfe optimization. By optimizing the position of the particles, we can obtain better accuracy than random or quasi-Monte Carlo sampling.  In applications where the evaluation of the emission probabilities is expensive (such as in robot localization), the additional computational cost to generate the particles through optimization can be justified. Experiments on standard synthetic examples as well as on a robot localization task indicate indeed an improvement of accuracy over random and quasi-Monte Carlo sampling.
\end{abstract}

\vspace{-4mm}
\section{Introduction}
In this paper, we explore a way to combine ideas from \emph{optimization} with \emph{sampling} to get better approximations in probabilistic models. We consider 
state-space models (\ssm{s}, also referred to as general state-space hidden Markov models),
as they constitute an important class of models in engineering, econometrics and other areas involving time series and dynamical systems. A discrete-time,
nonlinear \ssm can be written as
\vspace{-3mm}
\begin{equation} \label{eq:ssm}
    x_{t} \, |\, x_{1:(t-1)} \sim p(x_t | x_{t-1}); \quad
    y_t \,|\, x_{1:t} \sim p(y_t | x_t),
\end{equation}
where $x_t\in\X$ denotes the latent state variable and $y_t\in\setY$ the observation at time $t$.
Exact state inference in \ssm{s} is possible, essentially, only when the model is linear and Gaussian or when the state-space
$\X$ is a finite set. For solving the inference problem beyond these restricted model classes,
sequential Monte Carlo methods, \ie particle filters (PFs), have emerged as a key tool; see \eg,
\citet{DoucetJ:2011,CappeMR:2005,DoucetGA:2000}. %
However, since these methods are based on Monte Carlo integration they are inherently affected
by sampling variance, which can degrade the performance of the estimators.

Particular challenges arise in the case when the \emph{observation likelihood} $p(y_t | x_t)$
is computationally expensive to evaluate.
For instance, this is common in robotics applications where the observation model relates the sensory input of the
robot, which can comprise vision-based systems, laser rangefinders, synthetic aperture radars, \etc.
For such systems, simply evaluating the observation function for a fixed value of $x_t$ can therefore
involve computationally expensive operations, such as image processing, point-set registration, and related tasks.
This poses difficulties for particle-filtering-based solutions for two reasons:
\emph{(1)} the computational bottleneck arising from the likelihood evaluation implies that we cannot simply
increase the number of particles to improve the accuracy, and
\emph{(2)} this type of ``complicated'' observation models will typically not allow for
adaptation of the proposal distribution used within the filter, in the spirit of \citet{PittS:1999},
leaving us with the standard---but inefficient---\emph{bootstrap proposal} as the only viable option.
On the contrary, for these systems, the \emph{dynamical model} $p(x_t | x_{t-1})$ is often comparatively simple, \eg being
a linear and Gaussian ``nearly constant acceleration'' model \citep{RisticAG:2004}.

The method developed in this paper is geared toward this class of filtering problems.
The basic idea is that, in scenarios when the likelihood evaluation is the computational bottleneck,
we can afford to spend additional computations to improve upon the sampling of the particles.
By doing so, we can avoid excessive variance arising from simple Monte Carlo sampling from the
bootstrap proposal.

\paragraph{Contributions.}
We build on the optimization view from~\citet{Bach2012} of kernel herding~\citep{Chen2010} to approximate the
integrals appearing in the Bayesian filtering recursions.
We make use of the Frank-Wolfe (FW) quadrature to approximate,
in particular, mixtures of Gaussians which often arise
in a particle filtering context as the mixture
over past particles in the distribution over the next state.
We use this approach within a filtering framework and
prove theoretical convergence results for the resulting
method, denoted as \emph{sequential kernel herding} (SKH), giving
one of the first explicit better convergence rates than for a particle filter. 
Our preliminary experiments show that SKH can give
better accuracy than a standard particle filter or a quasi-Monte Carlo
particle filter.

\section{Adaptive quadrature rules with Frank-Wolfe optimization}

\subsection{Approximating the mean element for integration in a RKHS}
We consider the problem of approximating integrals of functions belonging to a reproducing kernel Hilbert space (RKHS) $\H$ with respect to a \emph{fixed} distribution~$p$
over some set $\X$. We can think of the elements of $\H$ as being real-valued functions on $\X$, with pointwise evaluation given from the reproducing property by $f(x) = \innerProd{f}{\Phi(x)}$, where $\Phi: \X \to \H$ is the feature map from the state-space~$\X$ to the RKHS. Let $\kernel: \X^2 \to \reals$ be the associated positive definite kernel. We briefly review here the setup from~\citet{Bach2012}, which generalized the one from~\citet{Chen2010}. We want to approximate integrals $\E_p[f]$ for $f \in \H$ using a set of $n$ points $x^{(1)}, \ldots, x^{(n)} \in \X$ associated with positive weights $w^{(1)}, \ldots, w^{(n)}$ which sum to 1:
\vspace{-2mm}
\begin{align} 
	\E_p[f] &\approx \sum_{i=1}^n w^{(i)} f(x^{(i)}) = \E_{\hat{p}}[f], \label{eq:MCsum} %
\end{align}
where $\hat{p} := \sum_{i=1}^n w^{(i)} \delta_{x^{(i)}}$ is the associated empirical distribution defined by these points and $\delta_{x}(\cdot)$ is a point mass distribution at~$x$. 
If the points~$x^{(i)}$ are independent samples from~$p$, then this Monte Carlo estimate (using weights of~$1/n$) is unbiased with a variance of~$\V_p[f]/n$, where~$\V_p[f]$ is the variance of $f$ with respect to $p$. By using the fact that $f$ belongs to the RKHS~$\H$, we can actually choose a better set of points with lower error. It turns out that the worst-case error of estimators of the form~\eqref{eq:MCsum} can be analyzed in terms of their approximation distance to the \emph{mean element} $\mu(p) := \E_p [ \Phi ] \in \H$ \citep{Smola2007,Sriperumbudur2010}. Essentially, by using Cauchy-Schwartz inequality and the linearity of the expectation operator, we can obtain:
\vspace{-1mm}
\begin{multline} \label{eq:IntegralError}
  \sup_{\substack{
  		f \in  \H \\ 
  		\|f\|_\H \leq 1}
  		} 
  |\E_p[f] - \E_{\hat{p}}[f]| = \|\mu(p) - \mu(\hat{p})  \|_\H  \\[-3mm]
	 	 =: \MMD(p, \hat{p}), \quad
\end{multline} 
and so by bounding $\MMD(p, \hat{p})$, we can bound the error of approximating the expectation for all $f \in \H$, with $\|f\|_\H$ as a proportionality constant. $\MMD(p,\hat{p})$ is thus a central quantity for developing good quadrature rules given by~\eqref{eq:MCsum}. In the context of RKHSs, $\MMD(p,q)$ can be called the \emph{maximum mean discrepancy} \citep{gretton2012MMD} between the distributions $p$ and $q$, and acts a pseudo-metric on the space of distributions on $\X$. If $\kernel$ is a \emph{characteristic} kernel (such as the standard RBF kernel), then $\MMD$ is in fact a metric, i.e. $\MMD(p,q) = 0 \implies p = q$. We refer the reader to~\citet{Sriperumbudur2010} for the regularity conditions needed for the existence of these objects and for more details.

\subsection{Frank-Wolfe optimization for adaptive quadrature}
For getting a good quadrature rule $\hat{p}$, our goal is thus to minimize $\| \mu(\hat{p}) - \mu(p)\|_\H$. We note that $\mu(p)$ lies in the \emph{marginal polytope} $\M \subset \H$, defined as the closure of the convex-hull of $\Phi(\X)$. We suppose that $\Phi(x)$ is uniformly bounded in the feature space, that is, there is a finite $R$ such that $\|\Phi(x)\|_\H \leq R$ $\forall x \in \X$. This means that~$\M$ is a closed bounded convex subset of~$\H$, and we could in theory optimize over it. This insight was used by~\citet{Bach2012} who considered using the Frank-Wolfe optimization algorithm to optimize the convex function $J(g) := \frac{1}{2} \| g - \mu(p) \|_\H^2$ over~$\M$ to obtain adaptive quadrature rules. The Frank-Wolfe algorithm (also called conditional gradient) \citep{frank56FW} is a simple first-order iterative constrained optimization algorithm for optimizing \emph{smooth} functions over \emph{closed bounded convex} sets like~$\M$ (see~\citet{Dunn1980} for its convergence analysis on general infinite dimensional Banach spaces). At every iteration, the algorithm finds a good feasible search \emph{vertex} of~$\M$ by minimizing the \emph{linearization} of~$J$ at the current iterate $g_k$: $\bar{g}_{k+1} = \argmin_{g \in \M} \innerProd{J'(g_k)}{g}$. The next iterate is then obtained by a suitable convex combination of the search vertex $\bar{g}_{k+1}$ and the previous iterate $g_k$: $g_{k+1} = (1-\stepsize_k) g_k + \stepsize_k \bar{g}_{k+1}$ for a suitable step-size $\stepsize_k$ from a fixed schedule (e.g. $1/(k+1)$) or by using line-search. A crucial property of this algorithm is that the iterate $g_k$ is thus a convex combination of the \emph{vertices} of~$\M$ visited so far. This provides a \emph{sparse} expansion for the iterate, and makes the algorithm suitable to high-dimensional optimization (or even infinite) -- this explains in part the regain of interest in machine learning in the last decade for this old optimization algorithm (see~\citet{jaggi13FW} for a recent survey). In our setup where~$\M$ is the convex hull of~$\Phi(\X)$, the vertices of~$\M$ are thus of the form $\bar{g}_{k+1} = \Phi(x^{(k+1)})$ for some $x^{(k+1)} \in \X$. Running Frank-Wolfe on~$\M$ thus yields $g_k = \sum_{i = 1}^k w_k^{(i)} \Phi(x^{(i)}) = \E_{\hat{p}} [\Phi ]$ for some weighted set of points $\{w_k^{(i)}, x^{(i)}\}_{i=1}^k$. The iterate $g_k$ thus corresponds to a quadrature rule $\hat{p}$ of the form of~\eqref{eq:MCsum} and $g_k = \E_{\hat{p}} [\Phi ]$, and this is the relationship that was explored in~\citet{Bach2012}. Running Frank-Wolfe optimization with the step-size of $\stepsize_k = 1/(k+1)$ reduces to the kernel herding algorithm proposed by~\citet{Chen2010}. See also~\citet{Huszar2012} for an alternative approach with negative weights.

\begin{figure*}[t!]
	\vspace{-2mm}
	\begin{minipage}[t]{\columnwidth}
	\begin{algorithm}[H]
	  \caption{FW-Quad($p$, $\H$, $N$): Frank-Wolfe adaptive quadrature}
	  \label{alg:FWquad}
	\begin{algorithmic}[1]
	\STATEx \textbf{Input:} distribution $p$, RKHS $\H$ which defines kernel $\kernel(\cdot, \cdot)$ and state-space $\X$, number of samples $N$
	\STATE Let $g_0 = 0$.
	\FOR{ $k = 0\ldots N-1$}
		\STATE Solve $x^{(k+1)} = {\displaystyle\argmin_{x \in \X} \innerProd{g_k - \mu_p}{\Phi(x)}}$
		\STATEx That is:
		\STATEx $x^{(k+1)} = {\displaystyle\argmin_{x \in \X} \sum_{i=1}^k w_k^{(i)} \, ( \kernel(x^{(i)}, x)-\mu_p(x))}$.
		\STATE Option (1): Let $\stepsize_k = \frac{1}{k+1}.$
		\STATE Option (2): Let $\stepsize_k = \frac{\innerProd{g_k-\mu_p}{g_k - \Phi( x^{(k+1)}) }} {\| g_k - \Phi(x^{(k+1)}) \|^2} \,\,$ (LS)
		\STATE Update $g_{k+1} = (1-\stepsize_k) g_k + \stepsize_k \Phi( x^{(k+1)})$
		\STATEx \quad i.e. $w_{k+1}^{(k+1)} = \stepsize_k$; 
		\STATEx \quad and $w_{k+1}^{(i)} = (1-\stepsize_k) w_{k}^{(i)}$ for $i = 1\ldots k$
	\ENDFOR
	\STATE \textbf{Return:} $\phat = \sum_{i=1}^N w_N^{(i)} \delta_{x^{(i)}}$
	\end{algorithmic}
	\end{algorithm}

\end{minipage}
\hfill
\begin{minipage}[t]{\columnwidth}
\begin{algorithm}[H]
  \caption{Particle filter template (joint predictive distribution form) --- SKH alg. by changing step 3}
  \label{alg:PF}
\begin{algorithmic}[1]
\STATEx \textbf{Input:} \ssm $p(x_t|x_{t-1})$, 
\STATEx \quad\quad $o_t(x_t) := p(y_t|x_t)$ for $t \in 1:T$.
\STATEx Maintain $\hat{p}_t(x_{1:t}) \! = \!\sum_{i=1}^N \! w_t^{(i)} \! \delta_{x_{1:t}^{(i)}}\!(x_{1:t})$ 
during algorithm as approximation of $p(x_t, x_{1:(t-1)} | y_{1:(t-1)})$.
\STATE Let $\tilde{p}_1(x_1) := p(x_1)$
\FOR{t=1\, \ldots, T}
\STATE Sample: get $\hat{p}_t = \mathrm{SAMPLE}(\tilde{p}_{t}, N)$
\STATEx \quad\quad [For SKH, use $\hat{p}_t = \textrm{FW-Quad}(\tilde{p}_t, \H_t, N)$]
\STATE Include observation and normalize:
\STATEx \quad\quad $\hat{W}_t = \E_{\hat{p}_t}[o_t]$; \quad $\hat{r}_t(x_{1:t}) := \frac{1}{\hat{W}_t} o_t(x_t) \hat{p}_t(x_{1:t})$.
\STATE Propagate approximation forward: 
\STATEx \quad\quad $\tilde{p}_{t+1}(x_{t+1}, x_{1:t}) := p(x_{t+1} | x_t) \hat{r}_t(x_{1:t})$
\ENDFOR
\STATE \textbf{Return} Filtering distribution $\hat{r}_T$; predictive distribution $\hat{p}_{T+1}$; normalization constants $\hat{W}_1, \ldots, \hat{W_T}$.
\end{algorithmic}
\end{algorithm}
 \end{minipage}
\hfill
\end{figure*}

Algorithm~\ref{alg:FWquad} presents the Frank-Wolfe optimization algorithm to solve $\min_{g \in \M} J(g)$ in the context of getting quadrature rules (we also introduce the shorthand notation $\mu_p := \mu(p)$). We note that to evaluate the quality $\MMD(\hat{p}, p)$ of this adaptive quadrature rule, we need to be able to evaluate $\mu_p(x) = \int_{x' \in \X} p(x') \kernel(x',x) dx'$ efficiently. This is true only for specific pairs of kernels and distributions, but fortunately this is the case when $p$ is a mixture of Gaussians and $\kernel$ is a Gaussian kernel. This insight is central to this paper; we explore this case more specifically 
in Section~\ref{sub:MoG}. To find the next quadrature point, we also need to (approximately) optimize $\mu_p(x)$ over $\X$ (step~3 of Algorithm~\ref{alg:FWquad}, called the FW vertex search). In general, this will yield a non-convex optimization problem, and thus cannot be solved with guarantees, even with gradient descent. In our current implementation, we approach step~3 by doing an exhaustive search over $M$ random samples from $p$ precomputed when FW-Quad is called. We thus follow the idea from the kernel herding paper~\citep{Chen2010} to choose the best $N$ ``super-samples'' out of a large set of samples $M$. 
Thanks to the fact that convergence guarantees for Frank-Wolfe optimization can still be given when using an approximate FW vertex search, 
we show in Appendix~\ref{app:rateRandomSearch} of the 
supplementary material that this procedure either adds a $O(1/M^{1/4})$ term or a $O(1/\sqrt{M})$ term to the worst-case $\MMD(\hat{p}, p)$ error.

In our description of Algorithm~\ref{alg:FWquad}, a preset number~$N$ of particles (iterations) was used. Alternatively, we could use a variable number of iterations with the terminating criterion test $\| g_k - \mu(p)\|_\H \leq \epsilon$ which can be \emph{explicitly computed during the algorithm} and provides the $\MMD$ error bound on the returned quadrature rule. Option (2) on line~5 chooses the step-size $\stepsize_k$ by analytic line-search (hereafter referred as the FW-LS version) while option (1) chooses the kernel herding step-size $\stepsize_k = 1/(k+1)$ (herafter referred as the FW version) which always yields uniform weights: $w_k^{(i)} = 1/k$ for all $i \leq k$. A third alternative is to re-optimize $J(g)$ over the convex hull of the previously visited vertices; this is called the fully corrective version~\citep{jaggi13FW} of the Frank-Wolfe algorithm (hereafter referred as FCFW). In 
this case: $(w_{k+1}^{(1)}, \ldots, w_{k+1}^{(k+1)}) = \argmin_{\wb \in \Delta_{k+1}} \wb^\top \bm{K}_{k+1} \wb - 2 \bm{c}_{k+1}^\top \wb$, where $\Delta_{k+1}$ is the $(k+1)$-dimensional probability simplex, $\bm{K}_{k+1}$ is the kernel matrix on the $(k+1)$ vertices: $(\bm{K}_{k+1})_{ij} = \kernel(x^{(i)}, x^{(j)})$ and $(\bm{c}_{k+1})_i = \mu_p(x^{(i)})$ for $i = 1, \ldots, {(k+1)}$. This is a convex quadratic problem over the simplex. A slightly modified version of the FCFW is called the min-norm point algorithm and can be more efficiently optimized using specific purpose active-set algorithms --- see~\citet[\S 9.2]{bach13sub} for more details. We refer the reader to~\citet{Bach2012} for more details on the rate of convergence of Frank-Wolfe quadrature assuming that the FW vertex is found with guarantees. We summarize them as follows: if $\H$ is infinite dimensional, then FW-Quad gives the same $O(1/\sqrt{N})$ rate for the MMD error as standard random sampling, for all FW methods. On the other hand, if a ball of non-zero radius centered at $\mu_p$ lies within $\M$, then faster rates than random sampling are possible: FW gives a $O(1/N)$ rate whereas FW-LS and FCFW gives exponential convergence rates (though in practice, we often see differences not explained by the theory between these methods).

\subsection{Example: mixture of Gaussians} \label{sub:MoG}

We describe here in more details the Frank-Wolfe quadrature when $p$ is a mixture of Gaussians $p(x) = \sum_{i=1}^K \pi_i \N(x | \mu_i, \Sigma_i)$ for $\X = \reals^d$ and $\kernel$ is the Gaussian kernel $\kernel_\sigma(x,x') := \exp(-\frac{1}{2\sigma^2}\|x-x'\|^2)$. In this case,  $\mu_p(x) = \sum_{i=1}^K \pi_i (\sqrt{2 \pi}\sigma)^d \N(x | \mu_i, \Sigma_i + \sigma^2 \idm_d)$. 
We thus need to optimize a difference of mixture of Gaussian bumps in step~3 of Algorithm~\ref{alg:FWquad}, a non-convex optimization problem that we approximately solve by exhaustive search over $M$ random samples from $p$.

\section{Sequential kernel herding}
\subsection{Sequential Monte Carlo}
Consider again the \ssm in \eqref{eq:ssm}. The joint probability density function for a
sequence of latent states $x_{1:T} := (x_1,\ldots,x_T)$ and observations $y_{1:T}$ factorizes as $p(x_{1:T}, y_{1:T}) = \prod_{t=1}^T p(x_t|x_{t-1}) p(y_t | x_t)$, with $p(x_1|x_0) := p(x_1)$ denoting the prior density on the initial state.
We would like to do approximate inference in this \ssm. In particular, we could be interested in computing the joint filtering distribution $r_t(x_{1:t}) := p(x_{1:t} | y_{1:t})$ or the joint predictive distribution $p_{t+1}(x_{t+1}, x_{1:t}) := p(x_{t+1}, x_{1:t}| y_{1:t})$. In particle filtering methods, we approximate these distributions with empirical distributions from weighted particle sets $\{w_t^{(i)}, x_{1:t}^{(i)}\}_{i=1}^N$ as in~\eqref{eq:MCsum}. 
We note that it is easy to marginalize $\hat{p}$ with a simple weight summation, and so we will present the algorithm as getting an approximation for the \emph{joint} distributions $r_t$ and $p_t$ defined above, with the understanding that the marginal ones are easy to obtain afterwards. 
In the terminology of particle filtering, $x_t^{(i)}$ is the particle at time $t$, whereas $x_{1:t}^{(i)}$ is the \emph{particle trajectory}. While principally the PF provides an approximation
of the full joint distribution $r_t(x_{1:t})$, it is well known that this approximation deteriorates
for any marginal of $x_s$ for $s \ll t$ \citep{DoucetJ:2011}. Hence, the PF
is typically only used to approximate marginals of $x_s$ for $s \lesssim t$ (fixed-lag smoothing) or $s = t$ (filtering), or for prediction. 

Algorithm~\ref{alg:PF} presents the bootstrap particle filtering algorithm~\citep{GordonSS:1993} from the point of view of propagating an approximate posterior distribution forward in time~\citep[see e.g.][]{fearnhead05SQMC}. We describe it as propagating an approximation $\phat_t(x_{1:t})$ of the joint predictive distribution one time step forward with the model dynamics to obtain $\tilde{p}_{t+1}(x_{t+1}, x_{1:t})$ (step 5), and then randomly sampling from it (step 3) to get the new predictive approximation $\hat{p}_{t+1}(x_{t+1}, x_{1:t})$.
As $\phat_t$ is an empirical distribution, $\tilde{p}_{t+1}$ is a mixture distribution (the mixture components are coming from the particles at time~$t$): 
\vspace{-2mm}
\begin{multline} \label{eq:transitionMixture}
\tilde{p}_{t+1}(x_{t+1}, x_{1:t})= \\ 
	\frac{1}{\hat{W}_t} \sum_{i=1}^N \underbrace{p(y_t | x_{t}^{(i)}) w_t^{(i)}}_{\textrm{mixture weight}}
  \underbrace{p(x_{t+1} | x_{t}^{(i)}) }_{\textrm{mixture component}} \!\!\!\!\delta_{x_{1:t}^{(i)}}(x_{1:t}).
\end{multline}
We denote the conditional normalization constant at time $t$ by $W_t := p(y_t | y_{1:(t-1)})$ and the global normalization constant by $Z_t := p(y_{1:t}) = \prod_{u=1}^t W_u$. $\hat{W}_t$ is the particle filter approximation to $W_t$ and is obtained by summing the un-normalized mixture weights in~\eqref{eq:transitionMixture}; see step~4 in Algorithm~\ref{alg:PF}. Randomly sampling from~\eqref{eq:transitionMixture} is equivalent to first sampling a mixture component according to the mixture weight (i.e., choosing a past particle $x_{1:t}^{(i)}$ to propagate),
and then sampling its next extension state $x_{t+1}^{(i)}$ with probability $p(x_{t+1}|x_{t}^{(i)})$. The standard bootstrap particle filter is thus obtained by maintaining uniform weight for the predictive distribution ($w_t^{(i)}=\frac{1}{N}$) and randomly sampling from~\eqref{eq:transitionMixture} to obtain the particles at time $t+1$. This gives an unbiased estimate of $\tilde{p}_{t+1}$: $\E_{\tilde{p}_{t+1}}[\hat{p}_{t+1}] = \tilde{p}_{t+1}$. Lower variance estimators can be obtained by using a different resampling mechanism for the particles than this multinomial sampling scheme, such as stratified resampling~\citep{CarpenterCF:1999} and are usually used in practice instead.

One way to improve the particle filter is thus to replace the random sampling stage of step 3 with different sampling mechanisms with lower variance or better approximation properties of the distribution $\tilde{p}_{t+1}$ that we are trying to approximate.
As we obtain the normalization constants $W_t$ by integrating the observation probability, it seems natural to look for particle point sets with better integration properties.
By replacing random sampling with a quasi-random number sequence, 
we obtain the already proposed sequential quasi-Monte Carlo scheme~\citep{Philomin2000,Ormoneit2001,gerber2014sqmc}. %
The main contribution of our work is to instead propose to use Frank-Wolfe quadrature in step 3 of the particle filter to obtain better (adapted) point sets.

\subsection{Sequential kernel herding}
In the sequential kernel herding (SKH) algorithm, we simply replace step~3 of Algorithm~\ref{alg:PF} with $\hat{p}_t = \textrm{FW-Quad}(\tilde{p}_t, \H_t, N)$. As mentioned in the introduction, many dynamical models used in practice assume Gaussian transitions. Therefore, we will put particular emphasis on the case when (more generally) $p(x_t|x_{1:(t-1)}, y_{1:(t-1)})$ is a mixture of Gaussians, with parameters for the mixture components that can be arbitrary functions of the state history $x_{1:(t-1)}, y_{1:(t-1)}$, and is thus still fairly general. We thus consider the Gaussian kernel for the FW-Quad procedure as then we can compute the required quantities analytically. An important subtle point is which Hilbert space $\H_t$ to consider. In this paper, we focus on the \emph{marginalized} filtering case, i.e. we are interested in $p(x_t | y_{1:t})$ only. Thus we are only interested in functions of $x_t$, which is why we define our kernel at time $t$ to only depend on $x_t$ and not the past histories. For simplicity, we also assume that $\H_t = \H$ for all $t$ (we use the same kernel for each time step). Even though the algorithm can maintain the distribution on the whole history $\hat{p}_t(x_{1:t})$, the past histories $x_{1:(t-1)}$ are marginalized out when computing the mean map, for example $\mu(\tilde{p}_t) = \E_{\tilde{p}_t(x_{1:t})}[\Phi(x_t)]$. During the SKH algorithm, we can still track the particle histories by keeping track from which mixture component in~\eqref{eq:transitionMixture} $x_t$ was coming from, but the past history is not used in the computation of the kernel and thus does not appear as a repulsion term in step 3 of Algorithm~\ref{alg:FWquad}. We leave it as future work to analyze what kind of high-dimensional kernel on past histories would make sense in this context, and to analyze its convergence properties. The particle histories are useful in the Rao-Blackwellized extension that we present in Appendix~\ref{sec:RB} and use in the robot localization experiment of Section~\ref{sec:uav}.

\subsection{Convergence theory}
In this section, we give sufficient conditions to guarantee that SKH is consistent as $N$ goes to infinity. Let $p_t$ here denote the \emph{marginalized} predictive instead of the joint.
Let $F_t$ be the forward transformation operator on signed measures that takes the predictive distribution $p_t$ on $x_t$ and yields the un-normalized marginalized predictive distribution $F_t p_t$ on $x_{t+1}$ in the \ssm. Thus for a measure $\nu$, we get $(F_t \nu)(\cdot) := \int_{\X_t} p(\cdot | x_t) p(y_t | x_t) d\nu(x_t)$. 
We also have that $p_{t+1} = \frac{1}{W_t}F_t p_t$.

For the following theorem, $\F_t$ is a function space on $\X_{t+1}$ defined (depending on $\H_{t+1}$) as all functions for which the following semi-norm is finite:\footnote{In general, the integral on $\X_{t+1}$ should be with respect to the base measure for which the conditional density $p(x_{t+1}|x_t)$ is defined. All proofs are in the supplementary material.}
$$
\|f\|_{\F_t} := \sup_{\|h\|_{\H_{t+1}}=1} \bigg|\int_{\X_{t+1}} f(x_{t+1}) h(x_{t+1}) dx_{t+1} \bigg|.
$$

\begin{theorem}[Bounded growth of the mean map] \label{thm:Ct}
Suppose that the function $f_t : (x_{t+1},x_t) \mapsto p(y_t|x_t) p(x_{t+1}|x_t)$ is in the tensor product function space $\F_t \otimes \H_t$ with the following defined nuclear norm: $\| f_t \|_{\F_t \otimes \H_t} := \inf \sum_{i} \|\alpha_i\|_{\F_t} \|\beta_i\|_{\H_t}$, where the infimum is taken over all the possible expansions such that $f_t(x_{t+1},x_t) = \sum_i \alpha_i(x_{t+1}) \beta_i (x_t)$ for all $x_t, x_{t+1}$. Then for any finite signed Borel measure $\nu$ on $\X_t$, we have:
$$
\| \mu(F_t \nu) \|_{\H_{t+1}} \leq \| f_t \|_{\F_t \otimes \H_t} \, \| \mu(\nu) \|_{\H_t} .
$$
\end{theorem}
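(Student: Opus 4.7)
The plan is to unpack the left-hand side using the reproducing property and then exploit the tensor product decomposition of $f_t$ to reduce the bound to a sum of products of $\F_t$- and $\H_t$-norms. First I would write, using the definition of the mean map and of $F_t$,
\begin{equation*}
\mu(F_t \nu) = \int_{\X_{t+1}} \Phi_{t+1}(x_{t+1}) \, d(F_t \nu)(x_{t+1}) = \int_{\X_t}\!\! \int_{\X_{t+1}} \Phi_{t+1}(x_{t+1}) f_t(x_{t+1},x_t) \, dx_{t+1} \, d\nu(x_t),
\end{equation*}
where Fubini's theorem is applied under the implicit integrability assumptions (these will come from $\|f_t\|_{\F_t\otimes\H_t} < \infty$ together with boundedness of $\Phi_{t+1}$).

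Next, I would test against an arbitrary $h \in \H_{t+1}$ with $\|h\|_{\H_{t+1}} = 1$ and use the reproducing property $\langle \Phi_{t+1}(x_{t+1}), h \rangle_{\H_{t+1}} = h(x_{t+1})$ to obtain
\begin{equation*}
\langle \mu(F_t\nu), h \rangle_{\H_{t+1}} = \int_{\X_t}\!\! \int_{\X_{t+1}} h(x_{t+1}) f_t(x_{t+1},x_t) \, dx_{t+1} \, d\nu(x_t).
\end{equation*}
Now I plug in an arbitrary expansion $f_t(x_{t+1},x_t) = \sum_i \alpha_i(x_{t+1}) \beta_i(x_t)$ with $\alpha_i \in \F_t$, $\beta_i \in \H_t$. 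The $x_{t+1}$-integral against $\alpha_i$ is bounded by $\|\alpha_i\|_{\F_t}$ (since $\|h\|_{\H_{t+1}} = 1$), and the $x_t$-integral of $\beta_i$ against $\nu$ equals $\langle \beta_i, \mu(\nu) \rangle_{\H_t}$, which by Cauchy--Schwarz is bounded by $\|\beta_i\|_{\H_t}\, \|\mu(\nu)\|_{\H_t}$. Combining these termwise yields
\begin{equation*}
|\langle \mu(F_t\nu), h \rangle_{\H_{t+1}}| \leq \Big(\sum_i \|\alpha_i\|_{\F_t} \|\beta_i\|_{\H_t}\Big) \|\mu(\nu)\|_{\H_t}.
\end{equation*}
Taking the infimum over all admissible expansions of $f_t$ gives the factor $\|f_t\|_{\F_t \otimes \H_t}$, and taking the supremum over unit-norm $h$ on the left-hand side recovers $\|\mu(F_t\nu)\|_{\H_{t+1}}$, completing the proof.

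The main obstacle I anticipate is a technical one rather than conceptual: justifying the interchange of integration and inner product (i.e., that $\int \Phi_{t+1}(x_{t+1}) \alpha_i(x_{t+1})\, dx_{t+1}$ defines a Bochner integral in $\H_{t+1}$ whose inner product with $h$ equals $\int h(x_{t+1}) \alpha_i(x_{t+1})\, dx_{t+1}$), and applying Fubini to the double integral. These follow from the uniform bound $\|\Phi_{t+1}(x)\|_{\H_{t+1}} \leq R$ assumed earlier in the paper plus the finiteness of $\|f_t\|_{\F_t\otimes\H_t}$, but need to be stated carefully. Once the measure-theoretic bookkeeping is done, the algebraic core above goes through immediately and is essentially a duality argument between $\F_t$ and $\H_{t+1}$ combined with Cauchy--Schwarz in $\H_t$.
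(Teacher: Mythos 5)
Your proof is correct and follows essentially the same route as the paper's: dualize against a unit-norm $h \in \H_{t+1}$ via the reproducing property, apply Fubini, bound each term of an arbitrary expansion by $\|\alpha_i\|_{\F_t}$ (definition of the $\F_t$ semi-norm) and $|\E_\nu[\beta_i]| \leq \|\beta_i\|_{\H_t}\|\mu(\nu)\|_{\H_t}$, then take the infimum over expansions. Your explicit attention to the Bochner-integral and Fubini justifications is more careful than the paper's proof sketch, but the argument is the same.
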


\begin{theorem}[Consistency of SKH] \label{thm:rate}
Suppose that for all $1 \leq t \leq T$, $f_t$ is in $\F_t \otimes \H_t$ as defined in Theorem~\ref{thm:Ct} and $o_t$ is in $\H_t$. Then we have:\footnote{We use the convention that the empty sum is $0$ and the empty product is $1$.}
\begin{multline*}
\| \mu(\hat{p}_T) - \mu(p_T) \|_{\H_T} \leq \\ 
	\hat{\epsilon}_T + \left(R \frac{\|o_{T-1}\|_{\H_{T-1}}}{W_{T-1}} + \rho_{T-1}\right)
 	\sum_{t=1}^{T-1} \teleConstant_t \, \hat{\epsilon}_t \left(\prod_{k=t}^{T-2} \rho_k \right) , 
\end{multline*}
where $\rho_t := \frac{\| f_t \|_{\F_t \otimes \H_t}}{W_t}$, $\teleConstant_t := \prod_{k=1}^{t-1} \frac{\hat{W_k}}{W_k}$ and $\hat{\epsilon}_t$ is the FW error reported at time $t$ by the algorithm: $\hat{\epsilon}_t := \| \mu(\hat{p}_t) - \mu(\tilde{p}_t) \|_{\H_t}$.
\end{theorem}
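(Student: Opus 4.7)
The plan is to sidestep the naive triangle-inequality recursion $\|\mu(\hat p_T) - \mu(p_T)\|_{\H_T} \leq \hat{\epsilon}_T + C_{T-1}\|\mu(\hat p_{T-1})-\mu(p_{T-1})\|_{\H_{T-1}}$ (with $C_{T-1} := R\|o_{T-1}\|_{\H_{T-1}}/W_{T-1}+\rho_{T-1}$), which when iterated produces a product of the $C_k$'s rather than the product of $\rho_k$'s claimed by the theorem. Instead I will change variables so that the cumulative normalization mismatch is absorbed. Define the rescaled positive measure $g_t := \teleConstant_t \hat p_t$ and $G_t := \mu(g_t) - \mu(p_t) \in \H_t$. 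Writing $\hat p_t = \tilde p_t + \eta_t$ where $\eta_t$ is the signed FW residual with $\|\mu(\eta_t)\|_{\H_t} = \hat{\epsilon}_t$, together with the easily-checked identity $\teleConstant_t/\hat W_{t-1} = \teleConstant_{t-1}/W_{t-1}$, yields the clean linear recursion (no normalization ratio in front of the propagated term)
\[
g_t - p_t = \frac{F_{t-1}(g_{t-1}-p_{t-1})}{W_{t-1}} + \teleConstant_t\, \eta_t.
\]

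Taking mean maps, bounding $\|\mu(F_{t-1}(g_{t-1}-p_{t-1}))\|_{\H_t} \leq W_{t-1}\rho_{t-1}\|G_{t-1}\|_{\H_{t-1}}$ via Theorem~\ref{thm:Ct}, and using $\|\mu(\eta_t)\|_{\H_t}\leq\hat{\epsilon}_t$, gives the one-step bound $\|G_t\|_{\H_t} \leq \rho_{t-1}\|G_{t-1}\|_{\H_{t-1}} + \teleConstant_t\hat{\epsilon}_t$. Since $\teleConstant_1 = 1$ and $\tilde p_1 = p_1$ give $\|G_1\|_{\H_1} = \hat{\epsilon}_1$, iterating up to $t = T-1$ produces exactly the inner sum in the theorem:
\[
\|G_{T-1}\|_{\H_{T-1}} \leq \sum_{t=1}^{T-1}\teleConstant_t\, \hat{\epsilon}_t \prod_{k=t}^{T-2}\rho_k.
\]

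The final step converts this bound on $G_{T-1}$ into the claim. After peeling off the FW error at time $T$ via $\|\mu(\hat p_T) - \mu(p_T)\|_{\H_T} \leq \hat{\epsilon}_T + \|\mu(\tilde p_T) - \mu(p_T)\|_{\H_T}$, the identity $\teleConstant_{T-1}\hat W_{T-1} = W_{T-1}\teleConstant_T$ lets me write $\tilde p_T = F_{T-1}g_{T-1}/(W_{T-1}\teleConstant_T)$, and a short algebraic manipulation gives the decomposition
\[
\mu(\tilde p_T) - \mu(p_T) = (1-\teleConstant_T)\,\mu(\tilde p_T) + \frac{\mu(F_{T-1}(g_{T-1}-p_{T-1}))}{W_{T-1}}.
\]
Theorem~\ref{thm:Ct} handles the second summand with a $\rho_{T-1}\|G_{T-1}\|$ bound. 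The main obstacle (and the heart of the argument) is to bound $|1-\teleConstant_T|$ directly by $\|G_{T-1}\|$ rather than by the looser $|\hat W_{T-1} - W_{T-1}|/W_{T-1}$. I will observe that $\teleConstant_{T-1}\hat W_{T-1} = \E_{g_{T-1}}[o_{T-1}]$ and $W_{T-1} = \E_{p_{T-1}}[o_{T-1}]$, so the reproducing property (using $o_{T-1}\in\H_{T-1}$) gives
\[
W_{T-1}(1-\teleConstant_T) = \langle o_{T-1},\, \mu(p_{T-1}) - \mu(g_{T-1})\rangle_{\H_{T-1}} = -\langle o_{T-1}, G_{T-1}\rangle_{\H_{T-1}},
\]
hence $|1-\teleConstant_T| \leq \|o_{T-1}\|_{\H_{T-1}}\|G_{T-1}\|_{\H_{T-1}}/W_{T-1}$. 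Combined with $\|\mu(\tilde p_T)\|_{\H_T}\leq R$ (from the uniform bound on $\Phi$), this delivers $\|\mu(\tilde p_T) - \mu(p_T)\|_{\H_T} \leq C_{T-1}\|G_{T-1}\|_{\H_{T-1}}$, and substituting the iterated bound on $\|G_{T-1}\|$ completes the proof.
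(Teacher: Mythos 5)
Your proof is correct, and it is essentially the paper's own argument in a rescaled parametrization: your $g_t = \teleConstant_t \hat p_t$ equals $\hat q_t/Z_{t-1}$ where $\hat q_t = \hat Z_{t-1}\hat p_t$ is the un-normalized predictive the paper recurses on, so your clean linear recursion $\|G_t\| \leq \rho_{t-1}\|G_{t-1}\| + \teleConstant_t\hat\epsilon_t$ is the paper's un-normalized recursion divided by $Z_{t-1}$, and your final decomposition (FW error, plus a normalization term bounded via $R$ and $\langle o_{T-1}, G_{T-1}\rangle$ by Cauchy--Schwarz, plus a propagation term bounded via Theorem~\ref{thm:Ct}) matches the paper's terms (I)--(III) exactly. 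The only difference is presentational---you derive an exact measure-level identity and take norms once, where the paper applies the triangle inequality to the MMD and back-substitutes $\epsilon_u = \hat Z_{u-1}\hat\epsilon_u$ at the end---so the two proofs are in one-to-one correspondence.
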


We note that $\teleConstant_t \approx 1$ as we expect the errors on $W_k$ to go in either direction, and thus to cancel each other over time (though in the worst case it could grow exponentially in $t$). If $\hat{\epsilon}_t \leq \epsilon$ and $\rho_t \leq \rho$, we basically have $\| \mu(\hat{p}_T) - \mu(p_T) \| = O(\rho^T \epsilon)$ if $\rho > 1$; $O(T \epsilon)$ if $\rho = 1$; and $O(\epsilon)$ if $\rho < 1$ (a contraction). The exponential dependence in $T$ is similar as for a standard particle filter for general distributions; see~\citet{douc2014stability} though for conditions to get a contraction for the PF.

Importantly, for a fixed $T$ it follows that the rates of convergence for Frank-Wolfe in $N$ translates to rates of errors for integrals of functions in $\H$ with respect to the predictive distribution $p_T$. Thus if we suppose that $\H$ is finite dimensional, that $p_t$ has full support on $\X$ for all $t$ and that the kernel $\kernel$ is continuous, then by Proposition~1 in~\citet{Bach2012}, we have that the faster rates for Frank-Wolfe hold and in particular we could obtain an error bound of $O(1/N)$ with $N$ particles. As far as we know, this is the first explicit faster rates of convergence as a function of the number of particles than the standard $O(\frac{1}{\sqrt{N}})$ for Monte Carlo particle filters. In contrast, \citet[Theorem 7]{gerber2014sqmc} showed a $o(\frac{1}{\sqrt{N}})$ rate for the randomized version of their SQMC algorithm (note the little-o).\footnote{The rate holds on the approximation of integrals of continuous bounded functions.} Note that the theorem does not depend on how the error of $\epsilon$ is obtained on the mean maps of the distribution; and so if one could show that a QMC point set could also achieve a faster rate for the error on the \emph{mean maps} (rather than on the distributions itself as is usually given), then their rates would translate also to the global rate by Theorem~\ref{thm:rate}.\footnote{We also note that a simple computation shows that for a Monte Carlo sample of size $N$, $\E \| \mu(\hat{p}) - \mu(p) \|_\H^2 \leq \frac{(R^2- \| \mu(p) \|^2)}{N}$.}

\vspace{-1mm}
\section{Experiments}

 \begin{figure}[tb]
   \centering
   \vspace{-2mm}   
   \includegraphics[width = 0.80\columnwidth]{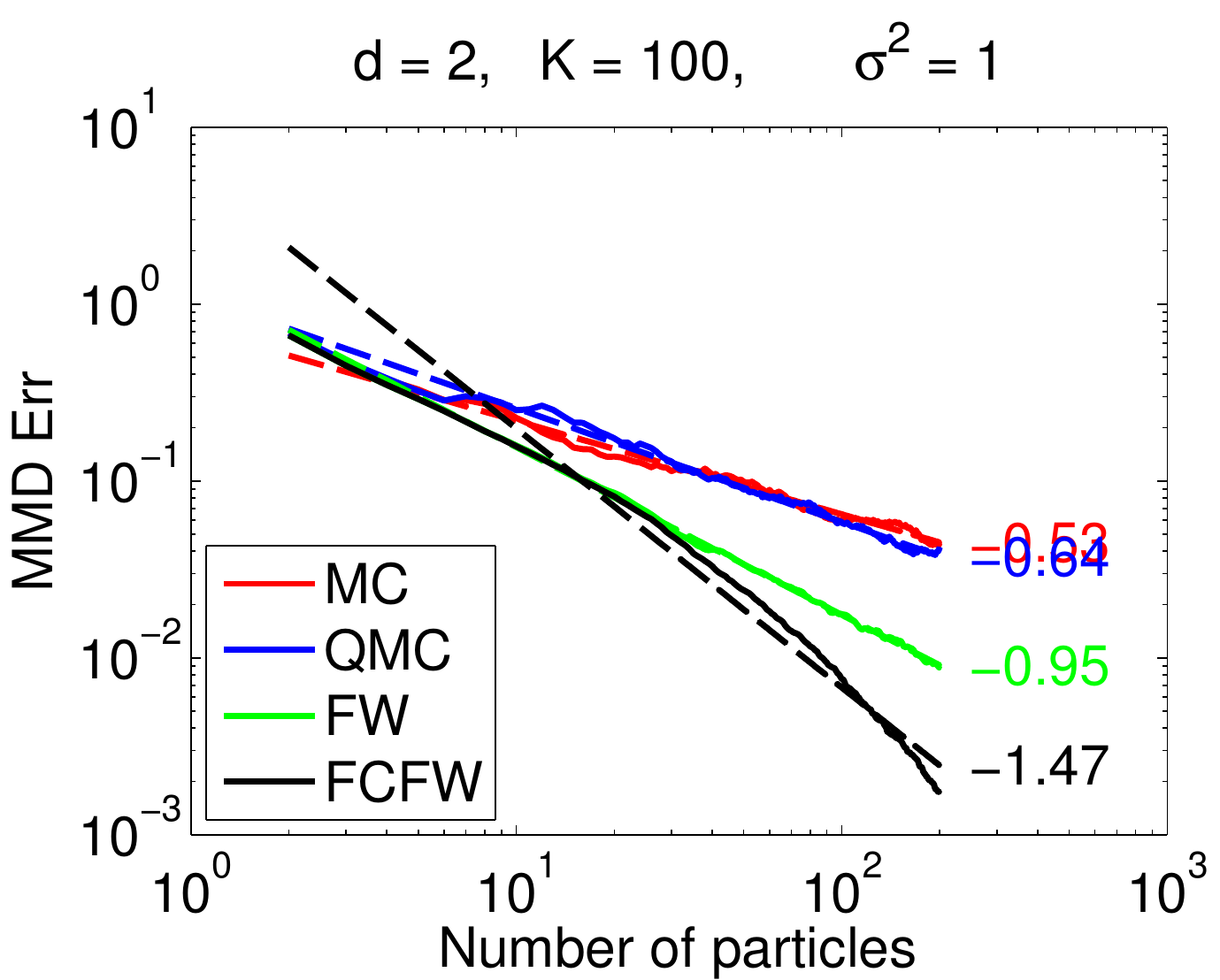} 
   \includegraphics[width = 0.80\columnwidth]{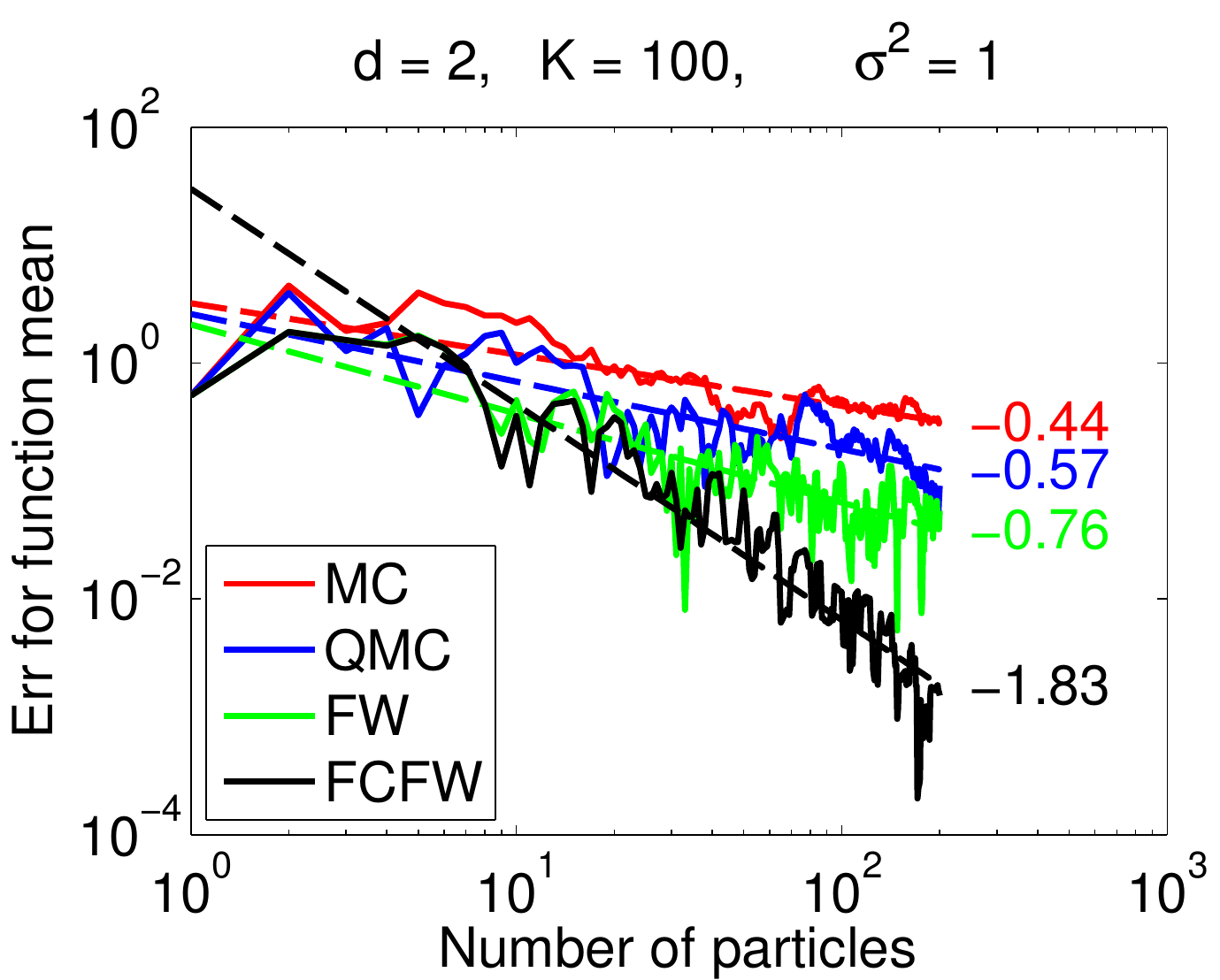}%
   \caption{Top: MMD error for different sampling schemes where $p$ is a mixture of 2d Gaussians with $K=100$ components. Bottom: error on the mean estimate for the same mixture. The dashed lines are linear fits with slopes reported next to the axes.\vspace{-2mm}\mbox{}}
   \label{fig:expts:gmm}
 \end{figure}

\vspace{-2mm}
\subsection{Sampling from a mixture of Gaussians} \label{exp:MoG}
We start by investigating the merits of different sampling schemes
for approximating mixtures of Gaussians, since this is an intrinsic step to the SKH algorithm. In Figure~\ref{fig:expts:gmm}, we give the $\MMD$ error as well as the error on the mean function in term of the number of particles $N$ for the different sampling schemes on a randomly chosen mixture of Gaussians with $K=100$ components in $d=2$ dimensions. Additional results as well as the details of the model are given in Appendix~\ref{app:MoG} of the supplementary material. In our experiments, the number of FW search points is $M=50,\!000$. We note that even though in theory all methods should have the same rate of convergence $O(1/\sqrt{N})$ for the $\MMD$ (as $\H$ is infinite dimensional), FCFW empirically improves significantly over the other methods. As $d$ increases, the difference between the methods tapers off for a fixed kernel bandwidth $\sigma^2$, but increasing $\sigma^2$ gives better results for FW and FCFW than the other schemes.

In the remaining sections, we evaluate empirically the application of kernel herding in a filtering context using the proposed SKH algorithm. 

\subsection{Particle filtering using SKH on synthetic examples}
\vspace{-1mm}
We consider first several synthetic data sets in order to assess
the improvements offered by Frank-Wolfe quadrature over standard Monte Carlo and quasi-Monte-Carlo techniques.
We generate data from four different systems
(further details on the experimental setup can be found in Appendix~\ref{app:synthetic}):
\vspace{-1.5mm}
\begin{description}[leftmargin = 2em, labelindent=0pt]
\item\vspace{-0.5mm}\textbf{Two linear Gaussian state-space (LGSS)} models of dimensions $d=3$ and $d=15$, respectively. 
\item\vspace{-0.5mm}\textbf{A jump Markov linear system (JMLS)}, consisting of 2 interacting LGSS models of dimension $d=2$.
  The switching between the models
  is governed by a \emph{hidden} 2-state Markov chain.
\item\vspace{-0.5mm}\textbf{A nonlinear benchmark} time-series model used by, among others, \cite{DoucetGA:2000,GordonSS:1993}. The model is
  of dimension $d=1$ and is given by: \\
  \vspace{-5mm}
  \begin{align*}
    x_{t+1} &= 0.5 x_t + 25 \frac{x_t}{1+x_t^2} + 8\cos(1.2t) + v_t,  \\ 
    y_t &= 0.05 x_t^2 + e_t,
  \end{align*}
  with $v_t$ and $e_t$ mutually independent standard Gaussian.
\end{description} \vspace{-2mm}
These models are ordered in increasing levels of difficulty for inference. For the LGSS models, the exact filtering distributions can be computed by a Kalman filter.
For the JMLS, this is also possible by running a mixture of Kalman filters, albeit at a computational
cost of $2^T$ (where $T$ is the total number of time steps). For the nonlinear system,
no closed form expressions are available for the filtering densities; instead we run a PF
with $\Np = 100,\!000$ particles as a reference.

We generate $30$ batches of observations for $\T=100$ time steps from all systems, except for the JMLS where we use $T=10$
(to allow exact filtering). We
run the proposed SKH filter, using both FW and FCFW optimization and compare against
a bootstrap PF (using stratified resampling \citep{CarpenterCF:1999}) and a quasi-Monte-Carlo PF based on
a Sobol-sequence point-set. All methods are run with $\Np$ varying from $20$ to $200$ particles.
We deliberately use rather few particles since, as discussed above, we believe
that this is the setting when the proposed method can be particularly useful.

\begin{figure*}[!t]
  \vspace{-2.5mm}
  \centering
  \includegraphics[width = 0.32\textwidth]{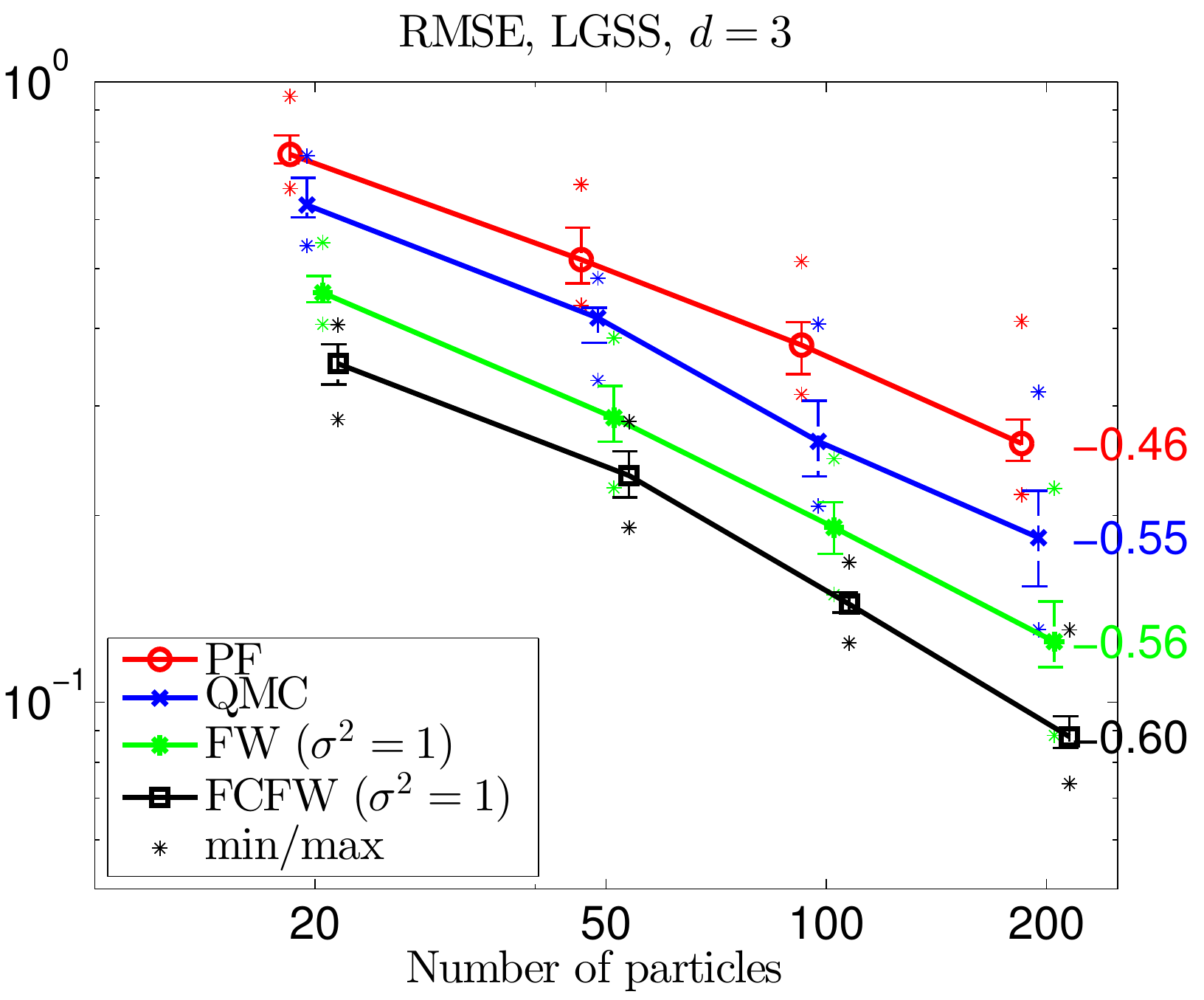} 
  \includegraphics[width = 0.32\textwidth]{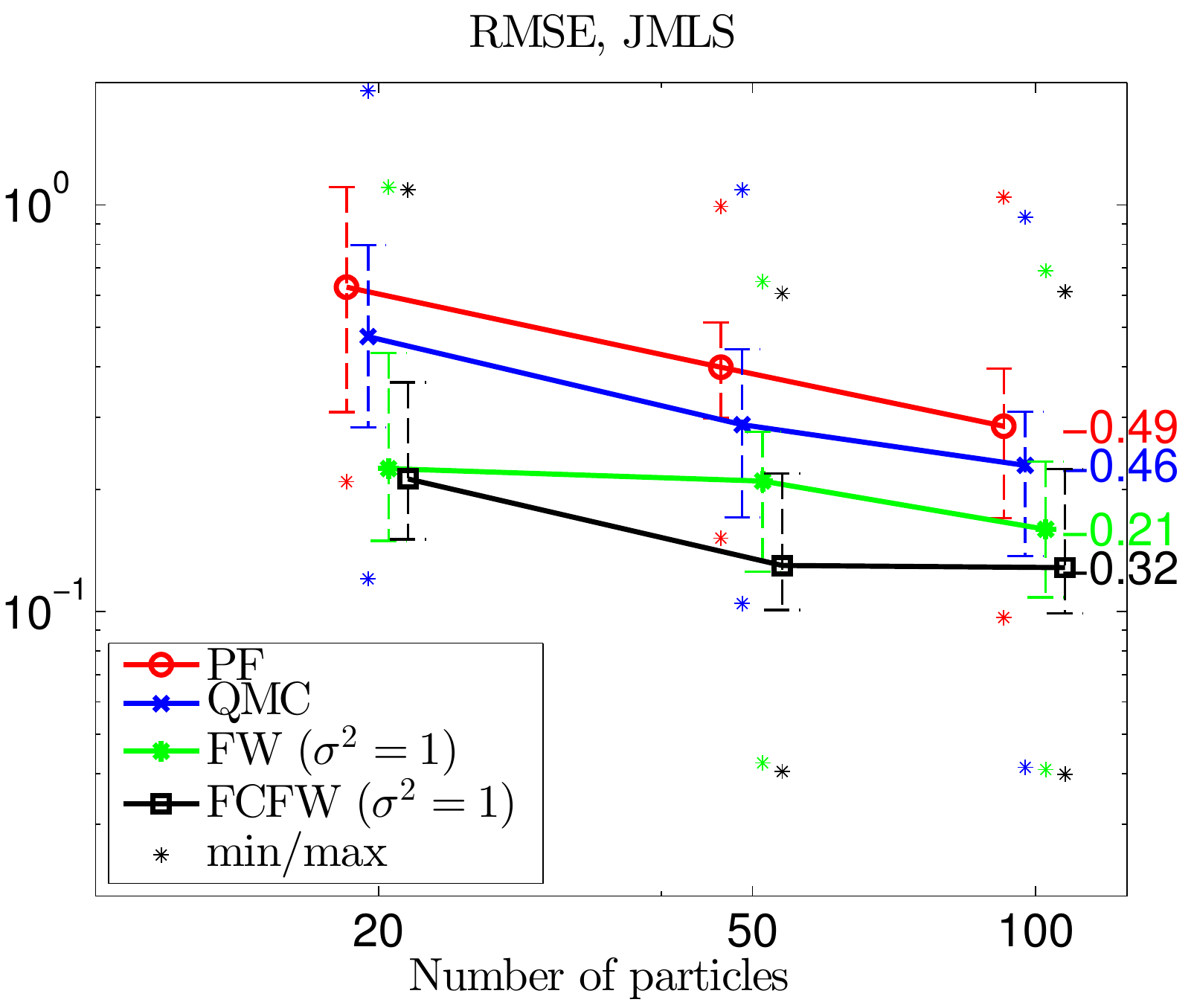} 
  \includegraphics[width = 0.32\textwidth]{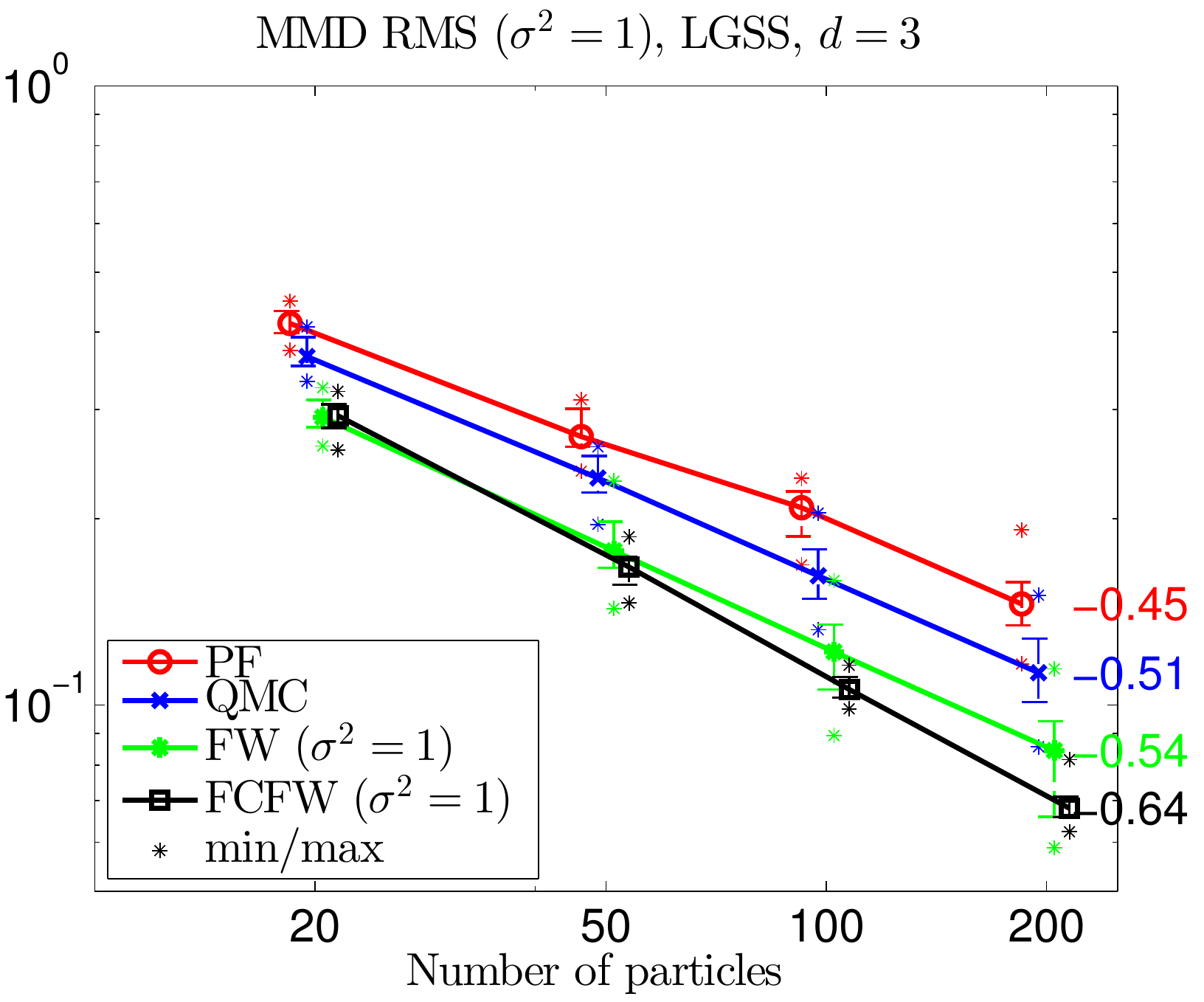}\\%
  \includegraphics[width = 0.32\textwidth]{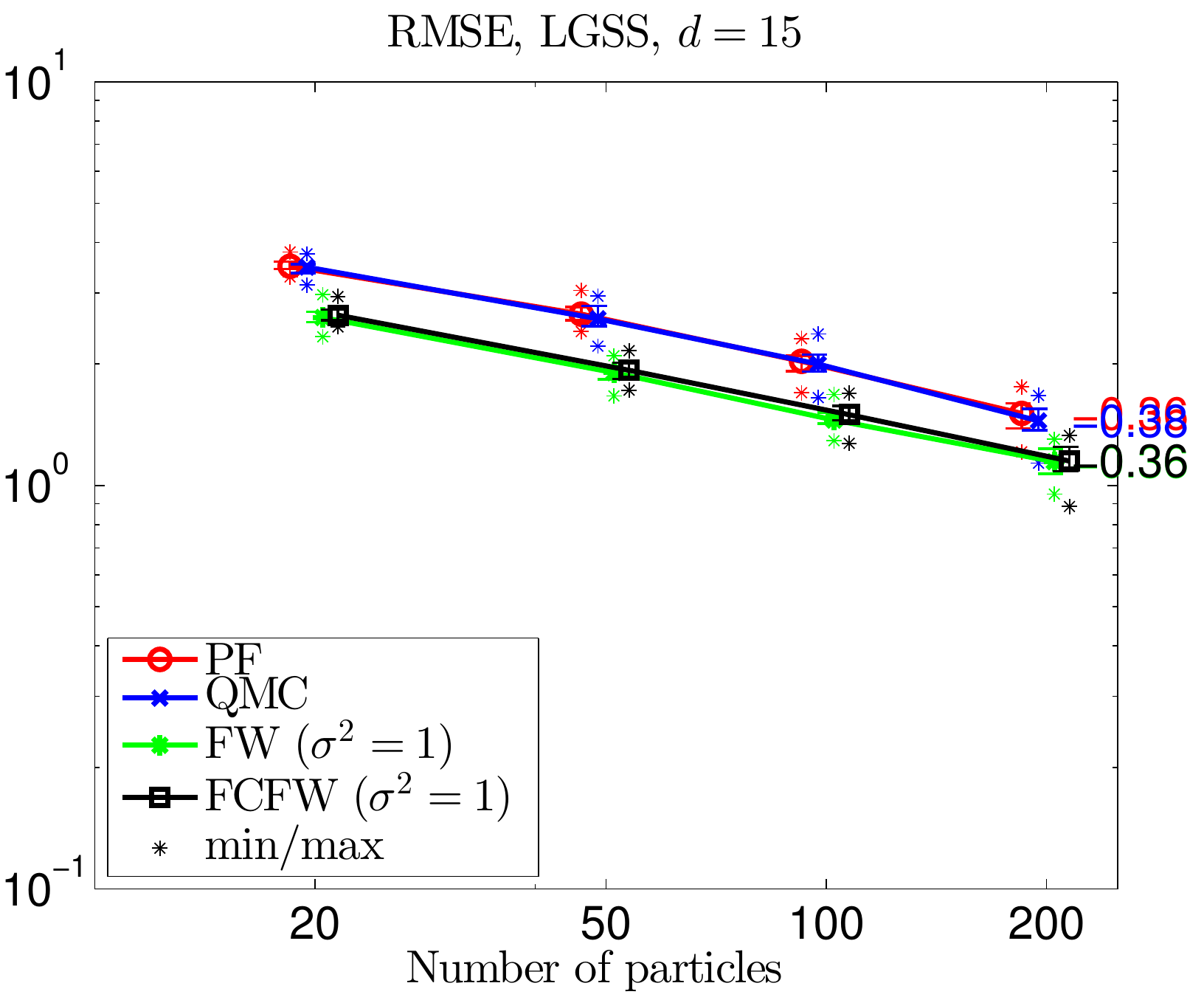} 
  \includegraphics[width = 0.32\textwidth]{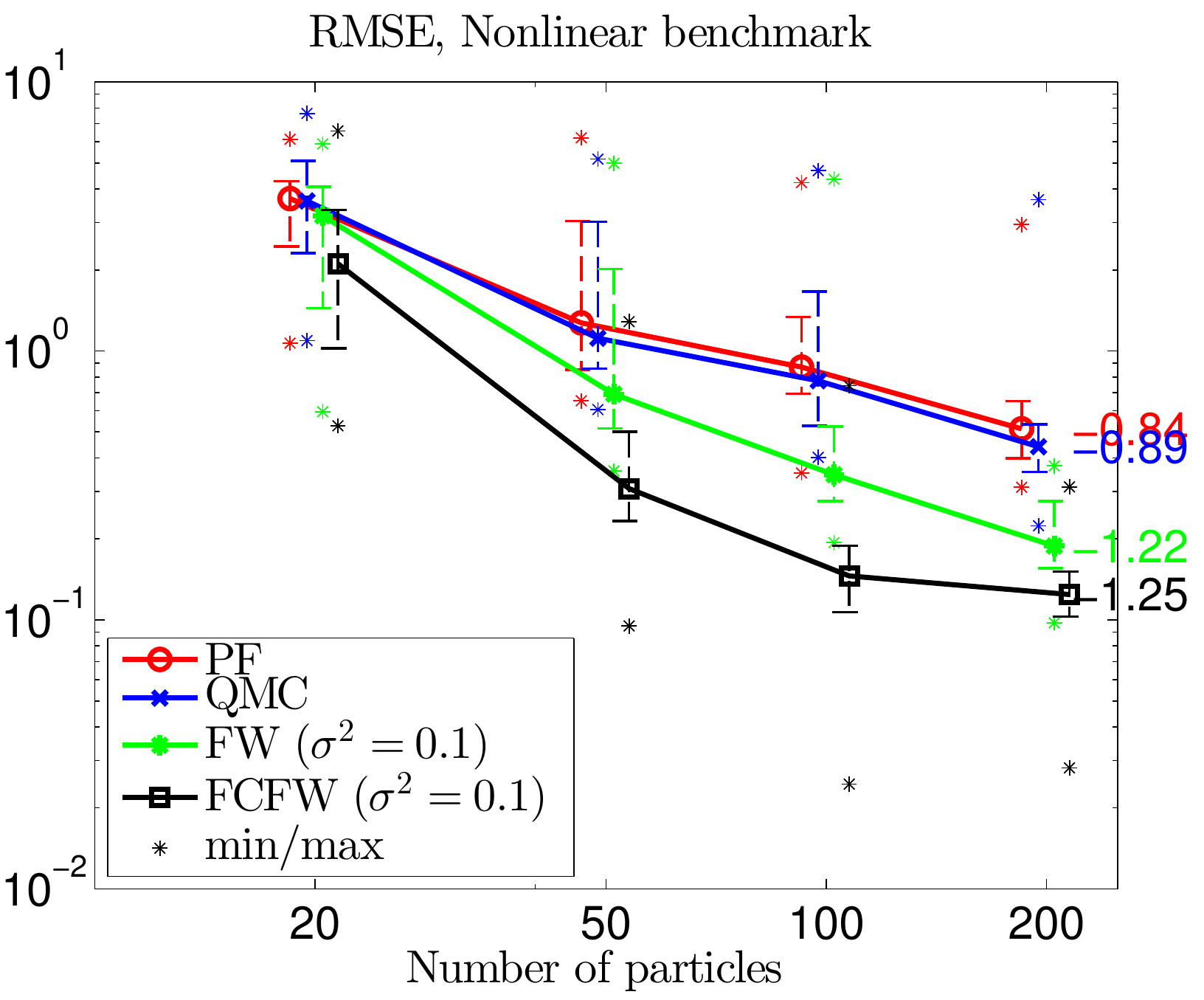} 
  \includegraphics[width = 0.32\textwidth]{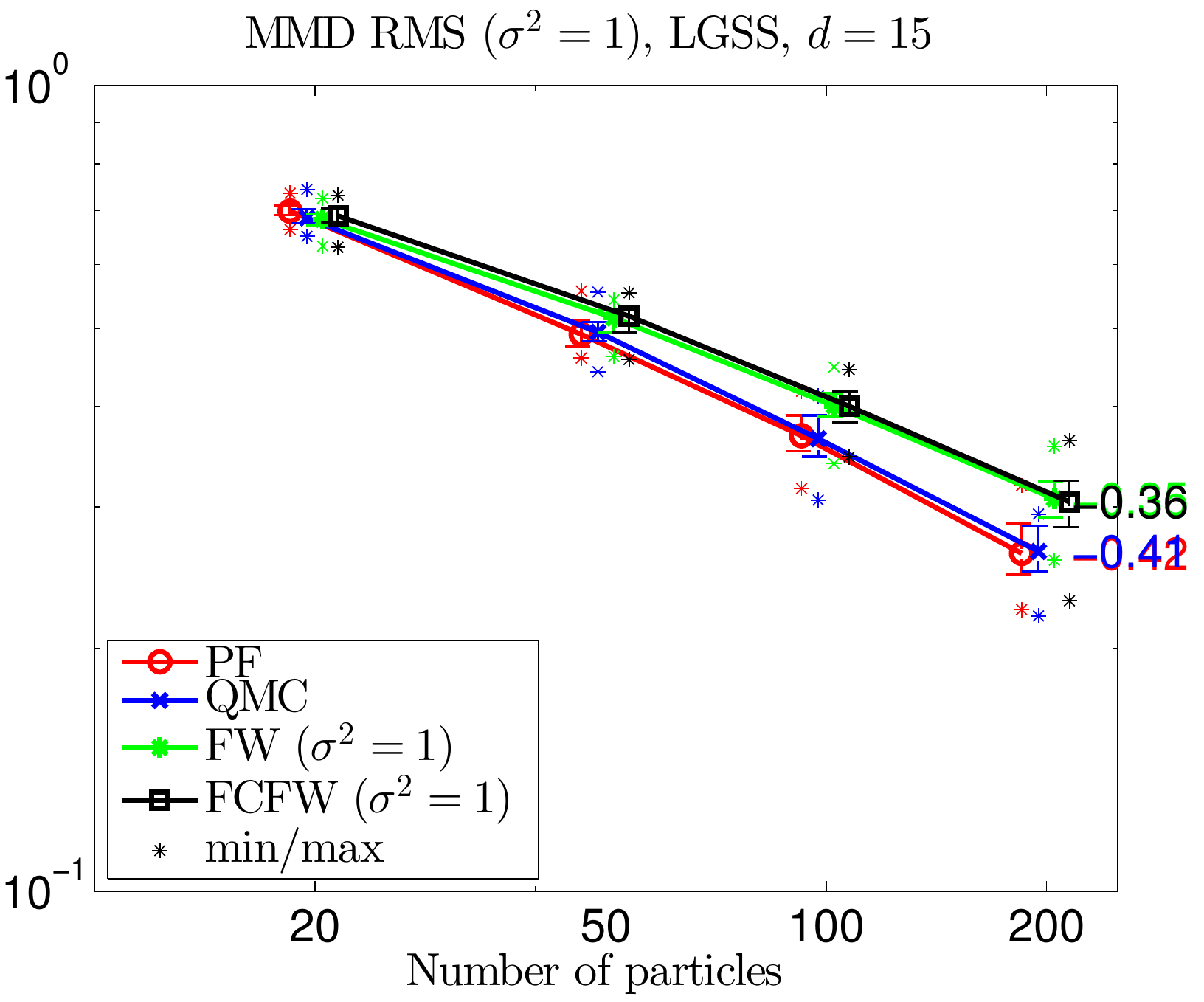}
  \caption{RMSEs (left and middle columns) for the four considered models and MMDs (right column) for the two LGSS models.}
  \label{fig:expts:syn}
\end{figure*}

To assess the performances of the different methods, we first compute the root-mean-squared errors (RMSE)
for the filtered mean-state-estimates over the~$T$ time steps, \wrt the reference filters. We report the median RMSEs
over the 30 \emph{different} data batches, along with the $25\%$ and $75\%$ quantiles, and the minimum and maximum values
in Figure~\ref{fig:expts:syn}.
The SKH algorithms were run for three different values of $\sigma^2\in\{0.01, 0.1, 1\}$.
Here, we report the results for $\sigma^2 = 1$ for the LGSS models and the JMLS, and for $\sigma^2=0.1$ for the nonlinear
benchmark model. The results for the other values are given in Appendix~\ref{app:synthetic}. The improvements
are somewhat robust to the value of $\sigma^2$, but in some cases significant
differences were observed. As can be seen, both SKH methods improve significantly
upon both QMC and the bootstrap~PF.
For the two LGSS models, we also compute the MMD
(reported in the rightmost column in Figure~\ref{fig:expts:syn}).
\subsection{Vision-based UAV Localization}\label{sec:uav}
In this section, we apply the proposed SKH algorithm to solve a filtering problem
in field robotics. We use the data and the experimental
setup described by~\citet{TornqvistSKG:2009}. %
The problem consists of estimating the full six-dimensional pose of an unmanned
aerial vehicle (UAV). 

\citet{TornqvistSKG:2009} proposed a vision-based solution,
essentially tracking interest points in the camera images over consecutive frames to estimate
the ego-motion.
This information is then fused with the inertial and barometer sensors
to estimate the pose of the UAV. The system is modelled on state-space
form, with a state vector comprising the position, velocity, acceleration, as well as the
orientation and the angular velocity of the UAV. The state is also augmented with
sensor biases, resulting in a state dimension of 22. Furthermore,
the state is augmented with the three-dimensional positions of the interest points
that are currently tracked by the vision system; this is a varying number but typically around ten.

To deal with the high-dimensional state-vector, \citet{TornqvistSKG:2009}
used a Rao-Blackwellized PF (see Appendix~\ref{sec:RB}) to solve the filtering problem, marginalizing all but 6 state components
(being the pose, \ie, the position and orientation) using a combination of Kalman
filters and extended Kalman filters. The remaining 6 state-variables were tracked using a bootstrap
particle filter with $\Np = 200$ particles; the strikingly small number of particles owing to the computational
complexity of the likelihood evaluation.

\begin{figure*}[!t]
\vspace{-3mm}
  \centering
  \includegraphics[width = 0.30\textwidth]{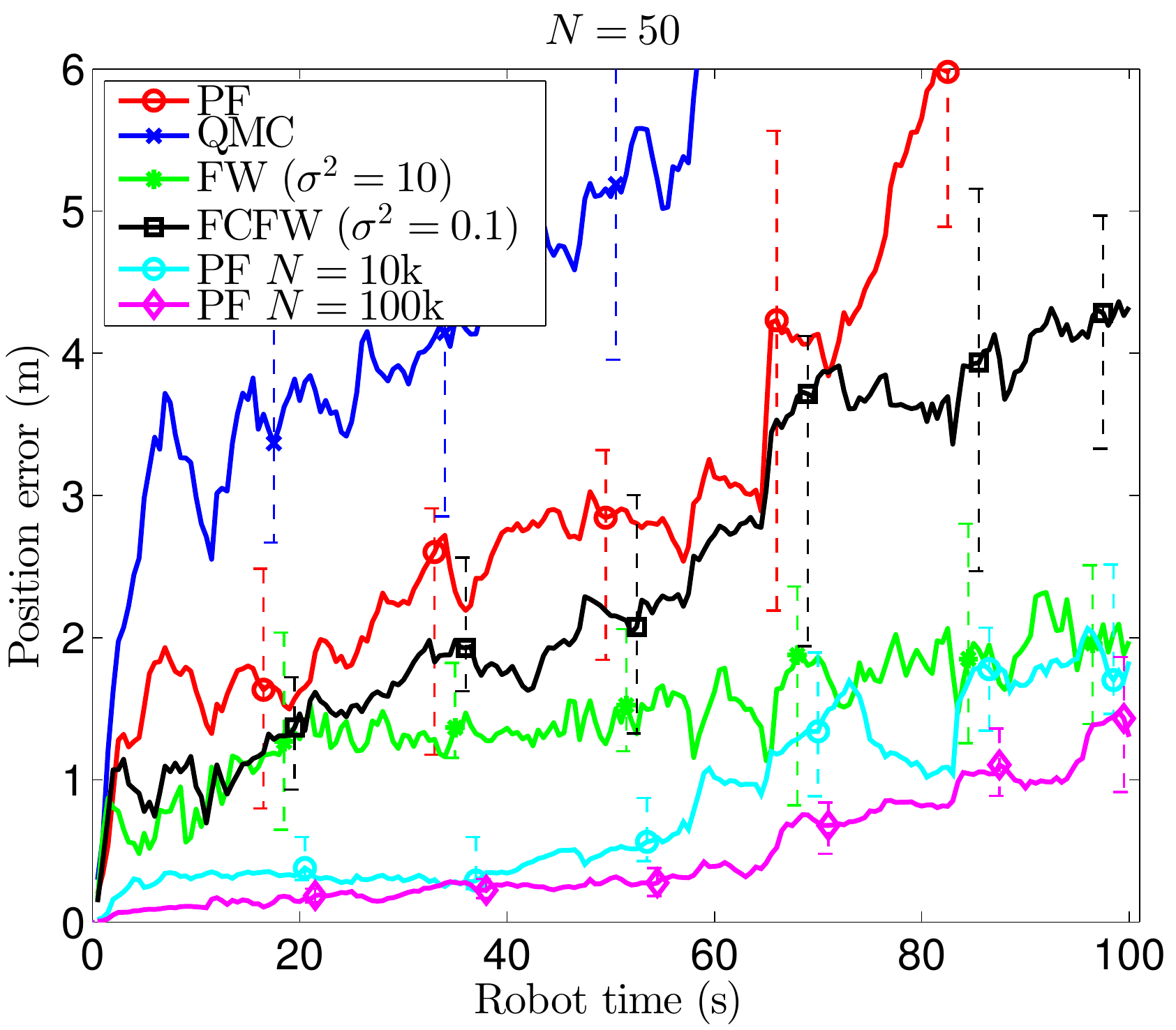} \quad
  \includegraphics[width = 0.30\textwidth]{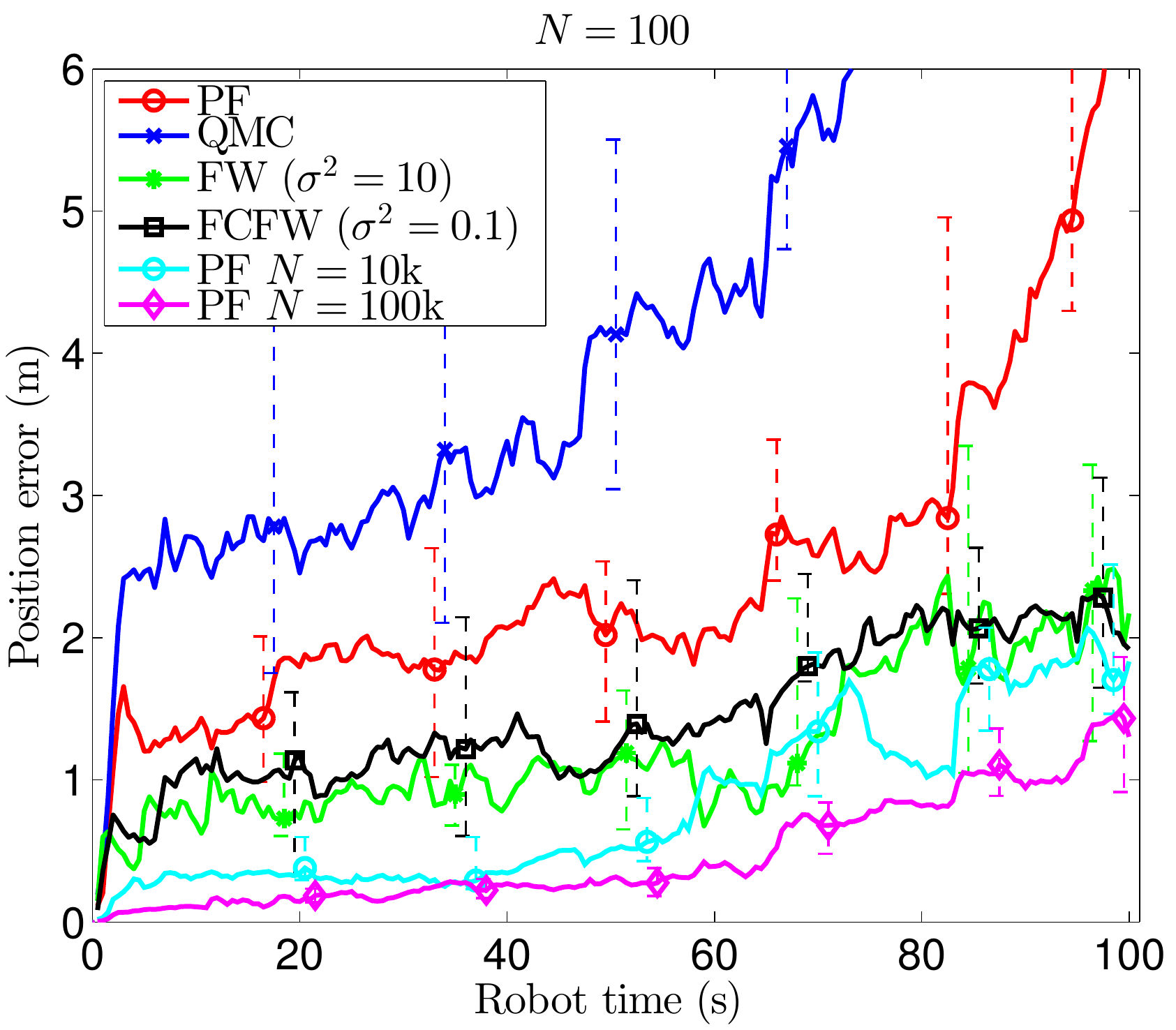} \quad
  \includegraphics[width = 0.30\textwidth]{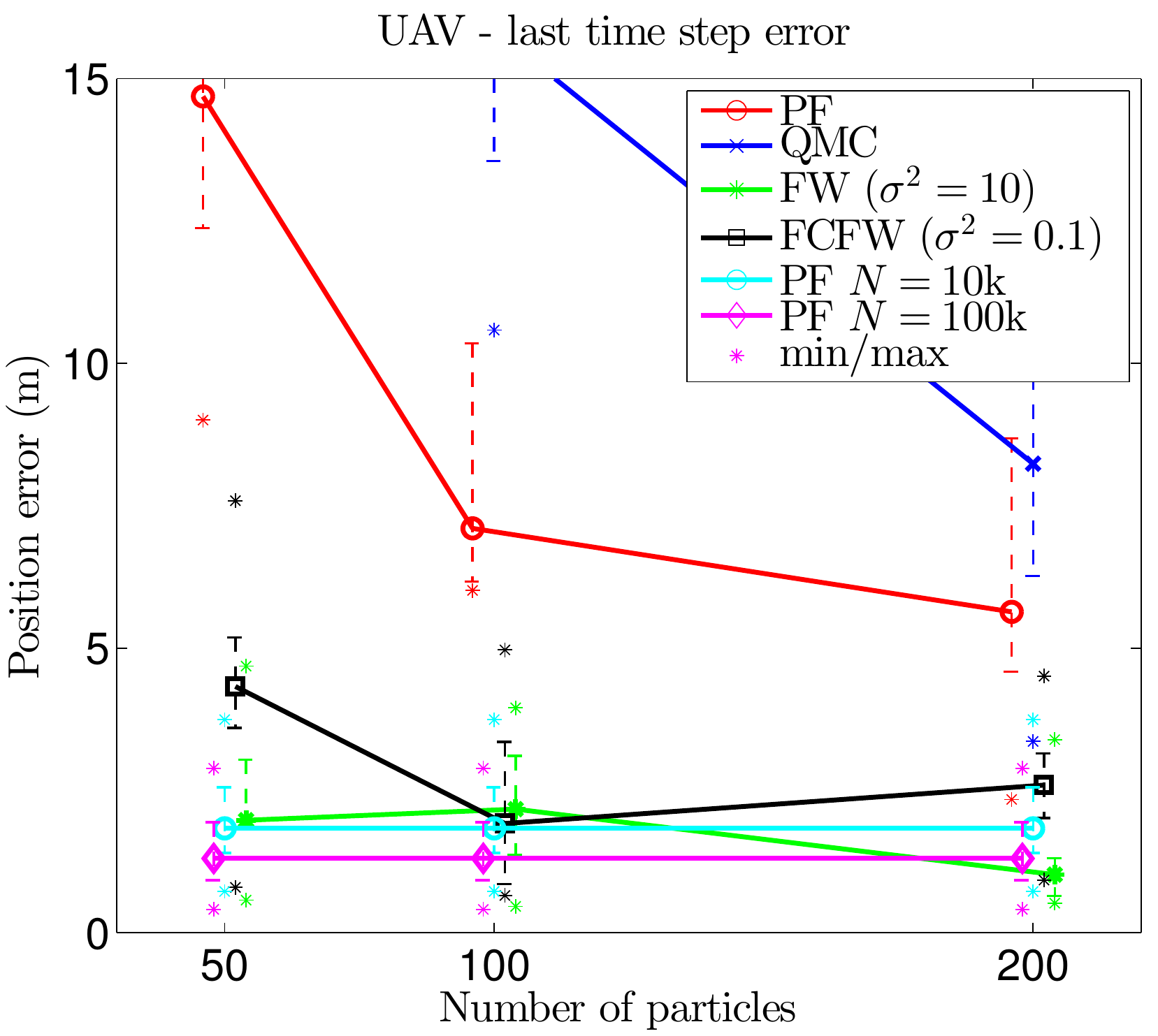}
  \caption{Median of position errors over 10 runs for each method. %
    The errors are computed relative to the mean prediction over 10 runs of a PF with 100k particles (the variation of the reference PF is also shown for PF 100k). The error bars represent the [25\%, 75\%] quantile. The rightmost plot shows the error at the last time step as a function of $\Np$. 100~s of robot time represents $2,\!000$ \ssm time steps, it \emph{does not} correspond to computation time.\vspace{-2mm}\mbox{}}
  \label{fig:uav:errors}
\end{figure*}

For the current experiment, we obtained the code and the flight-test data from
\citet{TornqvistSKG:2009}. The modularity of our approach allowed us
to simply replace the Monte Carlo simulation step within their setup with FW-Quad.
We ran SKH-FW with $\sigma^2 =10$ and SKH-FCFW with $\sigma^2 = 0.1$, as well as the bootstrap PF used in
\citet{TornqvistSKG:2009}, and a QMC-PF; all methods using $\Np = 50$, 100, and 200 particles.
We ran all methods 10 times on the same data; the variation in SKH coming from the random search points for
the FW procedure, and in QMC for starting the Sobol sequence at different points.
For comparison, we ran 10 times a reference PF with $\Np=100,\!000$ particles
and averaged the results. The median position errors for 100 seconds of robot time (there are 20 \ssm time steps per second of robot time)
are given in Figure~\ref{fig:uav:errors}.
The UAV is assumed to start at a known location at time zero, hence, all the errors are zero initially.
Note that all methods accumulate errors over time. This is natural, since there
is no absolute position reference available (\ie, the filter is unstable)
and the objective is basically to keep the error as small as possible for as long time as possible.
SKH-FW here gives the overall best results, with significant improvements over the bootstrap PF and the
QMC methods for small number of particles. SKH-FW even gives similar errors for the last time step with only $N=200$ particles as one of the \emph{reference} PFs (using $\Np = 100,\!000$ particles). See Appendix~\ref{app:FCFW} for a discussion of the role of $\sigma^2$ for FCFW.

\vspace{-3mm}
\paragraph{Runtimes.} In these experiments, we focused on investigating how optimization could improve the error per particle, as the gain in runtime depends on the exact implementation as well as the likelihood evaluation cost. We note that the FW-Quad algorithm scales as $O(N M)$ for $N$ samples and $M$ search points when using FW, by updating the objective on the $M$ search points in an online fashion (we also empirically observed this linear scaling in $N$). On the other hand, FCFW scales as $O(N^2 M)$ as the weights on the particles possibly change at each iteration, preventing the same online trick. SKH scales linearly with the number of time steps $T$ (as a standard PF). For the UAV application, the original Matlab code from~\citet{TornqvistSKG:2009} spent an average of 0.2~s per time step for $N=50$ particles (linear in the number of particles as the likelihood evaluation is the bottleneck) on a XEON E5-2620 2.10 GHz PC. The overhead of using our Matlab implementation of FW-Quad with $N=50$ is about 0.15~s per time step for FW and 0.3~s for FCFW; and 0.3~s for FW and 1.0~s for FCFW for $N=100$ (we used $M=10,\!000$ search points in this experiment). In practice, this means that SKH-FW can be run here with 50 particles in the same time as the standard PF is run with about 90 particles. But as Figure~\ref{fig:uav:errors} shows, the error for SKH-FW with 50 particles is still much lower than the PF with 200 particles.

\section{Conclusion}
\vspace{-3mm}
We have developed a method for Bayesian filtering problems using a combination
of optimization and particle filtering. The method has been demonstrated
to provide improved performance over both random sampling and quasi-Monte Carlo methods.
The proposed method is modular and it can be used with different
types of particle filtering techniques, such as the Rao-Blackwellized particle filter.
Further investigating this possibility for other classes of particle filters
is a topic for future work. Future work also includes a deeper analysis of the
convergence theory for the method in order to develop practical guidelines for the
choice of the kernel bandwidth.

\clearpage

\subsubsection*{Acknowledgements} We thank Eric Moulines for useful discussions. This work was partially supported by the MSR-Inria Joint Centre, a grant by the European Research Council (SIERRA project 239993) and by the Swedish Research Council (project \emph{Learning of complex dynamical systems} number 637-2014-466).

\bibliography{skh_refs}
\bibliographystyle{abbrvnat} %

\clearpage

\appendix

\numberwithin{definition}{section}
\numberwithin{algorithm}{section}

\onecolumn

{
  \linewidth\hsize \toptitlebar 
  {\centering
  {\Large\bf Supplementary material \par}}
 \bottomtitlebar 
}

\section{Extension for Rao-Blackwellization} \label{sec:RB}
A common strategy for improving the efficiency of the PF is to make use of Rao-Blackwellization---this idea can be used also with SKH.
Rao-Blackwellization, here, refers to analytically marginalizing some conditionally tractable component of the state vector
and thereby reducing the dimensionality of the space on which the PF operates. Assume that the state of the system is comprised of
two components~$x_t$ and~$z_t$, where the filtering density for~$z_t$ is tractable \emph{conditionally} on the history of~$x_{1:t}$.
The typical case is that of a conditionally linear Gaussian system, in which case the aforementioned conditional filtering density
$p(z_t | x_{1:t}, y_{1:t})$ is Gaussian and computable using a Kalman filter (conditionally on $x_{1:t}$). The Rao-Blackwellized PF (RBPF)
exploits this property by factorizing:
\begin{equation}
  \label{eq:rbpf:factorization}
  p(z_t, x_{1:t} | y_{1:t}) = p(z_t | x_{1:t}, y_{1:t}) p(x_{1:t} | y_{1:t}) 
  \approx \sum_{i=1}^N w_t^{(i)} \N(z_t | \widehat z_{t}(x_{1:t}^{(i)}), \Sigma_t(x_{1:t}^{(i)})) \delta_{x_{1:t}^{(i)}}(x_{1:t}),
\end{equation}
where the conditional mean $\widehat z_t(x_{1:t}) := \E[z_t | x_{1:t}, y_{1:t}]$ and covariance matrix
$\Sigma_t(x_{1:t}) := \V(z_t| x_{1:t}, y_{1:t})$
can be computed (for a fixed trajectory $x_{1:t}$) using a Kalman filter.
The mixture approximation follows by plugging in a particle approximation of $ p(x_{1:t} | y_{1:t})$ computed
using a standard PF.   Hence, for a conditionally linear Gaussian model, the RBPF
takes the form of a Mixture Kalman filter; see \citetsup{ChenL:2000}.
Analogously to a standard PF, the SKH procedure allows us to to compute an empirical
point-mass approximation of $p(x_{1:t} | y_{1:t})$
by keeping track of the complete history of the state $x_{1:t}$.
Consequently, by \eqref{eq:rbpf:factorization} it is straightforward to employ Rao-Blackwellization
also for SKH; we use this approach in the numerical example in Section~\ref{sec:uav}.

\section{Rates for SKH when using random search points} \label{app:rateRandomSearch}

In this section, we show that we can get guarantees on the $\MMD$ error of the FW-Quad procedure when approximately finding the FW vertex in step~3 of Algorithm~\ref{alg:FWquad} using exhaustive search through $M$ random samples from $p$. This means that despite not solving step~3 exactly, the SKH procedure with $M$ random search points (under assumptions of Theorem~\ref{thm:rate}) is still consistent as long as $M$ grows to infinity.

The main idea is that the rates of convergence for the Frank-Wolfe optimization procedure still holds when the linear subproblem (step~3) is solved within accuracy of $\delta$. More specifically, if we guarantee that the FW vertex $\bar{g}_{k+1}$ that we use satisfy $\innerProd{J'(g_k)}{\bar{g}_{k+1}} \leq \min_{g \in \M} \innerProd{J'(g_k)}{g} + \delta$ during the algorithm, then the standard $O(1/k)$ rate of convergence for FW carries through but \emph{within $\delta$} of the optimal objective (i.e. up to $J(g^{*}) + \delta$). A simple modification of the argument by~\citet{jaggi13FW} (who used a \emph{shrinking} $\delta$ during the FW algorithm) can show this for the step-size of $\gamma_k = \frac{2}{k+2}$; we give the proofs for the step-size of $\gamma_k = \frac{1}{k+1}$ as well as the potential faster rate $O(1/k^2)$ for the MMD objective in Appendix~\ref{app:FWrates}.

Let $\X_M \subseteq \X$ be the set of $M$ search points, and $p_M$ be the empirical distribution for the $M$ samples from $p$. Let $\delta_M := \|\mu(p_M)-\mu(p)\|_\H$ which can be made small by increasing $M$. Consider the iteration $k$ in FW-Quad where we do exhaustive search on $\X_M$ in step~3. 
 We thus have:
\begin{align*}
\innerProd{g_k-\mu_p}{\Phi(x^{(k+1)})} =
	 \min_{x \in \X_M} \innerProd{g_k-\mu_p}{\Phi(x)}
	& = \min_{x \in \X_M} \innerProd{g_k-\mu(p_M) + \mu(p_M)-\mu(p)}{\Phi(x)} \\
	& \leq \min_{x \in \X_M} \innerProd{g_k-\mu(p_M)}{\Phi(x)} + \delta_M R_M ,
\end{align*}
where $R_M := \max_{x \in \X_M} \|\Phi(x) \|$ ($R_M \leq R$). 
We can thus interpret step~3 as approximately solving (within $\delta_M R_M$) the linear subproblem for the Frank-Wolfe optimization of $J_M(g) := \frac{1}{2} \|g-\mu(p_M)\|_\H^2$ over the marginal polytope of $\X_M$. We thus get a rate of convergence to within $\delta_M R_M$ of $\min_g J_M(g) = 0$. Finally, we have 
$$
\|g_N - \mu(p)\|_\H \leq \|g_N - \mu(p_M)\|_H + \delta_M = \sqrt{2 J_M(g_N)} + \delta_M
\leq \sqrt{2(\epsilon_N + R_M \delta_M)} + \delta_M
$$
where $\epsilon_N$ would be the error after $N$ steps of a standard (non-approximate) Frank-Wolfe procedure (e.g. $O(1/N)$, though it could be $O(1/N^2)$ if $\mu(p_M)$ is in the strict interior of the marginal polytope of $\X_M$ as we show in Appendix~\ref{app:FWrates}). Finally, we know that $\E [\delta_M] \leq R/\sqrt{M}$, and we could also obtain a high probability bound for it as well using a concentration inequality with triangular arrays. This gives the guarantee for the $\MMD$ error of the SKH procedure with $M$ random search points (with a term of $O(1/M^{1/4})$). Even though the rate is slow in $M$, the approach is motivated for problems where the bottleneck is the evaluation of the observation probability (which is only evaluated $N$ times per time step) whereas $M$ can be taken to be very large. We also note that if $\H$ is finite dimensional and the kernel $\kernel$ is continuous, then an asymptotically faster rate of $O(1/\sqrt{M})$ can be shown (see Appendix~\ref{app:FWrates}), though with a worse constant that makes the comparison for smaller $M$ less clear.

\section{Additional details on experiments}

\subsection{Mixture of Gaussians experiment} \label{app:MoG}

 \begin{figure*}[tb]
   \centering 
   \includegraphics[width = 0.32\textwidth]{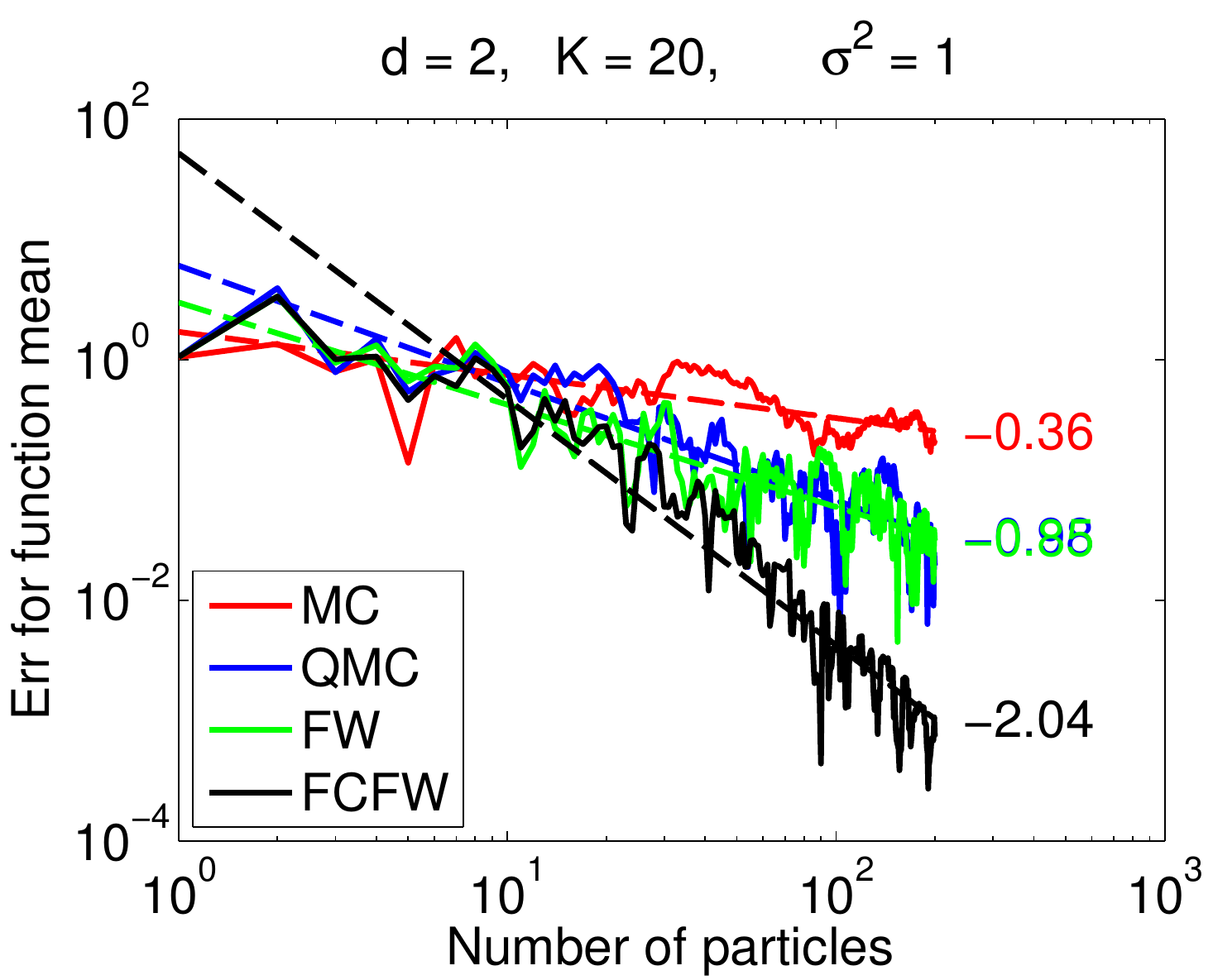} 
   \includegraphics[width = 0.32\textwidth]{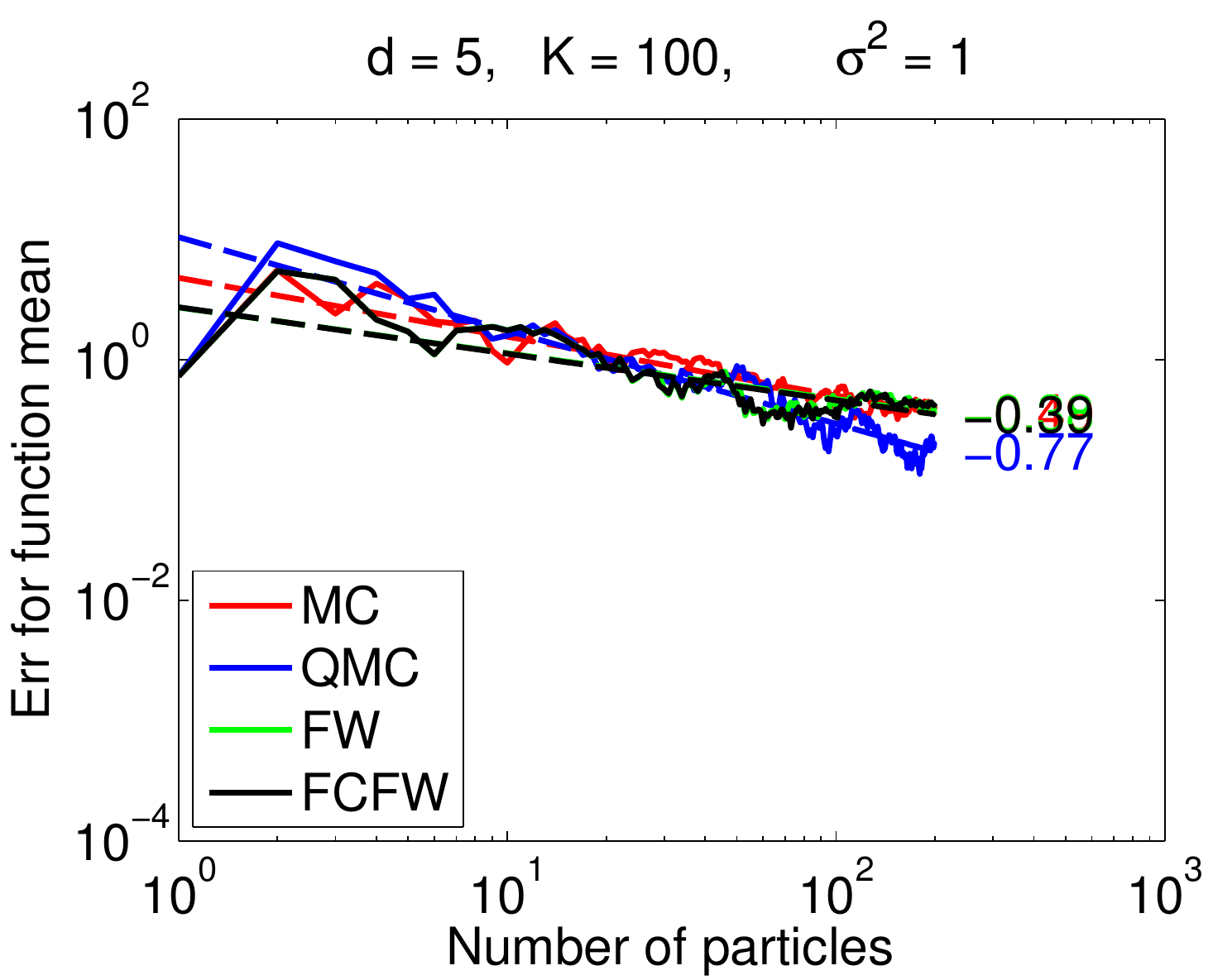} 
   \includegraphics[width = 0.32\textwidth]{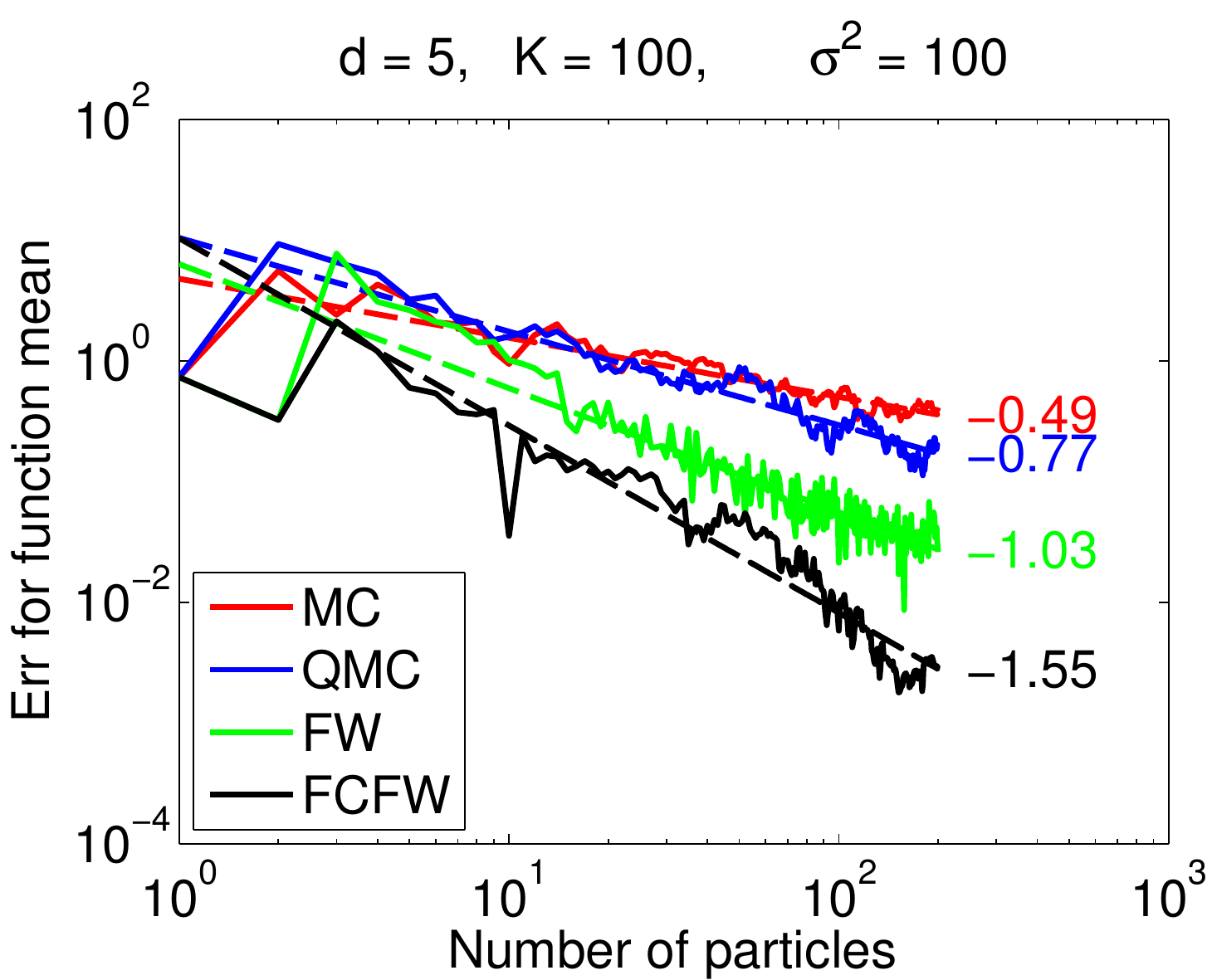}\\
   \vspace{3mm}
   \includegraphics[width = 0.32\textwidth]{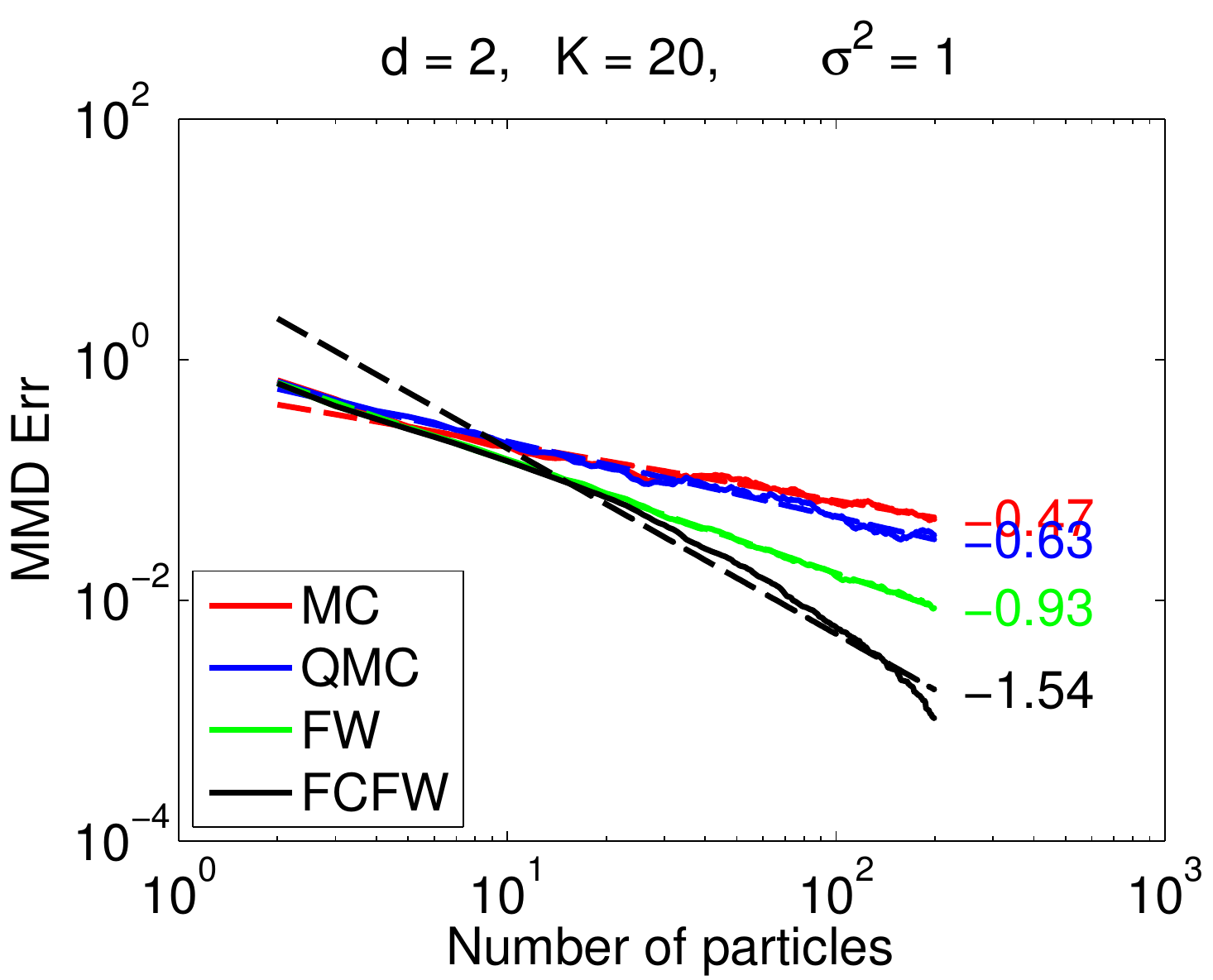} 
   \includegraphics[width = 0.32\textwidth]{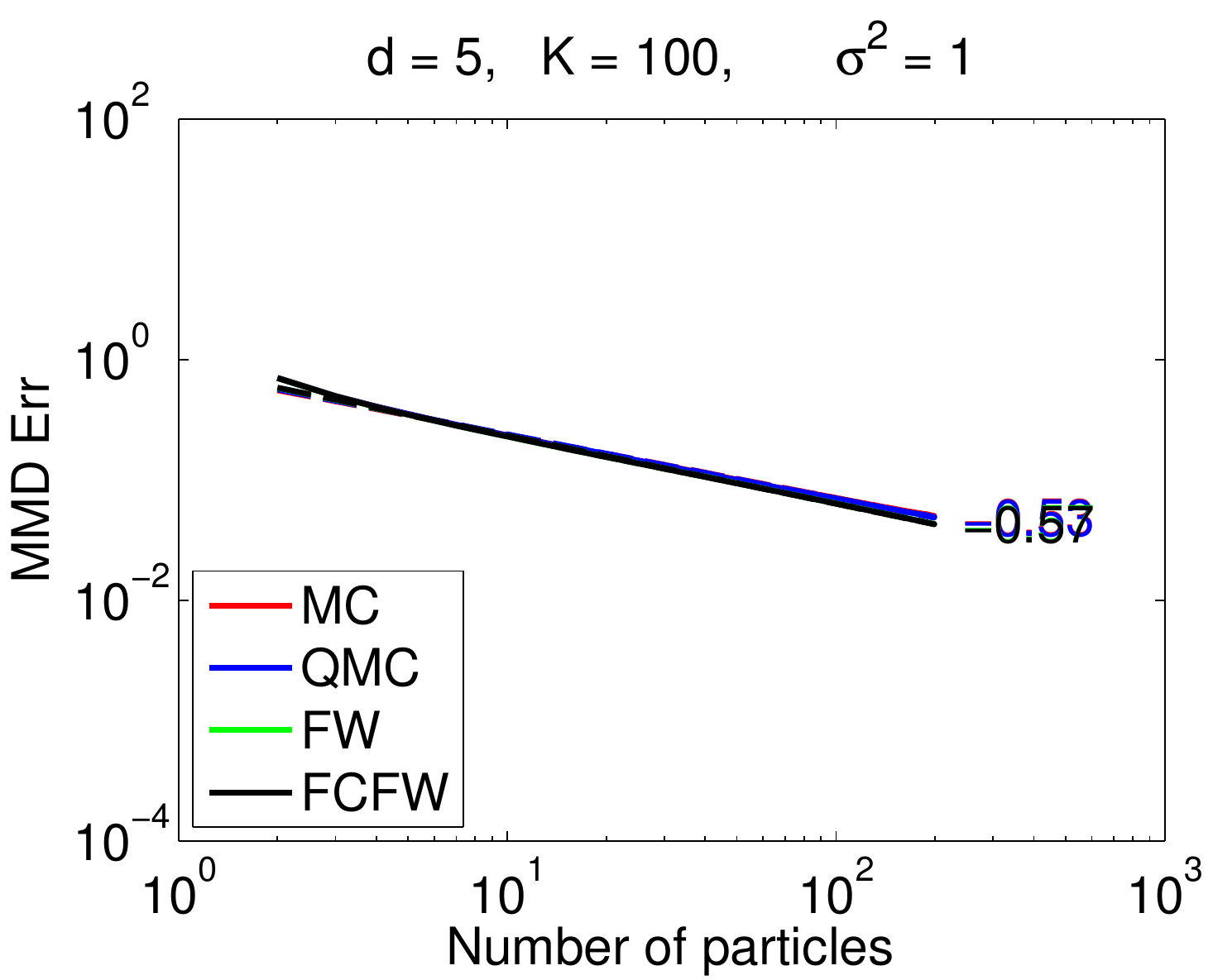} 
   \includegraphics[width = 0.32\textwidth]{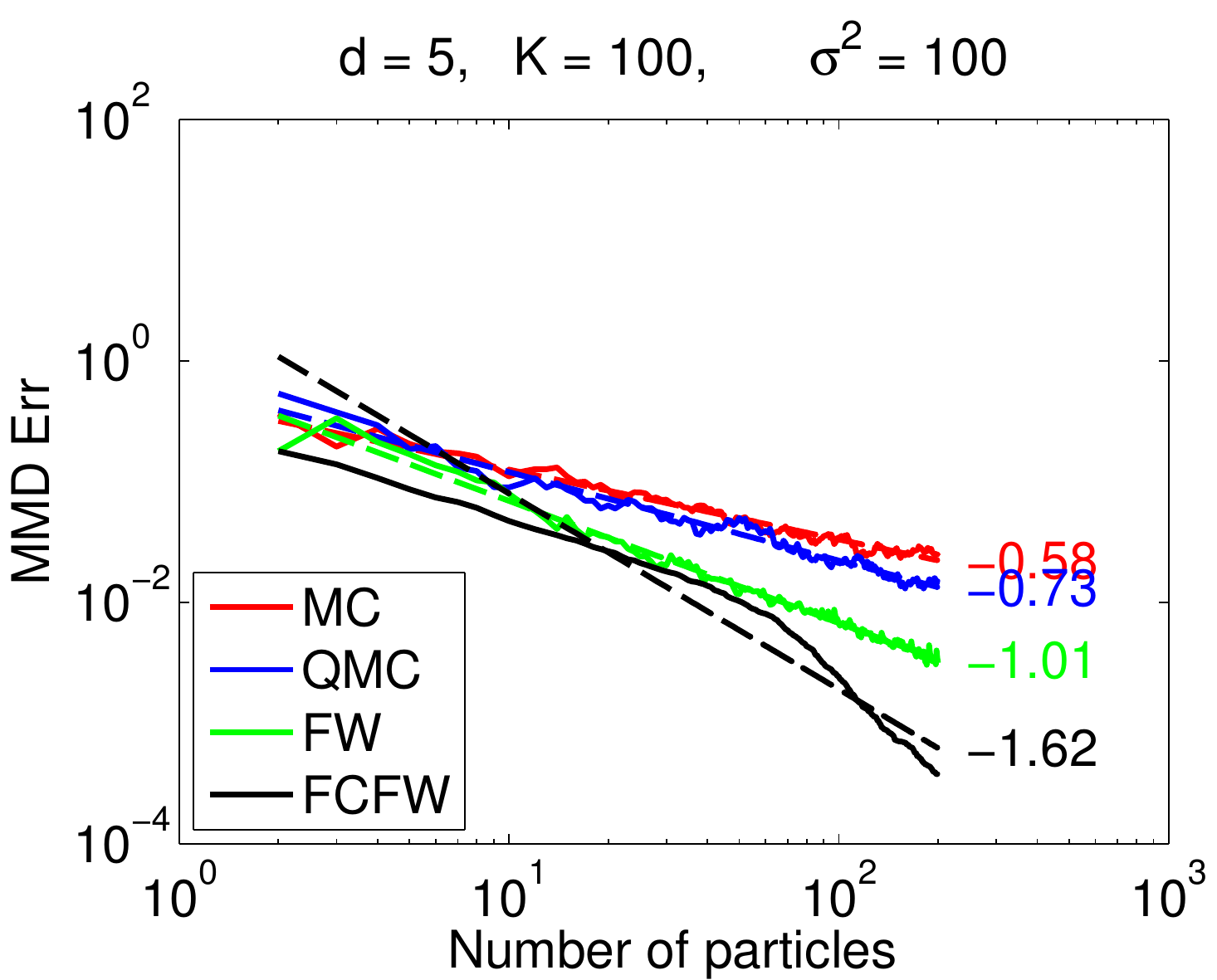}
   \caption{Error on the mean function (top row) and MMD error (bottom row) for the mixture of Gaussians experiment. The first column is for $K=20$ and $d=2$. The next two columns are for the \emph{same} mixture of Gaussians in higher dimension $d=5$ with $K=100$ components, but running FW-Quad with $\sigma^2 = 1$ (middle column) or $\sigma^2 = 100$ (last column). We see that using a higher $\sigma^2$ helps significantly in higher dimension. The dashed lines are linear fits with slopes reported next to the axes.}
   \label{fig:expts:gmm2}
 \end{figure*}

The parameters for the mixture of Gaussians $p(x) = \sum_{i=1}^K \pi_i \N(x | \mu_i, \Sigma_i)$ were randomly sampled as follows:
\begin{itemize}
\item The means $\mu_i$'s are uniformly sampled on $[-5,5]^d$.
\item $\Sigma_i = \sigma_i^2 \idm$ where $\sigma_i^2$ is uniformly sampled on $[0.1, 4.1]$.
\item $\pi_i$ are obtained by normalizing independent uniform random variables.
\end{itemize}

Figure~\ref{fig:expts:gmm2} present additional results for the mixture of Gaussians experiments. From our experiments, we make the following observations:
\begin{list}{\labelitemi}{\leftmargin=1.1em}
   \addtolength{\itemsep}{-.6\baselineskip}
\item[--] FCFW always performs best (this was observed similarly in~\citet{Bach2012} but for other pairs of distribution / kernel).
\item[--] As $d$ increases, the difference between the methods tapers off for a fixed $\sigma^2$, but increasing $\sigma^2$ gives better results for FW and FCFW than the others (see for example the last column of Figure~\ref{fig:expts:gmm2}).
\item[--] The FW-LS results are identical to FW, and so we have excluded them from the plots for clarity.
\item[--] The improvement of QMC over MC decreases as the number of mixture components $K$ increase. FW and FCFW are not affected by $K$ as much. 
\end{list}

\paragraph{QMC implementation.} To generate quasi-random samples from the mixture of Gaussians, we generate a $(d\!+\!1)$-dimensional Sobol sequence using the Matlab~\texttt{qrandstream} function. The last dimension is (naively) used to sample the mixture component by using the inverse transformation method for a discrete random variable. The first $d$ components are then used to sample from the corresponding multivariate Gaussian by transforming $d$ independent standard normals. We note that~\citet{gerber2014sqmc} argued on the importance of sorting the discrete mixture components according to their location \emph{before} choosing them with the standard inverse transformation method (in our naive implementation, the order is arbitrary and arising from how the mixture of Gaussians was stored). They propose a method for this that they called \emph{Hilbert sort}, for which they could prove nice low-discrepancy properties. This approach might reduce the sensitivity to $K$ of QMC. The worse results of QMC for the UAV experiment in Figure~\ref{fig:uav:errors} might be explained by our naive implementation.

\subsection{Synthetic data examples and additional results} \label{app:synthetic}
In this section we provide additional details and results for the synthetic
data examples. The LGSS models and the modes of the JMLS are generated randomly using the function {\tt drss} from the Matlab Control Systems Toolbox. The four models that were considered are given by:

\begin{description}
\item[LGSS, $d=3$] on the form
  \begin{align*}
    x_{t+1} &= Ax_t + v_t, & v_t&\sim \N(0,I), \\
    y_{t+1} &= Cx_t + e_t, & e_t&\sim \N(0,0.1)
  \end{align*}
  with $(A,C)$ being an observable pair. The system has poles in $-0.2825$
  and $-0.3669 \pm 0.0379i$.
\item[LGSS, $d=15$]  on the form
  \begin{align*}
    x_{t+1} &= Ax_t + v_t, & v_t&\sim \N(0,I), \\
    y_{t+1} &= Cx_t + e_t, & e_t&\sim \N(0,0.1)
  \end{align*}
  with $(A,C)$ being an observable pair. The system has poles in
  $0.2456 \pm 0.6594i$,
  $0.4833$,
  $0.3329$,
  $0.0882 \pm 0.2512i$, 
  $-0.1485$,
  $-0.8045$,
  $-0.4848$,
  $-0.5252 \pm 0.0368i$,
  $-0.6692 \pm 0.0612i$,
  $-0.6604$, and
  $-0.6680$.
\item[JMLS] on the form
  \begin{align*}
    \Prb&( r_{t+1}=\ell | r_t= k )= \Pi_{k\ell}, && \\
    x_{t+1} &= A_{r_t}x_t + F_{r_t}v_t,& v_t&\sim\N(0,I), \\
    y_t &= C_{r_t}x_t + G_{r_t} e_t, & e_t &\sim \N(0,1), \\
  \end{align*}
  with
$    \Pi =
    \begin{pmatrix}
      0.7 & 0.3 \\
      0.3 & 0.7
    \end{pmatrix}$,
  and the two system modes corresponding to observable systems
  with poles in $-0.4429$, $0.0937$, and $-0.6576$, $0.3109$, respectively.
\item[Nonlinear benchmark model] as described in the main text.
\end{description}

Additional results, for the different values of $\sigma^2 \in \{0.01, 0.1, 1\}$
are reported in Figures~\ref{fig:1}--\ref{fig:n}.

\subsubsection{Discussion of role of $\sigma^2$ for FCFW } \label{app:FCFW}

The results in Figure~\ref{fig:nlresults:sup} for the nonlinear benchmark show an interesting behavior for FCFW when the kernel bandwidth $\sigma^2$ is increasing. In particular, for $\sigma^2=1$ (rightmost plot), SKH-FCFW obtains the lowest error for all methods (and other $\sigma$'s) at $N=20$, but its error stays constant when increasing the number of particles while the other methods see their error decreasing. This phenomenon needs to be carefully studied further. Our current hypothesis is that when $\sigma^2$ is large, FCFW is too effective at myopically optimizing the MMD error for the mixture of Gaussians $\tilde{p}_t$ and yields a too small effective sample size (it sets many weights of particles to zero), thus hurting the particle filtering error. When $d$ is small or when $\sigma^2$ is large, the Gaussian kernel matrix becomes rank-deficient due to numerical precision; we thus have numerically a finite dimensional $\H$. In the case of the 1d nonlinear model, FCFW sometimes could optimize the MMD error within numerical precision (its square of the order of 1e-16) within 30 particles. FW-Quad would thus output only 30 particles even though we asked it to produce $N > 30$. This explains why increasing $N$ did not translate in a reduction of filtering errors for SKH-FCFW with $\sigma^2=1$: the effective number of particles stayed much less than $N$.

SKH-FW did not seem to suffer from this problem. This might partly explain why we were able to use the much bigger $\sigma^2=10$ for the UAV experiment with good results (Figure~\ref{fig:uav:errors}) whereas we used $\sigma^2=0.1$ for SKH-FCFW. 

\begin{figure}[!tb]
  \centering
  \includegraphics[width = 0.32\textwidth]{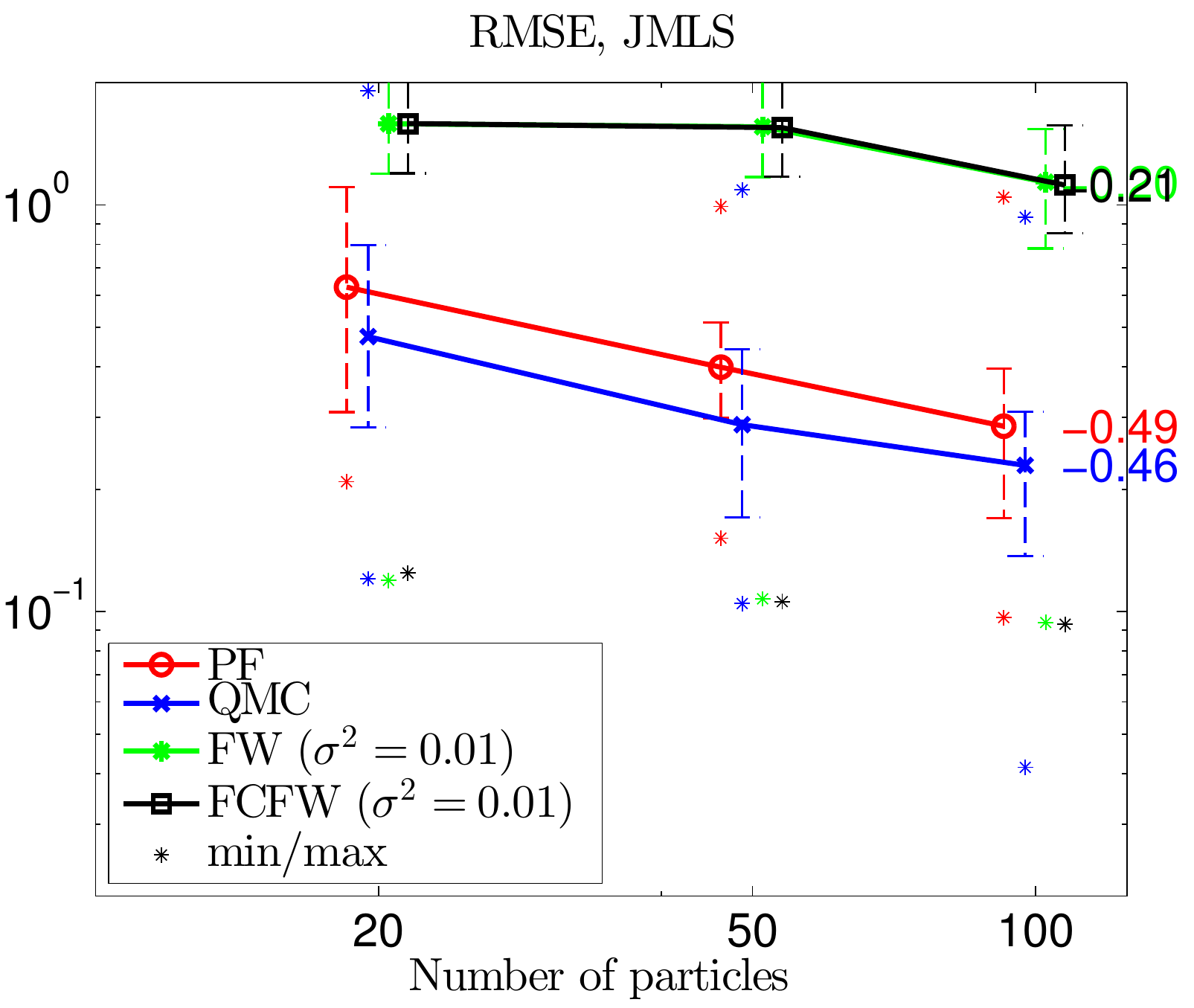} 
  \includegraphics[width = 0.32\textwidth]{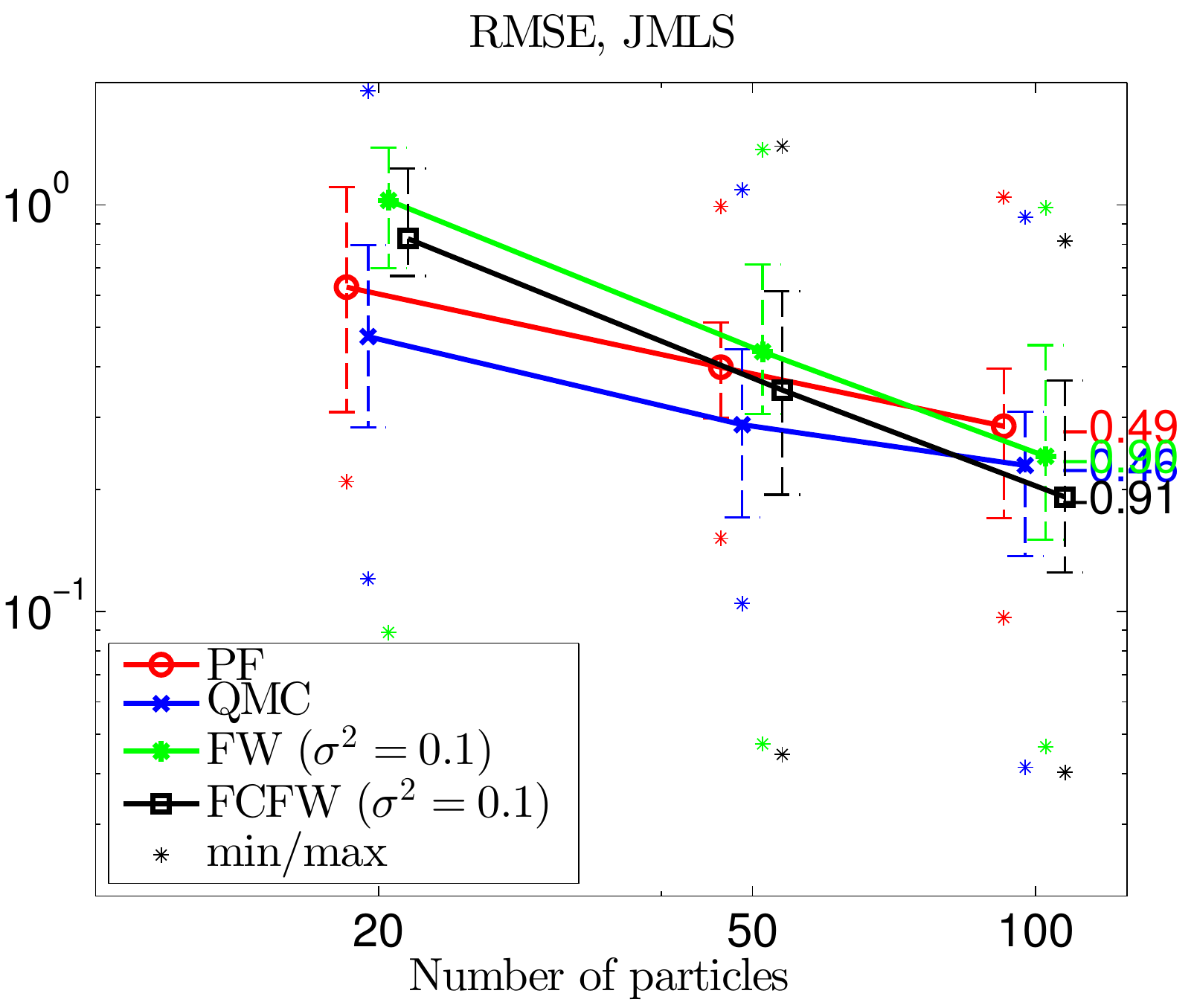} 
  \includegraphics[width = 0.32\textwidth]{./figures/zoomLog_jmls_3}
  \caption{RMSE for JMLS, using $\sigma^2 = 0.01$, $\sigma^2 = 0.1$, and $\sigma^2 = 1$ (left to right).}
  \label{fig:1}
\end{figure}

\begin{figure}[!tb]
  \centering
  \includegraphics[width = 0.32\textwidth]{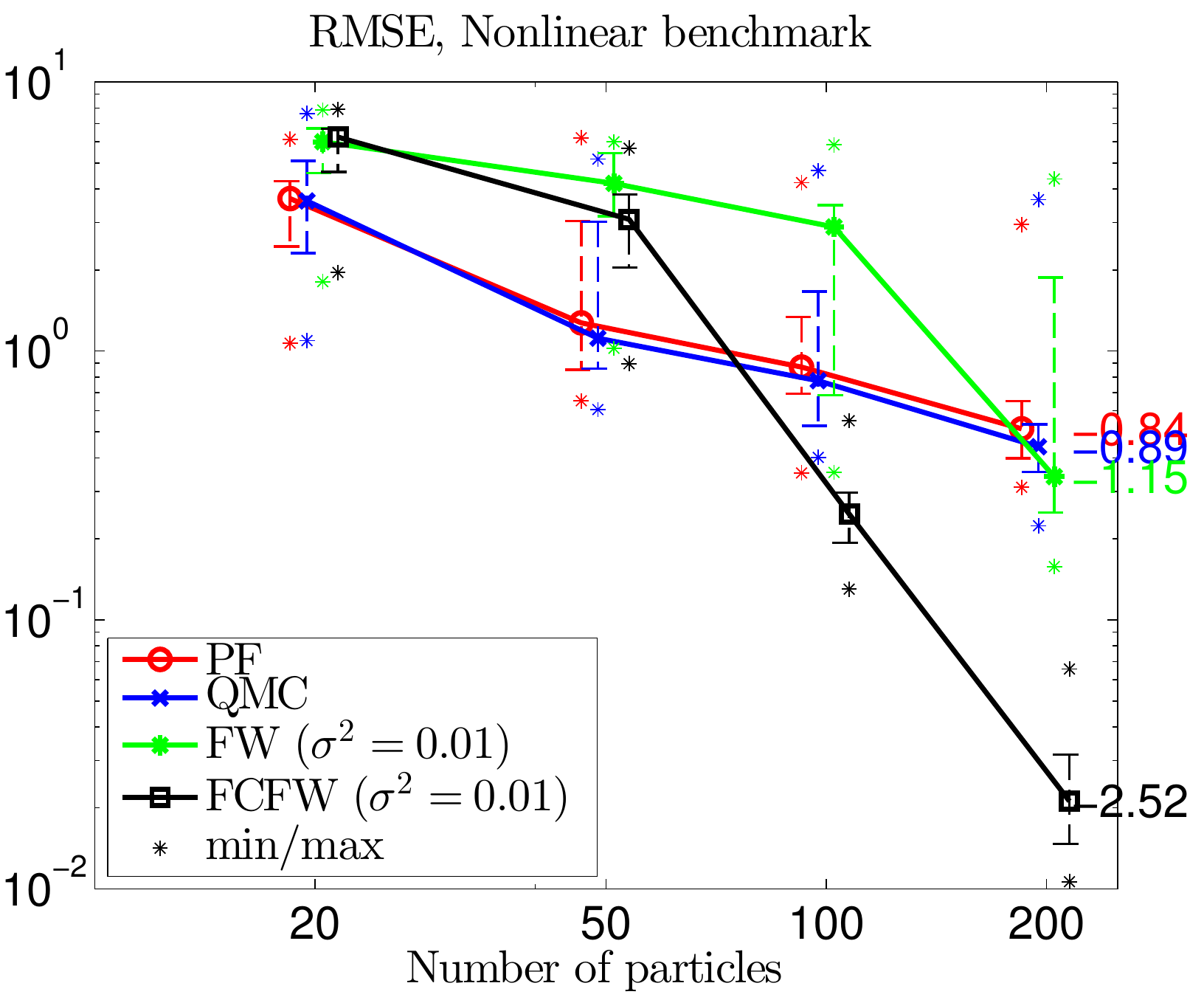} 
  \includegraphics[width = 0.32\textwidth]{./figures/zoomLog_nl_2} 
  \includegraphics[width = 0.32\textwidth]{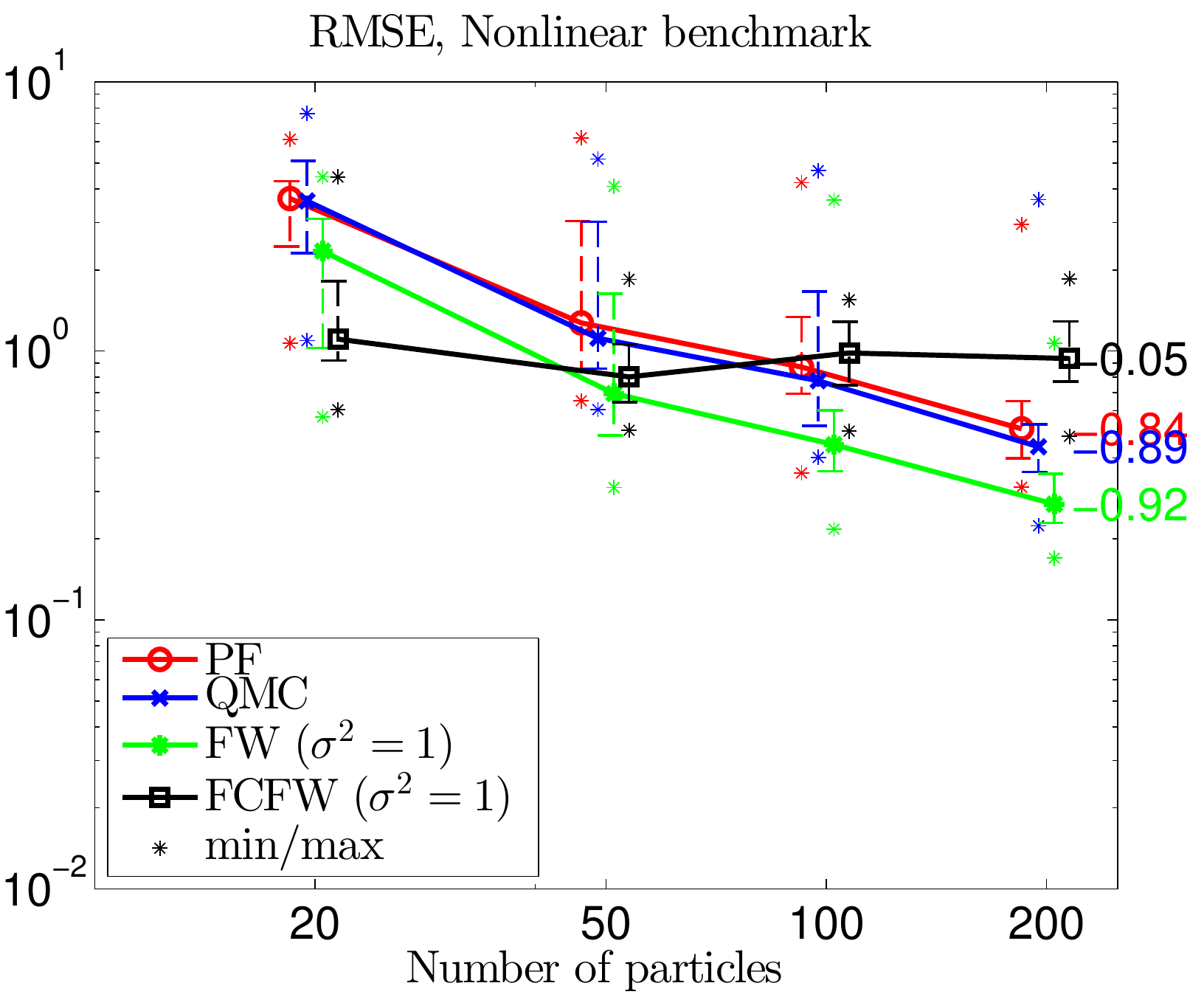}
  \caption{RMSE for nonlinear benchmark model, using $\sigma^2 = 0.01$, $\sigma^2 = 0.1$, and $\sigma^2 = 1$ (left to right).}
  \label{fig:nlresults:sup}
\end{figure}

\clearpage

\begin{figure}[p]
  \centering 
  \includegraphics[width = 0.32\textwidth]{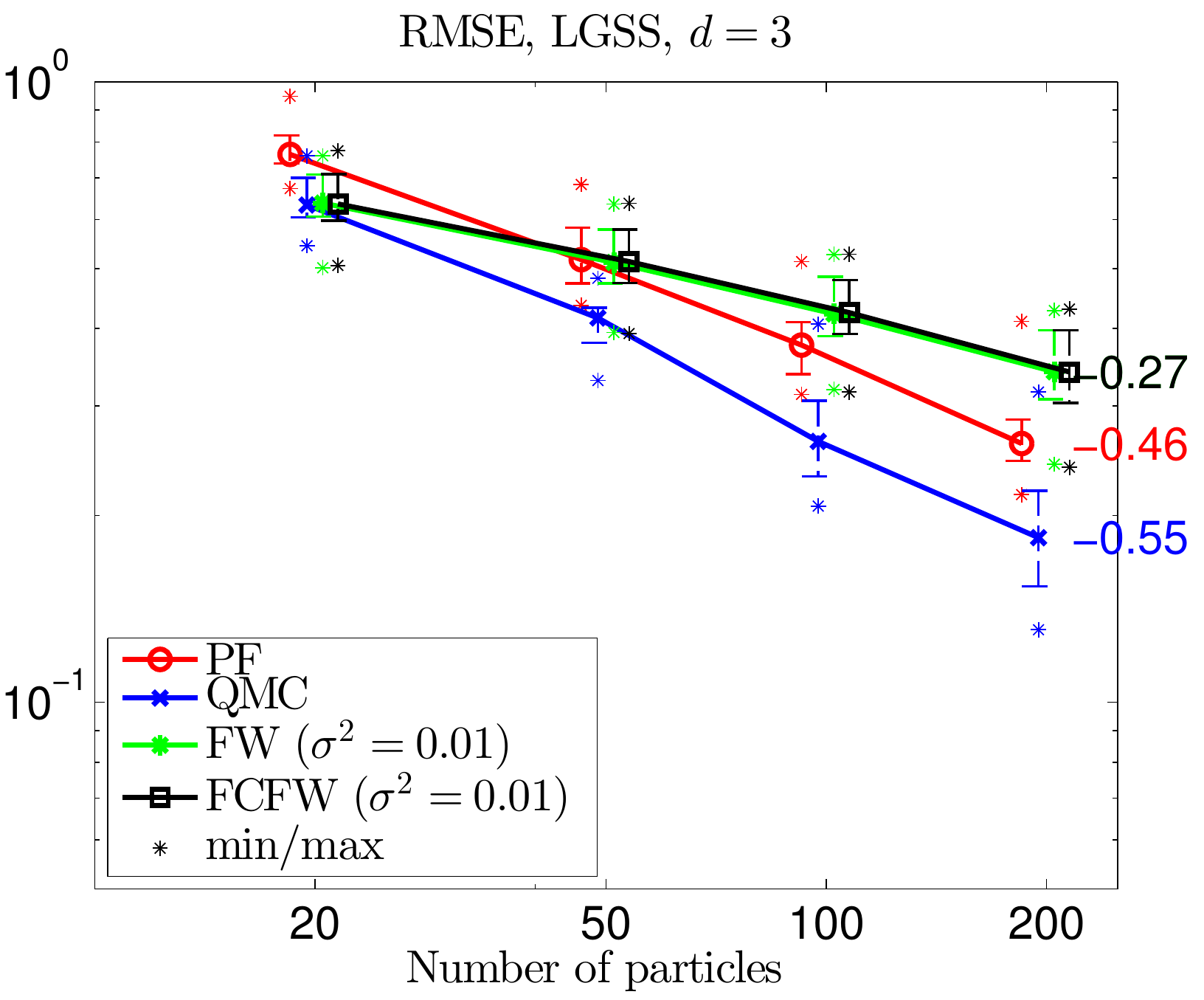} 
  \includegraphics[width = 0.32\textwidth]{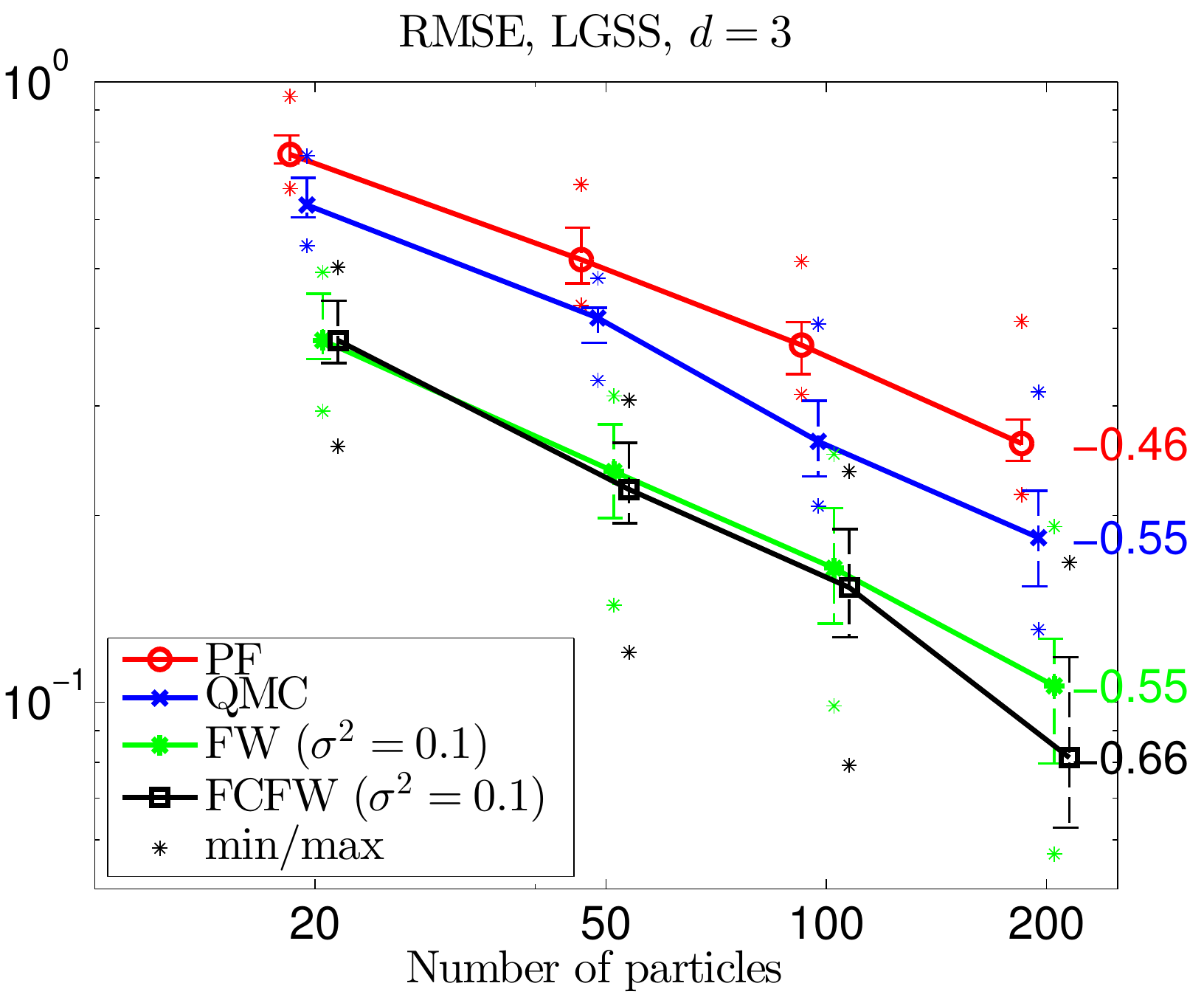} 
  \includegraphics[width = 0.32\textwidth]{./figures/zoomLog_lgss_nx3_3}\\
  \vspace{3mm}
  \includegraphics[width = 0.32\textwidth]{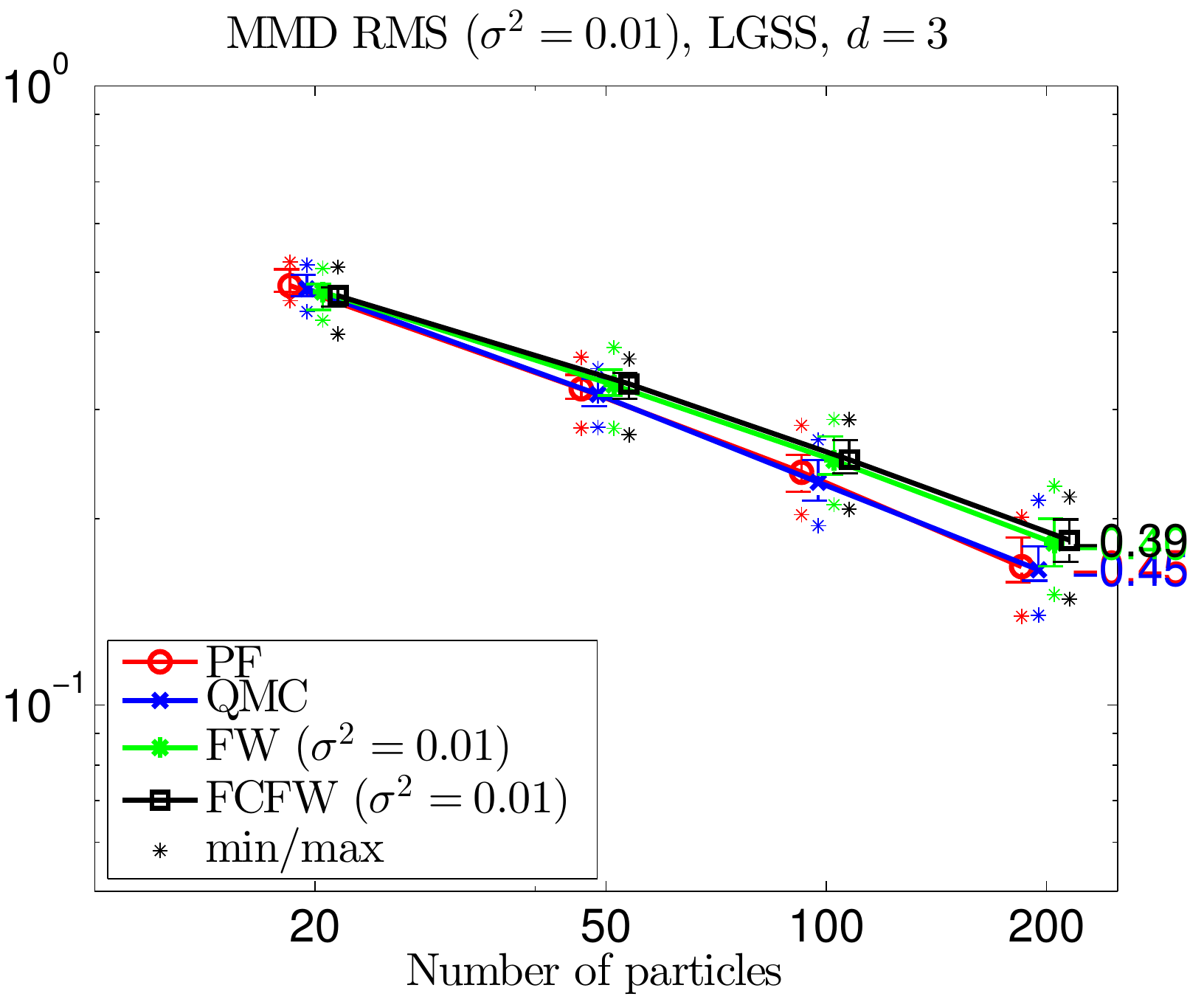} 
  \includegraphics[width = 0.32\textwidth]{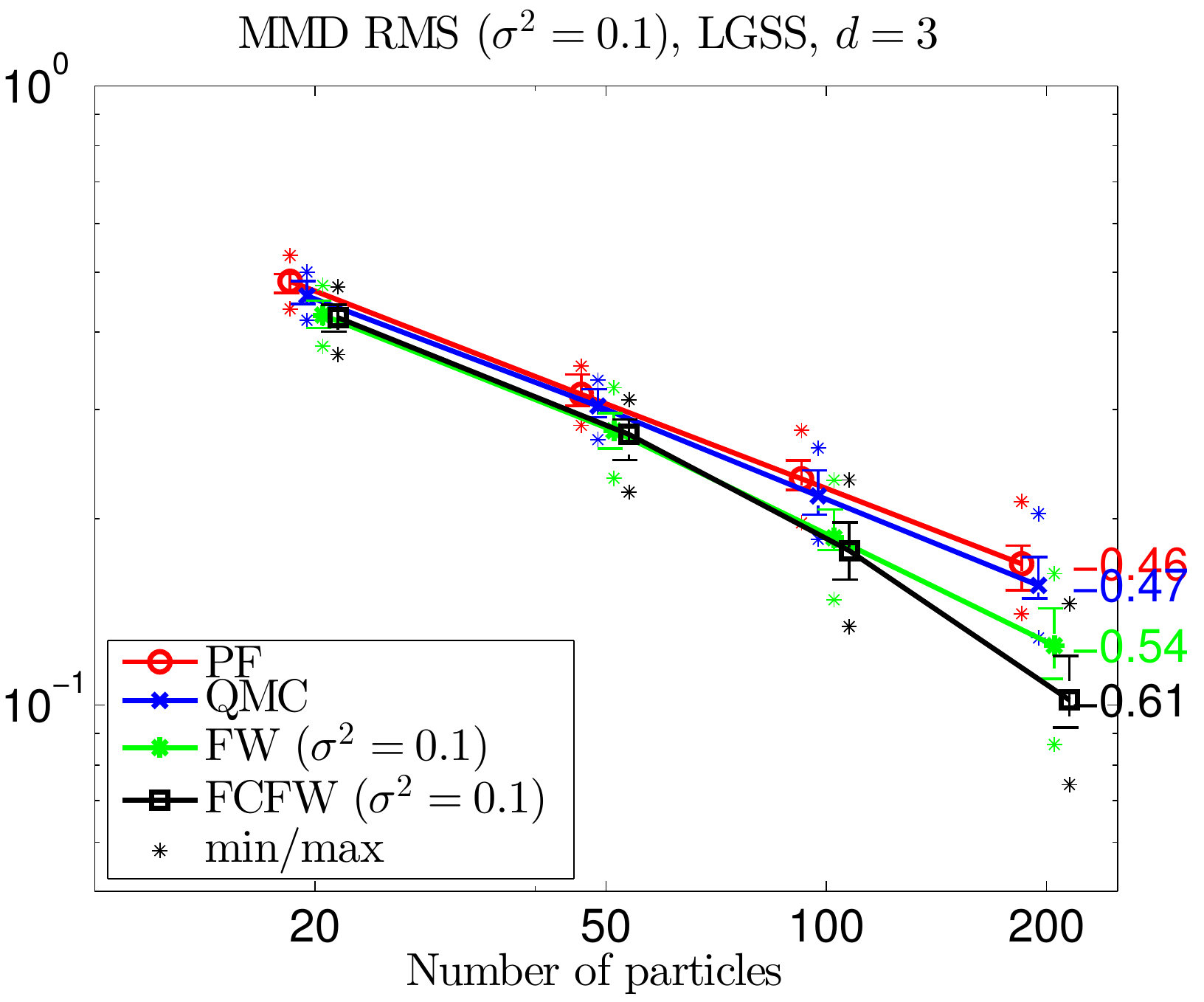} 
  \includegraphics[width = 0.32\textwidth]{./figures/zoomLog_lgss_nx3_6}
  \caption{RMSE (top row) and MMD (bottom row) for LGSS ($d = 3$), using $\sigma^2 = 0.01$, $\sigma^2 = 0.1$, and $\sigma^2 = 1$ (left to right). Note that the MMD definition depends on $\sigma^2$. This is why the MMD curves for PF and QMC are also changing with $\sigma^2$ (but their RMSE ones are not).}
\end{figure}

\begin{figure}[p]
  \centering
  \includegraphics[width = 0.32\textwidth]{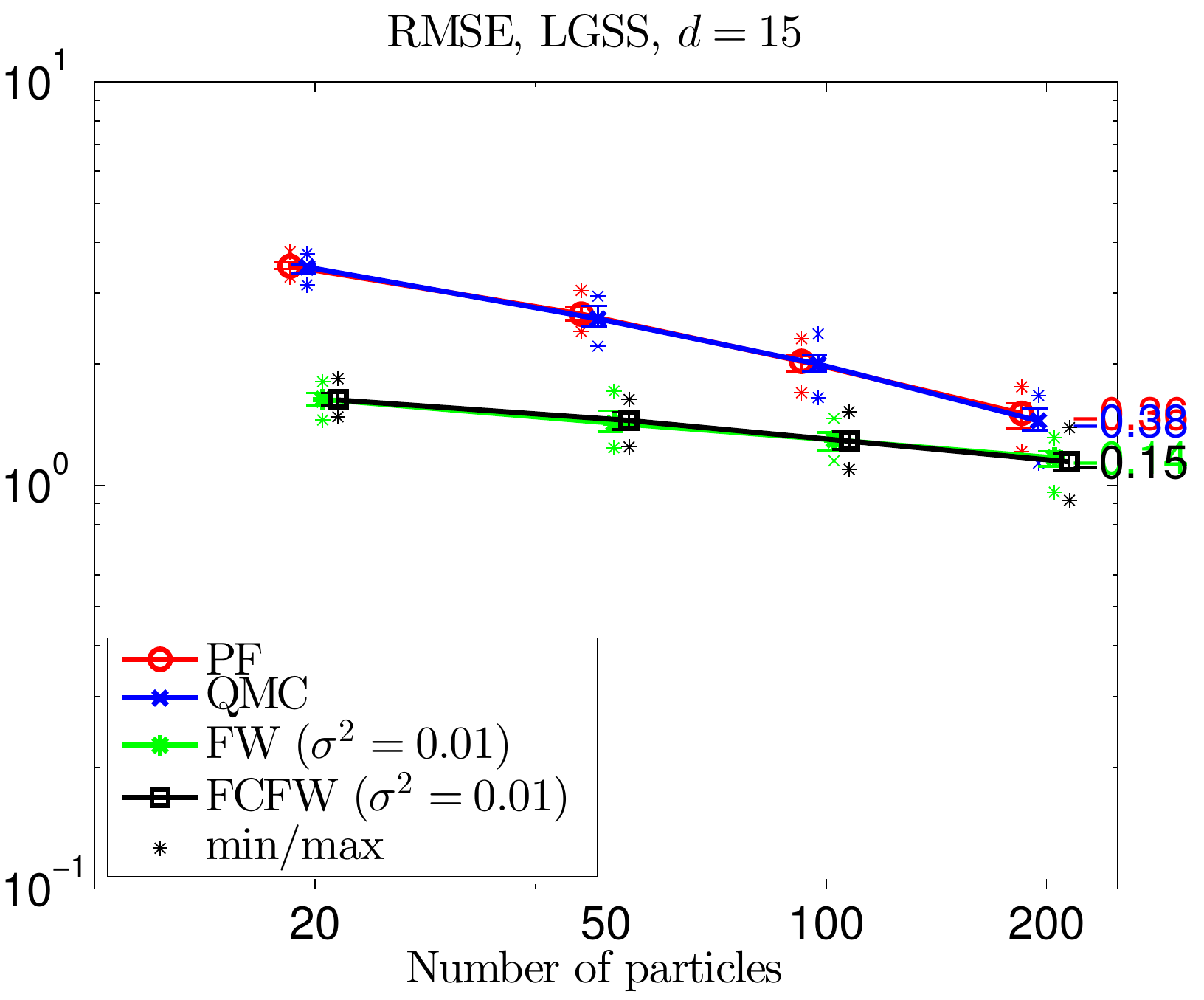} 
  \includegraphics[width = 0.32\textwidth]{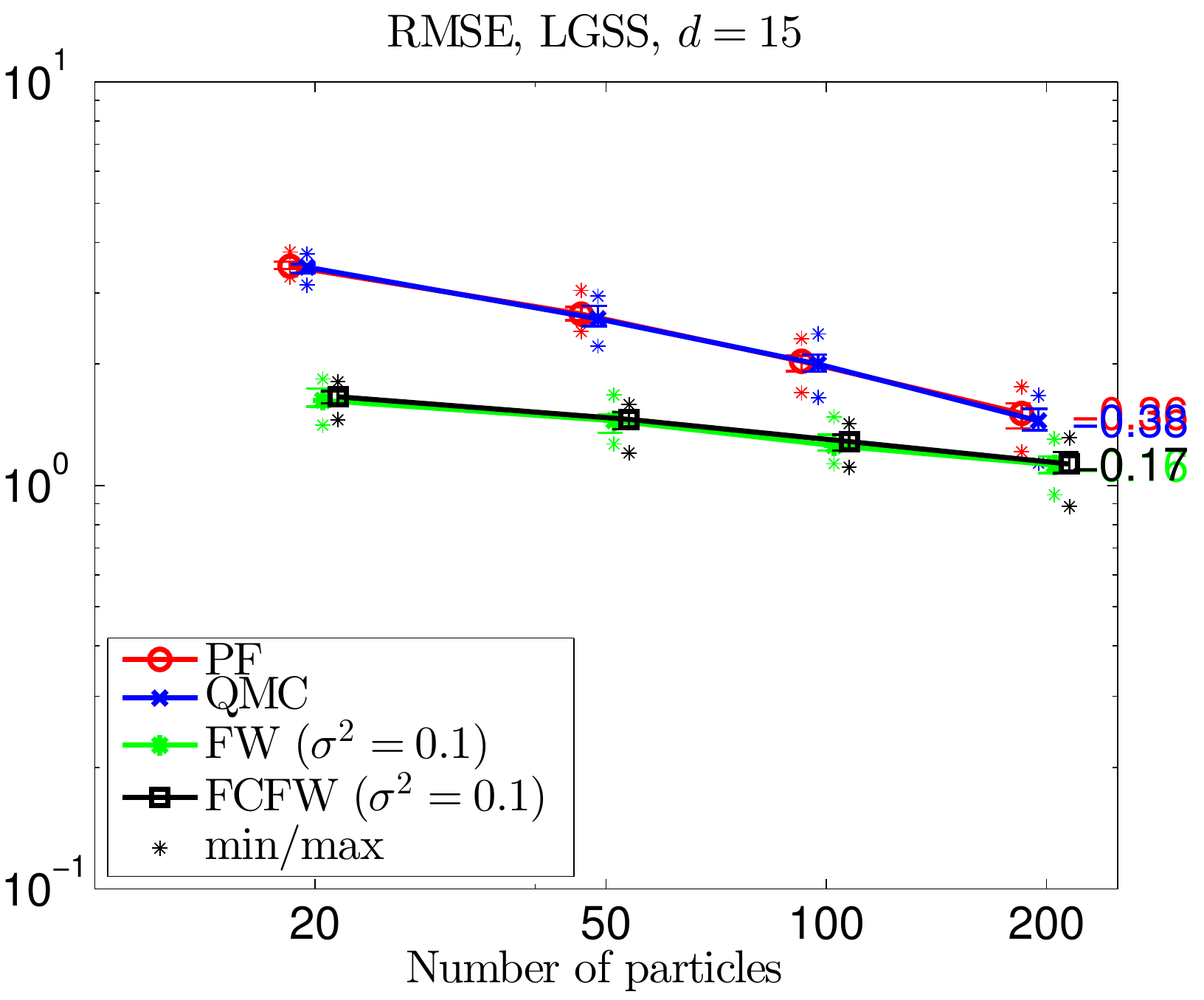} 
  \includegraphics[width = 0.32\textwidth]{./figures/zoomLog_lgss_nx15_3}\\
  \vspace{3mm}
  \includegraphics[width = 0.32\textwidth]{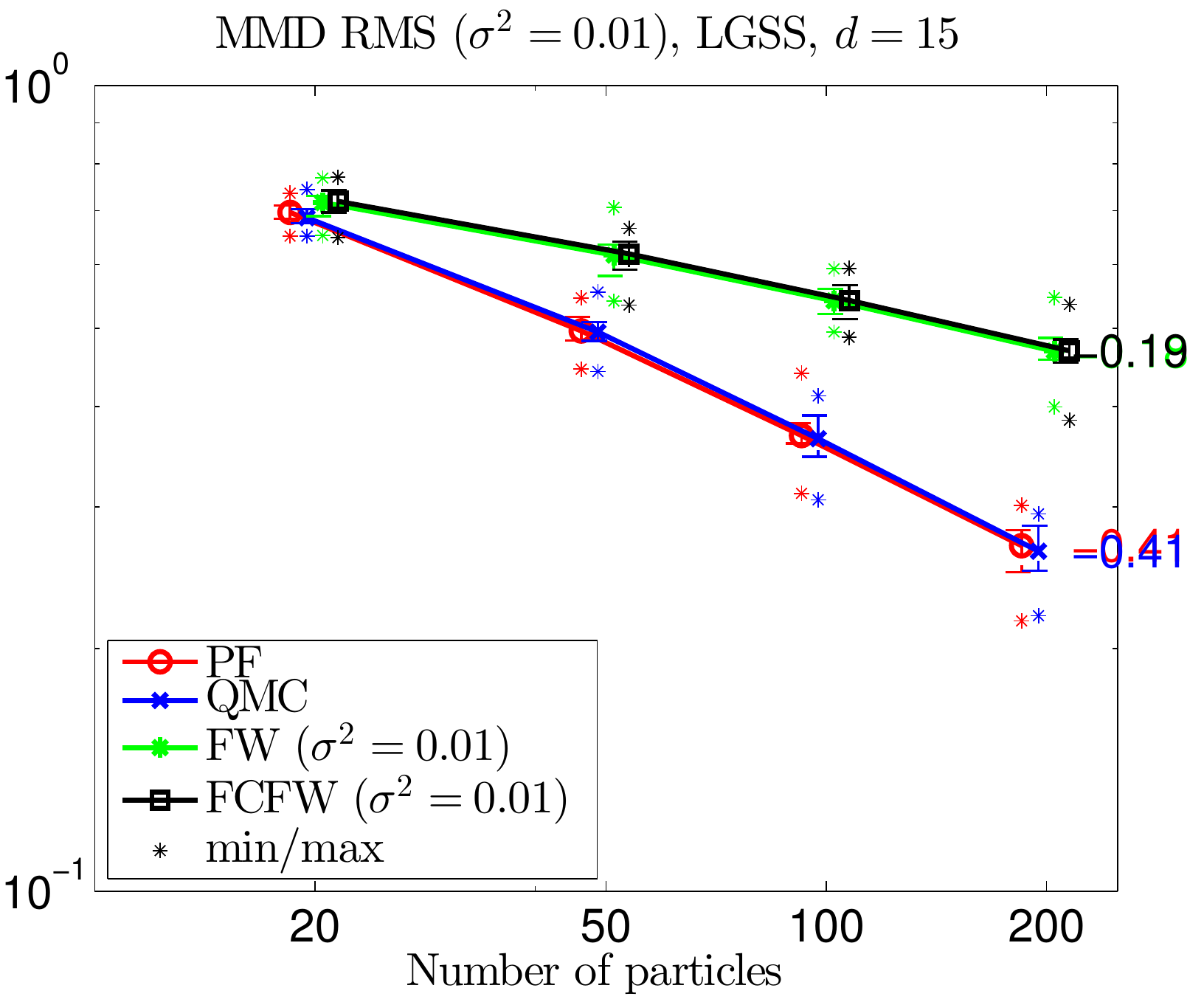} 
  \includegraphics[width = 0.32\textwidth]{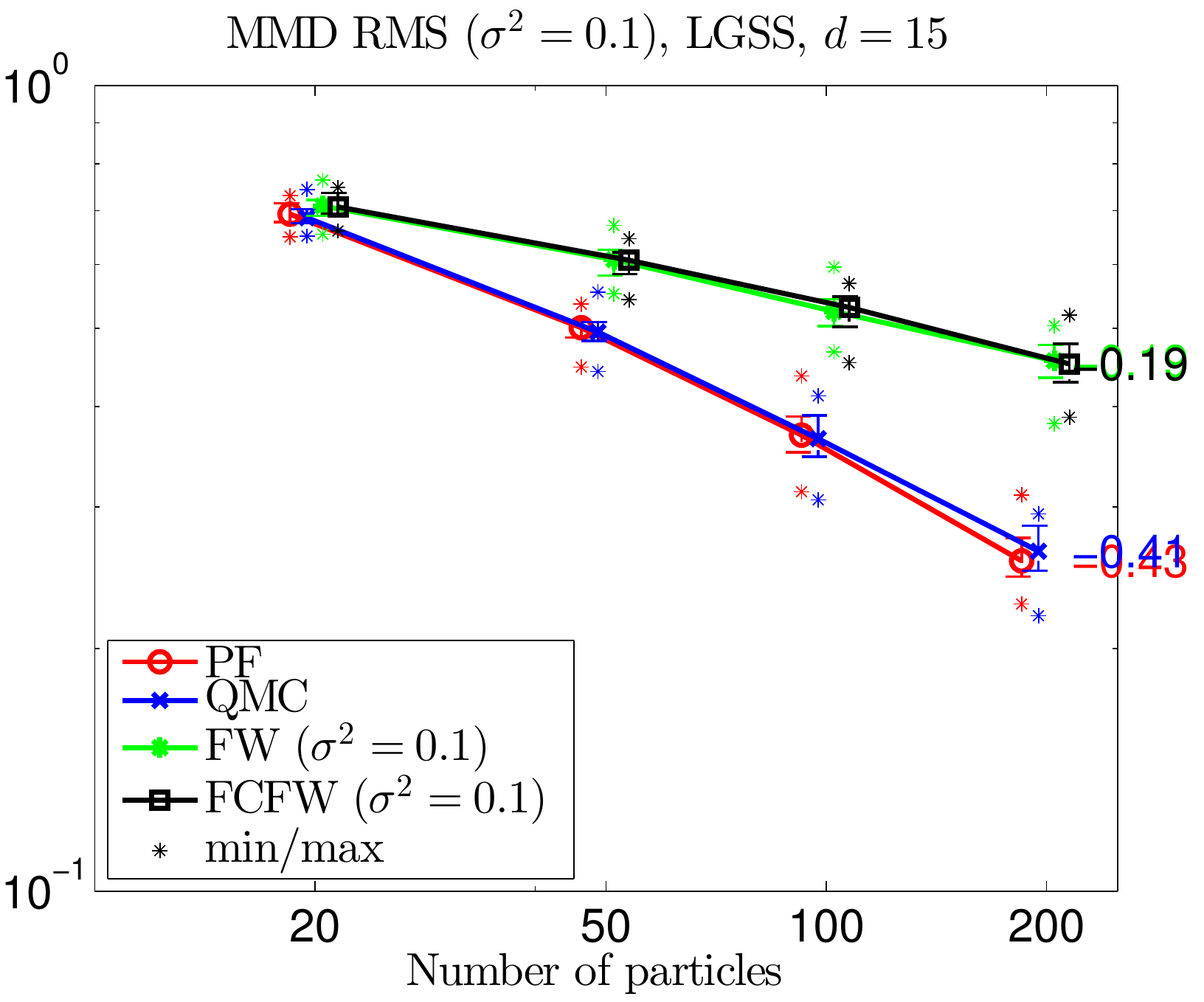} 
  \includegraphics[width = 0.32\textwidth]{./figures/zoomLog_lgss_nx15_6}
  \caption{RMSE (top row) and MMD (bottom row) for LGSS ($d = 15$), using $\sigma^2 = 0.01$, $\sigma^2 = 0.1$, and $\sigma^2 = 1$ (left to right).}
  \label{fig:n}
\end{figure}

\clearpage
\section{Proof sketch for Theorem~\ref{thm:Ct}}
\begin{proofsketch}
We assume that the function $f_t:(x_{t+1},x_{t}) \mapsto  p(y_t|x_t) p(x_{t+1}|x_t)$ is in the tensor product $\F_t \otimes \H_t$, with $\F_t$ defined as in the statement of the theorem. We consider the nuclear norm~\citepsup{jameson1987summing}:
$$ \| f_t \|_{\F_t \otimes \H_t} = \inf_{\{\alpha_i, \beta_i\}_{i=1}^\infty} \sum_i \| \alpha_i \|_{\F_t}  \| \beta_i \|_{\H_t}$$
over all possible decompositions $\{\alpha_i, \beta_i\}_{i=1}^\infty$ of $f_t$ such that, for all $x_t, x_{t+1}$
$$
f_t(x_{t+1},x_{t})  = \sum_{i} \alpha_i(x_{t+1}) \beta_i(x_t).
$$
In the following, let $\{\alpha_i, \beta_i\}_{i=1}^\infty$ be such a decomposition for $f_t$. We have
 \BEAS
 (F_t \nu) (x_{t+1}) & = &  \int \underbrace{p(x_{t+1}|x_t) p(y_t|x_t)}_{f_t(x_{t+1},x_t)} d\nu(x_t)  \in \rb \\
 \mu(F_t v) & = & \int (F_t\nu)(x_{t+1}) \Phi(x_{t+1}) dx_{t+1} \in \H_{t+1} .
 \EEAS
Now we have that $\|\mu(F_t \nu)\|_{\H_{t+1}} = \sup_{\|h\|_{\H_{t+1}} = 1} | \innerProd{h}{\mu(F_t \nu)}|$, so we consider for some $h \in \H_{t+1}$:
\BEAS
\innerProd{h}{\mu(F_t \nu)} & = & 
 	 \int (F_t\nu)(x_{t+1}) h(x_{t+1}) dx_{t+1} \mbox{ (by linearity and reproducing property)} \\
 & =  &  \int \bigg( \int f_t(x_{t+1},x_t) d\nu(x_t) \bigg) h(x_{t+1}) dx_{t+1} 
\\
 & =  &  \int  \int f_t(x_{t+1},x_t)   h(x_{t+1}) dx_{t+1} d\nu(x_t)  \mbox{ (Fubini's theorem)} \\
 & =  &  \int  \int \bigg( \sum_i \alpha_i(x_{t+1}) \beta_i(x_t) \bigg) h(x_{t+1}) dx_{t+1} d\nu(x_t) \\
   & = & \sum_i \bigg( \underbrace{\int  \beta_i(x_t) d\nu(x_t)}_{\E_\nu[\beta_i]} \bigg) \bigg(\int \alpha_i(x_{t+1}) h(x_{t+1}) dx_{t+1} \bigg)  \mbox{ (Fubini's theorem)} .
 \EEAS
By~\eqref{eq:IntegralError}, we have that $| \E_{\nu}[\beta_i]| \leq \|\beta_i\|_{\H_t} \|\mu(\nu) \|_{\H_t}$. Thus we have:
\BEAS  
   \|\mu(F_t \nu)\|_{\H_{t+1}} = \sup_{\|h\|_{\H_{t+1}} = 1} | \innerProd{h}{\mu(F_t \nu)}| & = & \sup_{\|h\|_{\H_{t+1}} = 1} \left| \sum_i \E_\nu[\beta_i] \bigg(\int \alpha_i(x_{t+1}) h(x_{t+1}) dx_{t+1} \bigg) \right| \\
  &\leq & \sum_i \underbrace{\big|\E_\nu[\beta_i]\big|}_{\leq \|\beta_i\|_{\H_t} \|\mu(\nu) \|_{\H_t}} 
  	\bigg( \underbrace{ \sup_{\|h\|_{\H_{t+1}} = 1} \left| \int \alpha_i(x_{t+1}) h(x_{t+1}) dx_{t+1} \right|}_{:=\, \|\alpha_i\|_{\F_t}} \bigg) \\
  &\leq & \left( \sum_i \|\alpha_i\|_{\F_t} \|\beta_i\|_{\H_t} \right) \|\mu(\nu) \|_{\H_t} . 
  \EEAS

This inequality was valid for any expansion $\{\alpha_i, \beta_i\}_{i=1}^\infty$ for $f_t$, and thus we can take the infimum of the upper bound over all possible expansions to get:
$$
	 \|\mu(F_t \nu)\|_{\H_{t+1}} \leq \| f_t \|_{\F_t \otimes \H_t}  \|\mu(\nu) \|_{\H_t}
$$
as we wanted to prove.
\end{proofsketch}

\section{Special case for the Gaussian kernel}

In this section, we explore what form $\|\cdot\|_\F$ takes for the Gaussian kernel. We then show that $\| f_t \|_{\F \otimes \H}$ is finite for a simple one-dimensional linear Gaussian \ssm as long as $\sigma$ is small enough (and thus $\H$ is big enough, as the size of $\H$ increases when $\sigma$ decreases for the Gaussian kernel).

For the Gaussian kernel $\kernel(x,y) = \exp( - \| x - y\|_2^2 / 2 \sigma^2) =: q(x-y)$, its Fourier transform is
$$
\hat{q}(\omega) = \int e^{ -i x^\top \omega} q(x) = (2 \pi)^{d/2} \sigma^d e^{ -  \| \omega\|^2 \sigma^2 / 2}
$$
and
$$
\| h \|_\H^2  = \frac{1}{(2\pi)^d} \int  \frac{ | \hat{h} (\omega) |^2}{ \hat{q}(\omega)} d \omega .
$$
Moreover, 
$$
\langle \alpha, h \rangle_{L^2} =  \frac{1}{(2\pi)^d} \langle \hat{\alpha}, \hat{h} \rangle_{L^2} 
=    \frac{1}{(2\pi)^d} \int 
 \frac{  \hat{h} (\omega) }{ \hat{q}(\omega)^{1/2} }  \hat{q}(\omega)^{1/2} \hat{\alpha}(\omega) d \omega .
$$
By applying Cauchy-Schwartz on $L^2$ on the RHS, this leads to
$$
|\langle \alpha, h \rangle_{L^2} |
\leqslant \| h\|_\H \bigg(
\frac{1}{(2\pi)^d} \int     | \hat{\alpha} (\omega) |^2   \hat{q}(\omega) d \omega
\bigg)^{1/2} .
$$

Thus, we can take the function space:
$$
\| \alpha \|_\F^2
=  \frac{1}{(2\pi)^d} \int     | \hat{\alpha} (\omega) |^2   \hat{q}(\omega) d \omega
=  \frac{\sigma^d}{(2\pi)^{d/2}} \int     | \hat{\alpha} (\omega) |^2    e^{ -  \| \omega\|^2 \sigma^2 / 2}  d \omega .
$$
This allows for quite peaky distributions for the dynamics, as Diracs are authorized (with constant Fourier transform).

To compute an upper bound on $\| f_t \|_{\F \otimes \H}$, we simply need to find a decomposition of
$p(y_t|x_t) p(x_{t+1}|x_t) $ as a sum of terms $\alpha_i(x_{t+1}) \beta_i (x_t)$ and bound the appropriate norms of $\alpha_i$ and $\beta_i$. 
 
 We do this for a special case in the following section.

  \subsection{Bound for one-dimensional Gaussian distribution}
We assume that $x_{t+1} \in \rb^d$ and $y_t \in \rb^m$, and that
\BEAS
p(x_{t+1}|x_t) & = &  \frac{1}{(\sqrt{2\pi}\tau)^d} e^{ - \frac{1}{2 \tau^2} \| x_{t+1} - A x_t\|_2^2} \\
p(y_t |x_t) & = &  \frac{1}{(\sqrt{2\pi}\upsilon)^m} e^{ - \frac{1}{2 \upsilon^2} \| y_t -  B x_t\|_2^2}
\EEAS

We only do the proof for $d=m=1$ and $y_t=0$ (constant observations) to make the proof simpler. We conjecture that similar results hold more generally. We use the Mehler formula for $w$ such that $\frac{2w}{1-w^2} = \frac{1}{\tau^2}$~\citepsup{abramowitz2012handbook}, where $H_n$ is the $n^{\mathrm{th}}$ Hermite polynomial ($H_n(x) := (-1)^n e^{-x^2} \frac{d^n}{dx^n} e^{-x^2}$):
\BEAS
e^{2xyw/(1-w^2)}
& =  & \sqrt{1-w^2} e^{  (x^2+y^2) w^2 / ( 1-w^2) } \sum_{n=0}^\infty \frac{1}{n!} (w/2)^n H_n(x) H_n(y) .
\EEAS
 
Thus 
\BEAS
p(x_{t+1}|x_t) p(y_t |x_t) 
& = & 
 \frac{1}{(\sqrt{2\pi}\tau) } 
 e^{ - \frac{1}{2 \tau^2}  x_{t+1}^2
  - \frac{1}{2 \tau^2}  A^2x_{t}^2 }  e^{   (x_{t+1}^2+A^2x_{t}^2) w^2 / ( 1-w^2) } 
   \frac{1}{(\sqrt{2\pi}\upsilon)} e^{  -    \frac{1}{2 \upsilon^2} B^2 x_t^2   } \\
  & & \sqrt{1-w^2} \sum_{n=0}^\infty \frac{1}{n!} (w/2)^n H_n(x_{t+1}) H_n(Ax_t) \\
  & = &  \sqrt{1-w^2} \frac{1}{(\sqrt{2\pi}\tau)}   \frac{1}{(\sqrt{2\pi}\upsilon)}   
   \sum_{n=0}^\infty \frac{1}{n!} (w/2)^n \alpha_n(x_{t+1}) \beta_n (x_t)
\EEAS
with, using $ - \frac{1}{2\tau^2} + \frac{w^2}{1-w^2} = \frac{ w^2 - w}{1-w^2} =  \frac{-w}{1+w}$:
\BEAS
\alpha_n(x_{t+1}) 
& = &  
e^{ -   x_{t+1}^2 w / (1+w) }  H_n(x_{t+1})
\\
\beta_n (x_t) & = & e^{ -   A^2 x_{t}^2 w / (1+w) }  
e^{ -    \frac{1}{2 \upsilon^2} B^2 x_t^2   } 
H_n(A x_{t}) .
\EEAS

We thus now need to compute the norms of $\alpha_n$ and $\beta_n$, by first computing the Fourier transform. We use the representation:
$$
H_n(x) =  \frac{n!}{2 i \pi} \oint e^{-t^2 + 2 xt}    \frac{dt}{t^{n+1}} ,
$$
integrating over a contour around the origin,
which leads to:
\BEAS
\hat{\alpha}_n(\omega) & = &  \int_\rb e^{- i \omega x} H_n(x) e^{-x^2 w / (1+w) } dx \\
& = &  \int_\rb e^{- i \omega x} \frac{n!}{2 i \pi} \oint e^{-t^2 + 2 xt}    e^{-x^2 w / (1+w) } \frac{dt}{t^{n+1}}   dx
\\
& = &  \frac{n!}{2 i \pi}  \oint   e^{-t^2} \bigg( \int_\rb e^{- i \omega x}   e^{+ 2 xt}    e^{-x^2 w / (1+w) }  dx \bigg) \frac{dt}{t^{n+1}}  .   
\EEAS
We may now use
$$
\int_\rb e^{-ax^2 + bx} dx = \sqrt{\frac{\pi}{a}} e^{b^2/4a}
$$
to get, using $-1+ 4(1+w)/4w =  1/w$,
\BEAS
\hat{\alpha}_n(\omega)& = &  
  \frac{n!}{2 i \pi}  \oint   e^{-t^2} \bigg(  
\sqrt{ \frac{\pi(1+w)}{w} }
\exp \bigg( \frac{ ( 2t - i \omega)^2 ( 1 + w)} { 4w} \bigg)
\bigg) \frac{dt}{t^{n+1}}    
\\
& = & \frac{n!}{2 i \pi} \exp \bigg(
- \frac{\omega^2 ( 1+ w)}{4 w}
\bigg) \sqrt{ \frac{\pi(1+w)}{w} }
  \oint   e^{ t^2/ w }  
\exp \bigg( \frac{ -i \omega t ( 1 + w)} {  w} \bigg)
 \frac{dt}{t^{n+1}}    
\\
& = &  \frac{n!}{2 i \pi} \exp \bigg(
- \frac{\omega^2 ( 1+ w)}{4 w}
\bigg) \sqrt{ \frac{\pi(1+w)}{w} }
 w^{-n/2} i^n \oint   e^{ -\tilde{t}^2  }  
\exp \bigg( \frac{  \omega \tilde{t} ( 1 + w)} {  \sqrt{w}} \bigg)
 \frac{d \tilde{t}}{ \tilde{t}^{n+1}}    
\EEAS
using the change of variable $t = i \sqrt{w} \tilde{t}$. This leads to

\BEAS
\hat{\alpha}_n(\omega) & = &  \exp \bigg(
- \frac{\omega^2 ( 1+ w)}{4 w}
\bigg) \sqrt{ \frac{\pi(1+w)}{w} }
  w^{-n/2} i^n H_n \bigg( \frac{     \omega ( 1 + w)} {  2\sqrt{w}} \bigg)
\\
| \hat{\alpha}_n(\omega)|
& \leqslant & 
  \exp \bigg(
- \frac{\omega^2 ( 1+ w)}{4 w}
\bigg) \sqrt{ \frac{\pi(1+w)}{w} }
  w^{-n/2} C \exp  \bigg( \frac{     \omega^2 ( 1 + w)^2} {  8 w} \bigg)  ( 2^n n! \sqrt{\pi} )^{1/2} 
  \\
  & \leqslant & 
  \exp \bigg(
- \frac{  \omega^2 (1-w^2)}{8 w}
\bigg) \sqrt{ \frac{\pi(1+w)}{w} }
  w^{-n/2} C  ( 2^n n! \sqrt{\pi} )^{1/2} 
  \EEAS
  using $H_n(x) \leqslant C \exp(x^2/2) ( 2^n n! \sqrt{\pi} )^{1/2} $, with $C =  \pi^{-1/4}$, and $  - \frac{1+w}{4w} + \frac{(1+w)^2}{8w}= \frac{1+w}{4w} \big(
  -1 + \frac{1+w}{2}
  \big) = - \frac{1-w^2}{8w}$.
  
  We thus have
  \BEAS
  \| \alpha_n \|_\F^2 &  = &  \frac{\sigma}{\sqrt{2 \pi}} \int | \hat{\alpha}_n(\omega)|^2 e^{-\omega^2 \sigma^2/2} d \omega \\
  & \leqslant &   w^{-n} 2^n n! \sqrt{\pi} C^2 { \frac{\pi(1+w)}{w} }
  \frac{\sigma}{\sqrt{2 \pi}} \int \exp \bigg(
- \frac{   \omega^2 (1-w^2)}{4 w}
\bigg)  e^{-\omega^2 \sigma^2/2} d \omega 
\\
 & = &   w^{-n} 2^n n! \sqrt{\pi} C^2 { \frac{\pi(1+w)}{w} }
  \frac{\sigma}{\sqrt{2 \pi}} \frac{\sqrt{ 2\pi}}
	  {  \sigma^2 + ( 1 - w^2)/2w}  \\
 & = &  (  w^{-n} 2^n n! )  \times C(w,\sigma)
  \EEAS
  Moreover,
  \BEAS
  \hat{\beta}_n(\omega)
  & = & \int_\rb e^{- i \omega x} H_n(Ax) e^{-x^2 ( A^2 w/(1+w) + B^2/2 \upsilon^2) } 
  dx \\
  & = &  \frac{n!}{2 i \pi}  \int_\rb e^{- i \omega x}   
   \oint   e^{-t^2}   e^{+ 2 A xt} \frac{dt}{t^{n+1}}
   e^{-x^2 ( A^2 w/(1+w) + B^2/2 \upsilon^2) } 
   dx \\
   & = &   \frac{n!}{2 i \pi}    
   \oint   e^{-t^2}  \bigg( \int_\rb e^{- i \omega x}     e^{+ 2 A xt} 
   e^{-x^2 ( A^2 w/(1+w) + B^2/2 \upsilon^2) } 
dx\bigg) \frac{dt}{t^{n+1}} \\
  & = &   \frac{n!}{2 i \pi}    
   \oint   e^{-t^2}  \bigg(   \sqrt{\frac{\pi}{A^2 w/(1+w) + B^2/2\upsilon^2}}  
   \exp \bigg(
   \frac{  [2At   - i \omega]^2}{4( A^2 w/(1+w) + B^2/2\upsilon^2)}
   \bigg)
   \bigg) \frac{dt}{t^{n+1}} \\
 & = &   \frac{n!}{2 i \pi}   
 \exp \bigg(
   \frac{  - \omega^2   }{4( A^2 w/(1+w) + B^2/2\upsilon^2)}
   \bigg)
   \sqrt{\frac{\pi}{A^2 w/(1+w) + B^2/2\upsilon^2}}   \\
   & & 
   \times
   \oint   e^{-t^2}  \bigg(   
   \exp \bigg(
   \frac{  4A^2t^2 - 4At i  \omega}{4( A^2 w/(1+w) + B^2/2\upsilon^2)}
   \bigg)
   \bigg) \frac{dt}{t^{n+1}} \\
 & = &   \frac{n!}{2 i \pi}     \exp \bigg(
   \frac{ - \omega^2   }{4( A^2 w/(1+w) + B^2/2\upsilon^2)}
   \bigg)
   \sqrt{\frac{\pi}{A^2 w/(1+w) + B^2/2\upsilon^2}}   \\
   & & 
   \times
   \oint   \exp\bigg(  t^2  
   \frac{A^2 - ( A^2 w/(1+w) + B^2/2\upsilon^2)}{( A^2 w/(1+w) + B^2/2\upsilon^2)}
   \bigg)  \bigg(   
   \exp \bigg(
   \frac{  -4Ait \omega}{4( A^2 w/(1+w) + B^2/2\upsilon^2)}
   \bigg)
   \bigg) \frac{dt}{t^{n+1}} \\
& = &   \frac{n!}{2 i \pi}     \exp \bigg(
   \frac{ - \omega^2   }{4( A^2 w/(1+w) + B^2/2\upsilon^2)}
   \bigg)
   \sqrt{\frac{\pi}{A^2 w/(1+w) + B^2/2\upsilon^2}}   \\
   & & 
   \times
   \oint   \exp\bigg(  t^2  
   \frac{A^2 / (1+w) - B^2/2\upsilon^2)}{( A^2 w/(1+w) + B^2/2\upsilon^2)}
   \bigg)  \bigg(   
   \exp \bigg(
   \frac{  -4At  i \omega}{4( A^2 w/(1+w) + B^2/2\upsilon^2)}
   \bigg)
   \bigg) \frac{dt}{t^{n+1}} \\
& = &   \frac{n!}{2 i \pi}   
 \exp \bigg(
   \frac{  -\omega^2   }{4( A^2 w/(1+w) + B^2/2\upsilon^2)}
   \bigg)
   \sqrt{\frac{\pi}{A^2 w/(1+w) + B^2/2\upsilon^2}}   \\
   & & 
   \times
   \oint   \exp\bigg(  t^2  
   \frac{1 - B^2(1+w)/ 2 \upsilon^2A^2)}{  w  + B^2(1+w)/2\upsilon^2A^2 }
   \bigg)  \bigg(   
   \exp \bigg(
   \frac{  -4At i \omega}{4( A^2 w/(1+w) + B^2/2\upsilon^2)}
   \bigg)
   \bigg) \frac{dt}{t^{n+1}} . \\
  \EEAS
  We can now perform the change of variable $ t = i \tilde{t} 
  \sqrt{\frac{w + B^2(1+w)/2\upsilon^2A^2)}{  1  - B^2(1+w)/2\upsilon^2A^2 }} = i \tilde{t} \sqrt{\tilde{w}}$, with 
  $\tilde{w}  >  w$ for $B>0$.

   This leads to
  \BEAS
  \hat{\beta}_n(\omega)
   & = &   \frac{n!}{2 i \pi}    
 \exp \bigg(
   \frac{  -\omega^2   }{4( A^2 w/(1+w) + B^2/2\upsilon^2)}
   \bigg)
   \sqrt{\frac{\pi}{A^2 w/(1+w) + B^2/2\upsilon^2}}   \\
   & & 
   \times \tilde{w}^{-n/2}
   \oint   \exp ( - \tilde{t}^2)   \bigg(   
   \exp \bigg(
   \frac{  4A  \tilde{t} \sqrt{\tilde{w}}    \omega}{4( A^2 w/(1+w) + B^2/2\upsilon^2)}
   \bigg)
   \bigg) \frac{d\tilde{t}}{\tilde{t}^{n+1}} \\
 & = &   
 \exp \bigg(
   \frac{  -\omega^2   }{4( A^2 w/(1+w) + B^2/2\upsilon^2)}
   \bigg)
   \sqrt{\frac{\pi}{A^2 w/(1+w) + B^2/2\upsilon^2}}   \\
   & & 
   \times \tilde{w}^{-n/2}
   H \bigg(
   \frac{  2 A   \sqrt{\tilde{w}}    \omega}{4( A^2 w/(1+w) + B^2/2\upsilon^2)}
   \bigg)\\
   |    \hat{\beta}_n(\omega)|
   & \leqslant & 
   \exp \bigg(
   \frac{  -\omega^2   }{4( A^2 w/(1+w) + B^2/2\upsilon^2)}
   \bigg)
   \sqrt{\frac{\pi}{A^2 w/(1+w) + B^2/2\upsilon^2}}   \\
   & & 
   \times \tilde{w}^{-n/2}
   (2^n n! \sqrt{\pi} )^{1/2} C \exp\bigg(
    \frac{1}{2} \bigg[\frac{  2 A  \tilde{t} \sqrt{\tilde{w}}    \omega}{4( A^2 w/(1+w) + B^2/2\upsilon^2)}\bigg]^2
   \bigg)\\
   & \leqslant & \mbox{ cst } \times   \tilde{w}^{-n/2} (2^n n! \sqrt{\pi} )^{1/2} \exp( -  \square(A,w,B) \omega^2 )
   \EEAS
 with 
 \BEAS
 \square(A,w,B) = \frac{1}{4( A^2 w/(1+w) + B^2/2\upsilon^2)} - \frac{1}{8} \frac{\tilde{w}}{( A^2 w/(1+w) + B^2/2\upsilon^2)^2} .
 \EEAS
 
 We have $ \square(A,w,0) = \frac{1-w^2}{8A^2 w}$. Thus, by continuity if $B$ if small enough, 
 $ \square(A,w,B) >0$.
Note that when we have $B=0$ and $A=1$, we recover previous results for $\alpha_n$.
 
 We thus have
 \BEAS
  \| \beta_n \|_\H^2 &  = &  \frac{1}{(2 \pi)^{3/2} \sigma } \int | \hat{\beta}_n(\omega)|^2 e^{\omega^2 \sigma^2/2} d \omega \\
  & \leqslant & (2^n \tilde{w}^{-n} n!) C(A,B,w,\sigma)
\EEAS
 as long as  $\sigma^2 < 4 \square(A,w,B) $.

Thus, for $\sigma $ small enough, we have the norm less than a constant times
$$
\sum_{n=0}^\infty (w/\tilde{w})^{n/2} < \infty
$$
since $\tilde{w} > w$. This shows that $\sum_{n = 0}^\infty \|  {\alpha}_n\|_{\F}
\|  {\beta}_n\|_{\H} < \infty$ and thus that $C_t$ is finite if the linear dependency parameter $B$ and  the kernel bandwidth $\sigma^2$ are small enough.

\section{Proof for Theorem~\ref{thm:rate}}

\begin{proof}
We recall here that $p_t(x_t) = p(x_t | y_{1:(t-1)})$ (we are in the marginalized setting). In the notation of the algorithm, we have $\tilde{p}_{t+1} = \frac{1}{\hat{W}_t} F_{t} \hat{p}_t$. Let $q_t = p_t Z_{t-1}$ be the un-normalized marginalized predictive distribution (and similarly, $\hat{q}_t = \hat{p}_t \hat{Z}_{t-1}$). We thus have $\tilde{p}_{t+1} = \frac{1}{\hat{Z}_t} F_{t} \hat{q}_t$.
We use the metric inequality (as well as the linearity of the MMD in each of its argument as it is related to the RKHS norm; so a scalar multiplication of a distribution can be taken out of the MMD):
$$
\MMD(\hat{p}_{t+1}, p_{t+1}) %
 \leq \underbrace{\MMD(\hat{p}_{t+1}, \frac{1}{\hat{Z}_t}F_t\hat{q}_t ) }_{\textrm{(I) FW error } := \,\, \hat{\epsilon}_{t+1}} + 	\underbrace{\MMD(\frac{1}{\hat{Z}_t}F_t\hat{q}_t, \frac{1}{Z_{t}}F_t\hat{q}_t)}_{\textrm{(II) Normalization error}} + 
 	\frac{1}{Z_{t}} \underbrace{\MMD(F_t\hat{q}_t, F_t q_t)}_{\textrm{(III) Initialization error}}.
$$
The term (I) is the algorithmic Frank-Wolfe error $\hat{\epsilon}_{t+1} := \MMD(\hat{p}_{t+1}, \tilde{p}_{t+1})$. The term (II) is the normalization error which can be bounded as follows:
$$
\MMD(\frac{1}{\hat{Z}_t}F_t\hat{q}_t, \frac{1}{Z_{t}}F_t\hat{q}_t) = 
	\underbrace{\|\frac{1}{\hat{Z}_t} \mu(F_t \hat{q}_t) \|_\H}_{(A) \,\, \leq R} \frac{1}{Z_t} 
	\underbrace{|Z_t - \hat{Z}_t|}_{(B) \,\, \leq \|o_t\|_\H \MMD(q_t, \hat{q}_t)} \leq  \frac{R \|o_t\|_\H}{Z_t} \MMD(\hat{q}_t,q_t) .
$$
For inequality (A), we note that $\frac{F_t\hat{p}_t}{\hat{Z}_t}= \tilde{p}_{t+1}$ which is a normalized distribution on $x_{t+1}$, this is why $\| \mu(\tilde{p}_{t+1}) \|_\H \leq R$ as $\|\Phi(x)\|_\H \leq R$ $\forall x \in \X$ by assumption. For inequality (B), we have that $|Z_t - \hat{Z}_t| = |\E_{q_t}[o_t]-\E_{\hat{q}_t}[o_t] | \leq ||o_t||_\H \MMD(q_t, \hat{q}_t)$ by~\eqref{eq:IntegralError} under the assumption that $o_t \in \H$.

Finally, the initialization error term (III) can be bounded by using Theorem~\ref{thm:Ct} (with $\nu = \hat{q}_t - q_t$):
$$
\MMD(F_t\hat{q}_t, F_t q_t) \leq C_t \, \MMD(\hat{q}_t, q_t) ,
$$
where $C_t := \| f_t \|_{F \otimes \H}$.

To control $\MMD(\hat{q}_t, q_t)$, we now only work on the un-normalized distributions:
$$
\MMD(\hat{q}_{t}, q_{t}) \leq 
	\underbrace{\MMD(\hat{q}_{t}, F_{t-1}\hat{q}_{t-1} ) }_{ :=  \, \epsilon_{t}} +
	\underbrace{\MMD(F_{t-1}\hat{q}_{t-1}, F_{t-1} q_{t-1})}_{\leq C_{t-1} \, \MMD(\hat{q}_{t-1}, q_{t-1})} \leq \sum_{u=1}^{t} \epsilon_{u} \left( \prod_{k=u}^{t-1} C_k \right),
$$
by repeating the arguments for smaller $t$'s and unrolling the recursion (and recall that $\prod_{u=t}^{t-1} (\cdot) = 1$ by convention).

Combining the three terms, we thus get:
\begin{equation} \label{eq:MMDnormalized}
\MMD(\hat{p}_{t+1}, p_{t+1}) \leq
	\hat{\epsilon}_{t+1} + (R \|o_t\|_\H + C_t) \sum_{u=1}^{t} \frac{\epsilon_{u}}{Z_t} \left( \prod_{k=u}^{t-1} C_k \right).
\end{equation}
Finally, we transform back the $\epsilon_t$ errors in the algorithmic quantities $\hat{\epsilon}_t$ that the FW algorithm measures: 
$$
\hat{\epsilon}_{t+1} = \MMD(\hat{p}_{t+1}, \frac{1}{\hat{W}_t}F_t\hat{p}_t )
= \MMD(\frac{1}{\hat{Z}_t} \hat{q}_{t+1}, \frac{1}{\hat{W}_t}F_t \frac{\hat{q}_t}{\hat{Z}_{t-1}} ) = \frac{1}{\hat{Z}_t} \MMD(\hat{q}_{t+1}, F_t\hat{q}_t ) = \frac{1}{\hat{Z}_t} \epsilon_{t+1} .
$$
And so we can rewrite:
$$
\frac{\epsilon_{u}}{Z_t} = \hat{\epsilon}_u \frac{\hat{Z}_{u-1}}{Z_t} = 
	\hat{\epsilon}_u \frac{1}{W_t} \left( \prod_{k=u}^{t-1} \frac{1}{W_k} \right) 
		\underbrace{\left( \prod_{k=1}^{u-1} \frac{\hat{W}_k}{W_k} \right)}_{:= \, \teleConstant_u} .
$$
We expect $\teleConstant_u = \prod_{k=1}^{u-1} \frac{\hat{W}_k}{W_k} \approx 1$ as the errors on the normalization constants could hopefully go in both direction and thus cancel each other, though in the worst case it could also grow with $u$. Substituting back in~\eqref{eq:MMDnormalized}, we get what we wanted to prove:
\begin{equation} \label{eq:MMDtight}
\MMD(\hat{p}_{t+1}, p_{t+1}) \leq \hat{\epsilon}_{t+1} + \frac{(R \|o_t\|_\H + C_t)}{W_t} 
	\sum_{u=1}^{t} \teleConstant_u \, \hat{\epsilon}_{u} \left( \prod_{k=u}^{t-1} \frac{C_k}{W_k} \right).
\end{equation}
\end{proof}

\begin{remark}[Bound for $\hat{Z}_t$] For parameter estimation in a HMM, one would also be interested in the quality of approximation for $Z_t$. We note that inequality (B) also gives us a bound on the relative error of our estimate $\hat{Z}_t$ for the normalization constant:
$$
\frac{|Z_t - \hat{Z}_t|}{Z_t} \leq \frac{\|o_t\|_\H}{W_t} \sum_{u=1}^{t} \teleConstant_u \, \hat{\epsilon}_{u} \left( \prod_{k=u}^{t-1} \frac{C_k}{W_k} \right).
$$

\end{remark}
\vspace{1em}

\begin{remark}[Bound for joint predictive distribution $p_t^J$]
To be more precise, we could have used the notation $\H_t$ and $\MMD_t$ to be explicit that the RKHS considered was for functions of $x_t$. For example Theorem~\ref{thm:Ct} really says that $\MMD_{t+1}(F_t\hat{q}_t, F_t q_t) \leq C_t \, \MMD_t(\hat{q}_t, q_t)$. But since $\H_t = \H$ (in the isomorphism sense) for all $t$, we did not have to worry about this. On the other hand, as $\H_t$ contains functions of $x_t$ only, we have that $\mu(p_t)$ is the same whether $p_t$ is the marginalized or the \emph{joint} predictive distributions $p_t^J$ (as for the joint, the expectation in the mean map definition will marginalize out the variables $x_{1:(t-1)}$ as they do not appear in $\H_t$). This means that if we consider the \emph{joint forward transformation} $F_t^J$ on a joint measures $\nu^J$ on $x_{1:t}$: $(F_t^J \nu)(x_{t+1}, x_{1:t}) := p(x_{t+1} | x_t) p(y_t | x_t) \nu^J(x_{1:t})$, i.e. $\tilde{p}_{t+1}^J = \frac{1}{W_t} F_t^J p^J_{t}$ (now in the joint sense), then we have $\mu(F_t p_t) = \mu(F_t^J p^J_{t})$, and thus Theorem~\ref{thm:Ct} also holds for the \emph{joint} predictive distribution $p^J_{T}$. 
\end{remark}
\vspace{1em}

\begin{remark}[Bound without $\teleConstant_u$]
The disadvantage of the bound~\eqref{eq:MMDtight} is the presence of the quantity $\teleConstant_u$ for which we did not provide an explicit upper bound (though we would expect it to be close to 1). To get an explicit upper bound for the error, we can repeat a similar argument but always working with the normalized quantities:
$$
\MMD(\hat{p}_{t+1}, p_{t+1}) \leq 
	\underbrace{\MMD(\hat{p}_{t+1}, \frac{1}{\hat{W}_t}F_t\hat{p}_t ) }_{\textrm{(I) FW error} \, := \, \hat{\epsilon}_{t+1}} +
 	\underbrace{\MMD(\frac{1}{\hat{W}_t}F_t\hat{p}_t, \frac{1}{W_{t}}F_t\hat{p}_t)}_{\textrm{(II) Normalization error}} + 
 	\frac{1}{W_{t}} \underbrace{\MMD(F_t\hat{p}_t, F_t p_t)}_{\textrm{(III) Initialization error}}.
$$
The term (II) is the normalization error which can bounded similarly as before as:
$$
\MMD(\frac{1}{\hat{W}_t}F_t\hat{p}_t, \frac{1}{W_{t}}F_t\hat{p}_t) = 
	\underbrace{\|\frac{1}{\hat{W}_t} \mu(F_t \hat{p}_t) \|_\H}_{(A) \,\, \leq R} \frac{1}{W_t} 
	\underbrace{|W_t - \hat{W}_t|}_{(B) \,\, \leq \|o_t\|_\H \MMD(p_t, \hat{p}_t)} \leq  \frac{R \|o_t\|_\H}{W_t} \MMD(\hat{p}_t,p_t) .
$$
Similarly as before, we also have for (III) that by Theorem~\ref{thm:Ct} (with $\nu = \hat{p}_t -  p_t $) that $\MMD(F_t\hat{p}_t, F_t p_t) \leq C_t \, MMD(\hat{p}_t, p_t)$. Combining the three terms, we get:
\begin{equation} \label{eq:MMDloose}
\MMD(\hat{p}_{t+1}, p_{t+1}) \leq \hat{\epsilon}_{t+1} + \frac{R \|o_t\|_\H + C_t}{W_t} \MMD(\hat{p}_{t}, p_{t}) \leq  \sum_{u=1}^{t+1} \hat{\epsilon}_u \left( \prod_{k=u}^t \tilde{\rho}_k \right) ,
\end{equation}
where $\tilde{\rho}_t := \frac{R \|o_t\|_\H + C_t}{W_t}$, by unrolling the recursion for smaller $t$'s.

The problem with bound~\eqref{eq:MMDloose} is that $\tilde{\rho} > 1$ usually due to the extra term $R \|o_t\|$ in its definition, which is why we preferred the tighter form~\eqref{eq:MMDtight}.
\end{remark}
\vspace{1em}
 
\begin{remark}[Removing the $o_t \in \H$ condition in Theorem~\ref{thm:rate}] \label{rem:general:ot}
We note that the condition $o_t \in \H$ is not really necessary in Theorem~\ref{thm:rate}. If $o_t \notin \H$, we can instead re-derive a similar argument as above but using $Z_t = \E_{F_t q_t} [1]$, where $1$ is the constant unit function (here on $x_{t+1}$). We then have $|Z_t - \hat{Z}_t| \leq \|1\|_{\H'} \MMD'(F_t q_t, F_t \hat{q}_t)$, where $\H'$ is an augmented RKHS to ensure that it contains the constant function $1$. We define $\H' = \H$ if $1 \in \H$ already. If $1 \notin \H$, we define $\H'$ to be the Hilbert sum of the RKHS $\H$ and the one generated by the constant kernel $1$ (and thus $\H'$ is a RKHS with kernel $\kernel' = 1 + \kernel$ where $\kernel$ is the original kernel for $\H$; see \citetsup[Thm. 5]{berlinet2004}). We can show that running the Frank-Wolfe algorithm using the kernel $\kernel'$ yields exactly the same objective values and updates, and thus we can use the space $\H'$ to analyze its behavior: Theorem~\ref{thm:Ct} and an analog of Theorem~\ref{thm:rate} then hold, but with all norms defined with respect to $\H'$ instead.
\end{remark}

\section{Faster rates for FW with approximate vertex search for the MMD objective} \label{app:FWrates}

In this section, we provide the proofs for the rate of convergence for FW on the MMD objective $J(g) := \frac{1}{2} \|g - \mu(p)\|^2_\H$ when an approximate vertex search is used (as mentioned in Appendix~\ref{app:rateRandomSearch}) by extending the proofs from~\citet{Chen2010}. We consider the step-size $\gamma_k = \frac{1}{k+1}$.\footnote{We note that the rate extends to the line-search step-size as well as the improvement at each iteration can only be better in this case considering the proof technique that we use.} We note that the standard step-size for Frank-Wolfe optimization to get a $O(1/k)$ rate is $\gamma_k = \frac{2}{k+2}$.\footnote{We also tried the $\gamma_k = \frac{2}{k+2}$ step-size in the mixture of Gaussians experiment of Section~\ref{sub:MoG}, but it gave similar results as the $\gamma_k =\frac{1}{k+1}$ step-size.} The best rate known for general objectives when using FW with $\gamma_k = \frac{1}{k+1}$ is actually $O(\log(k)/k)$~\citepsup[Bound 3.2]{freund2013FW}. We make use of the specific form of the MMD objective here to prove the $O(1/k)$ rate, as well as the faster $O(1/k^2)$ rate under additional assumptions. For the rest of this section, we use $\| \cdot \|$ to mean $\| \cdot \|_\H$.

\begin{theorem}[Rates for FW-Quad with approximate vertex search] \label{thm:FWapp}
Consider the FW-Quad Algorithm~\ref{alg:FWquad} where an approximate vertex search is used: $\innerProd{g_k - \mu_p}{\bar{g}_{k+1}} \leq \min_{g \in \M} \innerProd{g_k-\mu_p}{g}+\delta$, where $\bar{g}_{k+1} := \Phi(x_{k+1})$ and $\delta \geq 0$. Suppose that $\mu_p$ lies in the strict interior of $\M$ with a radius $r > 0$, i.e. a ball of radius $r$ centered at $\mu_p$ lies within $\M$. Recall that $\max_{g \in \M} \|g\| \leq R$. Then we have the faster rate $O(1/k^2)$ for the objective $J$:
\begin{equation} \label{eq:FastFWRate}
\|g_k - \mu_p \| \leq \frac{1}{k} \frac{2 R^2}{r} + \frac{\delta}{r}.
\end{equation}
If $r=0$ (note that $\mu_p \in \M$), then we can still get a standard $O(1/k)$ FW rate:
\begin{equation} \label{eq:slowFWRate}
\|g_k - \mu_p \|^2 \leq \frac{1}{k} 4 R^2 + \delta.
\end{equation}
\end{theorem}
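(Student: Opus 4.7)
The plan is to handle the two bounds with distinct arguments, since the strictly-interior assumption is what unlocks the faster $O(1/k)$ rate on $\|h_k\| := \|g_k - \mu_p\|$ and must be exploited differently from a classical smoothness-based FW descent argument.

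For the slow rate~\eqref{eq:slowFWRate}, I would follow the standard FW descent template applied to $J(g) = \tfrac12\|g - \mu_p\|^2$, which has $J'(g_k) = h_k$ and is $1$-smooth. The quadratic expansion gives $J(g_{k+1}) \le J(g_k) + \gamma_k \innerProd{h_k}{\bar g_{k+1} - g_k} + \tfrac{\gamma_k^2}{2}\|\bar g_{k+1} - g_k\|^2$. Since $\mu_p \in \M$, applying the approximate-vertex hypothesis at $g = \mu_p$ yields $\innerProd{h_k}{\bar g_{k+1} - g_k} \le \innerProd{h_k}{\mu_p - g_k} + \delta = -2J(g_k) + \delta$. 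Combined with $\|\bar g_{k+1} - g_k\|^2 \le 4R^2$, this gives the one-step recursion $J(g_{k+1}) \le (1 - 2\gamma_k) J(g_k) + \gamma_k\delta + 2\gamma_k^2 R^2$. With $\gamma_k = 1/(k+1)$, I would then close the proof by induction on the ansatz $J(g_k) \le 2R^2/k + \delta/2$; the base case $k=1$ is immediate from $\|h_1\| = \|\bar g_1 - \mu_p\| \le 2R$.

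For the fast rate~\eqref{eq:FastFWRate}, I would instead directly expand $\|h_{k+1}\|^2 = \|(1-\gamma_k) h_k + \gamma_k(\bar g_{k+1} - \mu_p)\|^2$ and use the interior assumption to control the cross term: since $\mu_p - r\, h_k/\|h_k\|$ lies in $\M$, applying the approximate-vertex condition at this point gives the sharper bound $\innerProd{h_k}{\bar g_{k+1} - \mu_p} \le -r\,\|h_k\| + \delta$. Together with $\|\bar g_{k+1} - \mu_p\| \le 2R$, this yields the recursion
$$
\|h_{k+1}\|^2 \le (1-\gamma_k)^2 \|h_k\|^2 - 2\gamma_k(1-\gamma_k)\, r\,\|h_k\| + 2\gamma_k(1-\gamma_k)\delta + 4\gamma_k^2 R^2.
$$
Setting $a = 2R^2/r$ and $b = \delta/r$, I would prove by induction that $\|h_k\| \le a/k + b$. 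The base case $k=1$ uses $\|h_1\| \le 2R \le 2R^2/r$, which holds because the radius-$r$ ball around $\mu_p$ sits inside the $R$-ball containing $\M$, forcing $r \le R$.

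The main obstacle is the induction step for the fast rate. Unlike the slow-rate recursion, the right-hand side here is a convex quadratic in $\|h_k\|$ with minimum at $\|h_k\| = r/k$, so it is \emph{not} monotone in $\|h_k\|$ and one cannot simply substitute the inductive upper bound. I would instead note that on the interval $[0, a/k + b]$ the maximum of this quadratic is attained at an endpoint, and verify the target bound $(a/(k+1) + b)^2$ separately at $\|h_k\| = 0$ and at $\|h_k\| = a/k + b$. The first reduces to an inequality of the form $(a + b(k+1))^2 \ge 2k\delta + 4R^2$, and the second, after telescoping $(a+bk)^2$ against $(a+b(k+1))^2$, reduces to $2(r+b)(a+bk) \ge 2k\delta + 4R^2 - b^2$. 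Both are elementary once $R \ge r$ is invoked, and are precisely where the constants $2R^2/r$ and $\delta/r$ must be pinned down.
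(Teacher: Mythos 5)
Your proof is correct and follows essentially the same route as the paper's: the same one-step recursion obtained from the interior-ball bound $\innerProd{h_k}{\bar g_{k+1}-\mu_p}\le -r\|h_k\|+\delta$, the same constants $2R^2/r$ and $\delta/r$ in the induction, and the same endpoint-of-a-convex-quadratic trick to close the non-monotone induction step (the paper merely works with $v_k := k(g_k-\mu_p)$ instead of $h_k$). The slow rate is likewise the same computation, packaged as a descent-lemma induction on $J$ rather than unrolling the telescoping sum $\|v_{k+1}\|^2\le\|v_k\|^2+4R^2+2k\delta$.
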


\begin{proof}
If a ball of a radius $r$ centered at $\mu_p$ lies within $\M$, then we have that:
$$
\min_{g \in \M} \innerProd{g_k - \mu_p}{g - \mu_p} \leq -r \|g_k - \mu_p \|.
$$
So the approximate vertex search yields $\bar{g}_{k+1}$ with the property:
\begin{equation} \label{eq:appSearch}
\innerProd{g_k - \mu_p}{\bar{g}_{k+1} - \mu_p} \leq -r \|g_k - \mu_p \| + \delta.
\end{equation}
By using the FW update $g_{k+1} = \gamma_k \bar{g}_{k+1} + (1-\gamma_k) g_k$ with the $\gamma_k = \frac{1}{k+1}$ step-size, we get:
\begin{align*}
\| g_{k+1} - \mu_p \|^2 &= \| \gamma_k \bar{g}_{k+1} + (1-\gamma_k) g_k - \mu_p \|^2
	= \| \frac{1}{k+1} \bar{g}_{k+1} + \frac{k}{k+1}  g_k - \frac{k+1}{k+1} \mu_p \|^2 \\
	&= \frac{1}{(k+1)^2} \| (\bar{g}_{k+1} - \mu_p) + k (g_k - \mu_p) \|^2 .
\end{align*}
Thus if we let $v_k := k (g_k - \mu_p)$, then we get:
\begin{align}
\| v_{k+1} \|^2 &= \| (\bar{g}_{k+1} - \mu_p) + v_k \|^2  \nonumber \\ 
 &= \| \bar{g}_{k+1} - \mu_p ||^2 + \|v_k\|^2 +2 \innerProd{v_k}{\bar{g}_{k+1}-\mu_p} \nonumber \\
 &\leq 4 R^2 + \|v_k\|^2 + 2(k \delta - r \|v_k\|) = \| v_k \|^2 + 2 \|v_k \| \left[ \frac{2R^2 + k \delta}{\|v_k\|} - r \right]. \label{eq:vk}
\end{align}
The last inequality used the crucial strict interior assumption that yielded~\eqref{eq:appSearch}. Now let $C_k := \frac{1}{r} (2R^2+k \delta)$. Note that $C_{k+1} \geq C_k$. We will now proceed to show by induction that $\|v_k \| \leq C_k$ for $k \geq 1$. Note that the bracket in~\eqref{eq:vk} is negative if and only if $\|v_k\| \geq C_k$ (i.e. $\|v_{k+1}\| \leq \|v_k\|$ in this case), giving the inspiration for the $C_k$ threshold.

First, we have that 
$$ \|v_1\| = \|g_1 - \mu_p \| \leq 2 R \leq 2 R \frac{R}{r} \leq C_1,$$
by using the fact that $\frac{R}{r} \geq 1$ since a ball of radius $r$ fitting in $\M$ implies that the maximum norm $R$ of elements in $\M$ is at least $r$.

Now suppose that $\|v_k \| \leq C_k$, i.e. that $\|v_k \| = \alpha C_k$ for some $\alpha \in [0,1]$. Then~\eqref{eq:vk} becomes:
\begin{align*}
\|v_{k+1}\|^2 \leq \alpha^2 C_k^2 + 2 \alpha C_k [\frac{r C_k}{\alpha C_k} - r ] = \alpha^2 C_k^2 + 2 C_k r [1 - \alpha]. 
\end{align*}
The RHS is a convex function of $\alpha$, and so it is maximized at the boundary of its domain. For $\alpha = 0$, we get $\|v_{k+1}\|^2  \leq 2 C_k r = 4 R^2 + 2 k \delta$. For $\alpha =1$, we get $\|v_{k+1}\|^2 \leq C_k^2$. And thus in general, supposing $\alpha \in [0,1]$, we get that $\|v_{k+1}\|^2 \leq \max\{2 C_k r, C_k^2 \} = C_k^2$ as:
$$
C_k^2 = 4 R^2 \left(\frac{R}{r}\right)^2 + 4 k \delta \left(\frac{R}{r} \right)^2 + k^2 \left(\frac{\delta}{r}\right)^2 \geq 2 C_k r = 4 R^2 + 2 k \delta
$$
using $\frac{R}{r}\geq 1$. This completes the induction step as this means that $\|v_{k+1}\| \leq C_k \leq C_{k+1}$.

Thus we conclude that $\|v_k\| \leq C_k$ for all $k\geq 1$, i.e.
$$
\|g_k - \mu_p\| \leq \frac{1}{k} \frac{2 R^2}{r} + \frac{\delta}{r}.
$$
This shows the faster $O(1/k)$ rate~\eqref{eq:FastFWRate}. If we do not have $\mu_p$ in the strict interior of $\M$, i.e. $r=0$, then we can unroll the inequality~\eqref{eq:vk} to get:
\begin{align*}
\| v_{k+1} \|^2 &\leq  4 R^2 + 2 k \delta + \|v_k\|^2 \\
 &\leq  \sum_{l=1}^k \left(4 R^2 + 2 l \delta \right) + \underbrace{\|v_1\|^2}_{\leq 4R^2} \\
 &\leq (k+1) 4R^2 + \frac{k (k+1) }{2} 2 \delta .
\end{align*}
This thus shows~\eqref{eq:slowFWRate}:
\begin{align*}
\| g_k - \mu_p \|^2 &\leq  \frac{1}{k} 4 R^2 +  \delta.
\end{align*}
This translates to a slower $O(1/\sqrt{k})$ rate on $\|g_k-\mu_p\|$ (with $\sqrt{\delta}$ precision), but note that at least it does not have the $\log(k)$ factor from the rate by~\citetsup{freund2013FW}.
\end{proof}

\paragraph{Consequence for SKH with random search points.}
Going back over the argument from Appendix~\ref{app:rateRandomSearch}, we said that using $M$ random search points in FW-Quad was similar to approximately solving (within $\delta_M R_M$) the linear subproblem for the Frank-Wolfe optimization of $J_M(g) := \frac{1}{2} \|g-\mu(p_M)\|_\H^2$ over the marginal polytope of $\X_M$. We note that the marginal polytope of $\X_M$ is at most $M$-dimensional, and thus, by using a similar argument as in Proposition~1 of~\citet{Bach2012}, we could show that it contains a ball of radius $r_M > 0$ centered at $\mu(p_M)$.\footnote{We note that as $\X_M$ is finite, we do not need to make the additional assumption that the kernel $\kernel$ is continuous unlike in Proposition~1 of~\citet{Bach2012}.} From~\eqref{eq:FastFWRate} in Theorem~\ref{thm:FWapp}, we can thus conclude that:
$$
\|g_k - \mu(p_M)\| \leq \frac{R_M}{r_M} \left( \frac{1}{k} 2 R_M + \delta_M \right) .
$$
This seems to give a faster rate, but the problem is that $r_M$ might shrink at an exponential rate with $M$ if $\H$ is infinite dimensional. Thus even though $\delta_M$ is $O(1/\sqrt{M})$, the dependence of $\delta_M \frac{R_M}{r_M}$ might be worse than the previously quoted rate of $O(1/M^{1/4})$; the latter is thus the general worst-case.

On the other hand, under the additional assumption that $\H$ is finite dimensional and that there is a ball of radius $r > 0$ centered around $\mu_p$ in $\M$ (by using Proposition~1 of~\citet{Bach2012} for example), then for a sufficiently large $M$, we will have $r_M$ close to $r$. Thus for large $M$, we will have $\|g_k - \mu(p_M)\| \lesssim \frac{R}{r} \left(\frac{2 R}{k} + \delta_M \right)$, and thus $\|g_N - \mu_p\| = O\left(\frac{R^2}{r}\left(\frac{1}{N} + \frac{1}{\sqrt{M}} \right) \right)$. This gives an asymptotically faster rate than the $O\left( R \left(\frac{1}{\sqrt{N}}+\frac{1}{\sqrt{R}{M^{1/4}}} \right) \right)$ rate given in Appendix~\ref{app:rateRandomSearch} arising from~\eqref{eq:slowFWRate}, but the constant is worse by a factor of $\frac{R}{r}$.

\bibliographysup{skh_refs}
\bibliographystylesup{abbrvnat}

\end{document}